\documentclass[smallextended]{mysvjour3} 


\RequirePackage{fix-cm}
\smartqed 

\usepackage{cite}
\usepackage{array}
\usepackage[letterpaper,centering,top=1.2in,bottom=1.2in]{geometry}
\usepackage[pdftex]{graphicx}
\usepackage[cmex10]{amsmath}
\usepackage{amsfonts,amssymb,bbm,verbatim}
\usepackage{mathrsfs,dsfont} 
\usepackage{hyperref}
\hypersetup{colorlinks = true, citecolor = blue, urlcolor = blue}
\usepackage[all]{hypcap} 
\usepackage[usenames,dvipsnames]{color}
\usepackage[utf8]{inputenc} 
\usepackage{eqparbox} 
\usepackage{booktabs,multirow} 
\usepackage{xcolor} 
\usepackage{algorithm}
\usepackage{algpseudocode} 

\makeatletter
\algnewcommand{\StatexInd}[1]{\Statex \hskip\ALG@thistlm #1}

\allowdisplaybreaks 
\linespread{1} 
\frenchspacing 

\newcommand{\R}{\mathbb{R}}
\newcommand{\N}{\mathbb{N}}
\newcommand{\Z}{\mathbb{Z}}
\renewcommand{\O}{\mathcal{O}}

\newcommand{\A}{\mathcal{A}}
\newcommand{\D}{\mathcal{D}}
\newcommand{\G}{\mathcal{G}}
\newcommand{\B}{\mathcal{B}}
\renewcommand{\L}{\mathcal{L}}

\newcommand{\I}{\mathbbm{1}}

\newcommand{\E}{\mathbb{E}}
\newcommand{\T}{\mathrm{T}}
\newcommand{\F}{\mathrm{F}}
\newcommand{\argmin}[1]{\underset{#1}{\operatorname{arg}\,\operatorname{min}}\;}
\DeclareMathOperator\diff{{d \!}}
\DeclareMathOperator\tr{{tr}}
\DeclareMathOperator\poly{{poly}}
\DeclareMathOperator\polylog{{polylog}}



\newcommand{\secref}[1]{Section~\ref{#1}}
\newcommand{\appref}[1]{Appendix~\ref{#1}}

\newcommand{\thmref}[1]{Theorem~\ref{#1}}
\newcommand{\propref}[1]{Proposition~\ref{#1}}
\newcommand{\lemref}[1]{Lemma~\ref{#1}}

\newcommand{\algoref}[1]{Algorithm~\ref{#1}}
\newcommand{\tabref}[1]{Table~\ref{#1}}

\begin{document}

\title{Gradient-Based Empirical Risk Minimization using Local Polynomial Regression\thanks{The author ordering is alphabetical.}
}

\author{Ali~Jadbabaie \and
				Anuran~Makur \and
				Devavrat~Shah
}
\authorrunning{Jadbabaie, Makur, and Shah} 

\institute{The authors are with the Laboratory for Information and Decision Systems and the Institute for Data, Systems, and Society, Massachusetts Institute of Technology, Cambridge, MA 02139, USA. E-mails: jadbabai@mit.edu; a\_makur@mit.edu; devavrat@mit.edu 
}


\maketitle

\begin{abstract}
In this paper, we consider the widely studied problem of empirical risk minimization (ERM) of strongly convex and smooth loss functions using iterative gradient-based methods. A major goal of this literature has been to compare different prototypical algorithms, such as batch gradient descent (GD) or stochastic gradient descent (SGD), by analyzing their rates of convergence to $\epsilon$-approximate solutions. For example, the (first order) oracle complexity of GD is $O(n \log(\epsilon^{-1}))$, where $n$ is the number of training samples. When $n$ is large, this can be prohibitively expensive in practice, and SGD is preferred due to its oracle complexity of $O(\epsilon^{-1})$. Such standard analyses only utilize the smoothness of the loss function in the parameter being optimized. In contrast, we demonstrate that when the loss function is smooth in the data, we can learn the oracle at every iteration and beat the oracle complexities of both GD and SGD in important regimes. Specifically, at every iteration, our proposed algorithm first performs local polynomial regression with a virtual batch of data points to learn the gradient of the loss function, and then estimates the true gradient of the ERM objective function. We establish that the oracle complexity of our algorithm scales like $\tilde{O}((p \epsilon^{-1})^{d/(2\eta)})$ (neglecting sub-dominant factors), where $d$ and $p$ are the data and parameter space dimensions, respectively, and the gradient of the loss function is assumed to belong to a $\eta$-H\"{o}lder class with respect to the data, not the parameter. Our proof extends the analysis of local polynomial regression in non-parametric statistics to provide supremum norm guarantees for interpolation in multivariate settings, and also exploits tools from the inexact GD literature. Unlike the complexities of GD and SGD, the complexity of our method depends on $d$ and $p$. However, we elucidate that when the data dimension $d$ is small and the loss function exhibits modest smoothness in the data $\eta = \Theta(d)$, our algorithm beats GD and SGD in oracle complexity for a very broad range of $p$ and $\epsilon$. 

\keywords{Gradient descent \and Empirical risk minimization \and Local polynomial regression \and Oracle complexity}
\end{abstract}

\section{Introduction}
\label{Introduction}

\emph{Empirical risk minimization} (ERM) is one of the mainstays of contemporary machine learning. Indeed, training tasks such as classification, regression, or representation learning using deep neural networks, can all be formulated as specific instances of ERM. In this paper, we consider gradient-based iterative optimization methods that are used to perform ERM, such as batch \emph{gradient descent} (GD) \cite{Nesterov2004}, \emph{stochastic gradient descent} (SGD) \cite{Nemirovskietal2009}, and their refinements, e.g., \cite{Nesterov1983,Dekeletal2012,JohnsonZhang2013}. A significant portion of this literature is concerned with analyzing the convergence rates of different iterative algorithms and determining which ones have smaller oracle complexities, i.e., smaller number of gradient queries required to obtain approximate minimizers, under various levels of convexity, smoothness, sampling, and other assumptions (see, e.g., \cite{Bubeck2015,Netrapalli2019} and the references therein). We focus on the widely studied setting of strongly convex loss functions. In this case, as depicted in \tabref{Table: Convergence rates}, the (first order) oracle complexity of GD is known to be $O(n \log(\epsilon^{-1}))$ \cite[Theorem 2.1.15]{Nesterov2004}, where $n$ is the number of training samples in the ERM objective function, and $\epsilon > 0$ is the desired approximation accuracy of the solution. In contrast, it is well-known that SGD has an oracle complexity of $O(\epsilon^{-1})$ \cite{Nemirovskietal2009}. Since $n$ is often extremely large in modern instances of ERM, SGD is theoretically preferable to GD. This is corroborated by the ubiquitous use of SGD and its variants for ERM in practice. Our goal is to improve upon the oracle complexities of GD, SGD, and their variants by exploiting additional information about the loss functions that has heretofore been neglected in the literature.

Recall that the general unconstrained ERM problem is usually abstracted as, cf. \cite[Section 1]{Bubeck2015}, \cite{HastieTibshiraniFriedman2009}:
\begin{equation}
\label{Eq: Standard formulation}
\min_{\theta \in \R^p}{\frac{1}{n} \sum_{i = 1}^{n}{f_i(\theta)}} \, ,
\end{equation}
where each $f_i : \R^p \rightarrow \R$ represents the loss function corresponding to a different sample of training data. (Note that this formulation, modulo the extra scaling $n^{-1}$, is also utilized in \emph{distributed optimization}, where each $f_i$ represents the local objective function corresponding to an individual agent in a network of $n$ agents \cite{Nedic2015}.) As suggested by the form of \eqref{Eq: Standard formulation}, typical analysis of iterative gradient-based algorithms, such as GD or SGD, makes no assumptions about $f_i$ as a function of $i$. However, the loss function often varies quite smoothly with the data in ERM. For example, given training data $\big\{(x^{(i)},y_i) \in \R^p \times \R : i \in [n]\big\}$ (with $[n] \triangleq \{1,\dots,n\}$), the classical least squares formulation of \emph{linear regression} with regularization is (see, e.g., \cite[Chapter 3]{HastieTibshiraniFriedman2009}): 
\begin{equation}
\label{Eq: Lin Reg}
\min_{\theta \in \R^p}{\frac{1}{n} \sum_{i = 1}^{n}{\left(y_i - \theta^{\T} x^{(i)}\right)^{\!2} + \mathcal{R}(\theta)}} \, ,
\end{equation} 
where the objective is to find the optimal coefficients $\theta \in \R^p$, and $\mathcal{R}(\theta)$ is some regularization term, e.g., ridge or lasso. Letting $f(x,y;\theta) = \big(y - \theta^{\T} x\big)^2 + \mathcal{R}(\theta)$ for $x,\theta \in \R^p$ and $y \in \R$, we see that each $f_i(\theta) = f(x^{(i)},y_i;\theta)$ in formulation \eqref{Eq: Standard formulation} for this problem. Clearly, $f$ is very smooth as a function of the data $(x,y)$, regardless of the regularization $\mathcal{R}(\theta)$. This observation that loss functions are often smooth in the data holds in more complex learning scenarios too. For instance, when training a deep neural network with a smooth loss, e.g., cross-entropy loss, and smooth activation functions, e.g., logistic (sigmoid) functions, the induced loss function for the ERM objective function is infinitely differentiable in both the data and the network's weight parameters \cite[Chapter 5]{Bishop2006}. Since such smoothness in the data has not been exploited by modern analysis of iterative gradient-based optimization algorithms for \eqref{Eq: Standard formulation} (see \secref{Related Literature}), the main thrust of this paper is to speed up algorithms for ERM using this additional smoothness in the data. 

To this end, in the remainder of this work, we consider the following problem formulation of ERM. For $d,n \in \N$, assume that we are given $n$ samples of $d$-dimensional training data $\D = \{x^{(i)} \in [h^{\prime},1-h^{\prime}]^d : i \in [n]\}$, where $h^{\prime} > 0$ is an arbitrarily small parameter (to be determined later; see \eqref{Eq: Data bound}), and each training sample belongs to the hypercube $[h^{\prime},1-h^{\prime}]^d$ without loss of generality. (This compactness assumption affords us some analytical tractability in the sequel.) Note that when $d \geq 2$, the first $d-1$ elements of a sample can be perceived as a $(d-1)$-dimensional feature vector, and the last element of the sample can be construed as the corresponding continuous-valued label (as in the linear regression example above). For a given $p \in \N$, fix the (parametrized) loss function $f : [0,1]^{d} \times \R^p \rightarrow \R$, $f(x;\theta)$, where $\theta = (\theta_1,\dots,\theta_p) \in \R^p$ is a $p$-dimensional parameter vector. Now define the ERM objective function $F : \R^p \rightarrow \R$ as follows:
\begin{equation}
\label{Eq: ERM Objective Function}
\forall \theta \in \R^p, \enspace F(\theta) \triangleq \frac{1}{n} \sum_{i = 1}^{n}{f(x^{(i)};\theta)} 
\end{equation}
which is the empirical risk, i.e., the empirical expectation of the loss function with respect to the training data $\D$. With this objective function, we consider the unconstrained minimization problem:
\begin{equation}
\label{Eq: ERM}
F_* \triangleq \inf_{\theta \in \R^p}{F(\theta)} \, ,
\end{equation}
where $F_* \in \R$ denotes the minimum value. In contrast to the canonical formulation \eqref{Eq: Standard formulation}, our formulation in \eqref{Eq: ERM Objective Function} and \eqref{Eq: ERM} explicitly states the dependence of the loss function on the data.

\begin{table}[t]
\caption{\textbf{Oracle complexities of GD, SGD, and our algorithm for strongly convex loss functions:} Here, $\epsilon > 0$ is the approximation accuracy, $p \in \N$ is the dimension of the parameter space, $d \in \N$ is the dimension of the data space, $n \in \N$ is the number of training data samples, $\eta > 0$ is the H\"{o}lder exponent determining the smoothness of the loss function in the data space, and $C(d,\eta) = C_{\mu,L_1,L_2,b,c}(d,l)$ is given in \eqref{Eq: Scaling in Main Theorem}.\\}
\label{Table: Convergence rates}
\centering
\begin{tabular}{lc}
\toprule
\textbf{Algorithm} & \textbf{Oracle complexity} \\
\toprule
GD & $O(n\log(\epsilon^{-1}))$ \cite{Nesterov2004} \\
\midrule
SGD & $O(\epsilon^{-1})$ \cite{Nemirovskietal2009} \\
\midrule
LPI-GD & \textcolor{blue}{$O(C(d,\eta) (p \epsilon^{-1})^{d/(2\eta)} \log(p\epsilon^{-1}))$} (\thmref{Thm: Oracle Complexity}) \\
\bottomrule
\end{tabular}
\end{table}
 
Under the smoothness and strong convexity assumptions delineated in \secref{Smoothness Assumptions and Approximate Solutions}, we make the following main contributions:
\begin{enumerate}
\item We propose a \emph{new inexact gradient descent} algorithm to compute approximate solutions of \eqref{Eq: ERM} in \secref{Description of Algorithm} (see \algoref{Algorithm: LPI-GD}). The main innovation of this algorithm is to use the smoothness of the loss function $f$ in the data to learn the gradient oracle of $f$ at every iteration by performing local polynomial regression based on oracle queries over a set of virtual data points. 
\item We derive the iteration and oracle complexities of this algorithm in \propref{Prop: Number of Iterations} and \thmref{Thm: Oracle Complexity}, respectively, in \secref{Analysis of Algorithm}. In particular, we show that the oracle complexity scales like $\tilde{O}((p \epsilon^{-1})^{d/(2\eta)})$ (neglecting sub-dominant factors) as indicated in the last row of \tabref{Table: Convergence rates}, where $\eta > 0$ denotes the H\"{o}lder exponent determining the smoothness of the gradient of the loss function with respect to the data (see \secref{Smoothness Assumptions and Approximate Solutions}). Although we focus on strongly convex loss functions to leverage simple convergence results on inexact gradient descent from the literature, analogous analysis holds for larger classes of loss functions, e.g., when $f(x;\cdot):\R^p \rightarrow \R$ is non-convex. 
\item Furthermore, we show in \propref{Prop: Comparison to GD and SGD} in \secref{Analysis of Algorithm} that if the data dimension is sufficiently small, e.g., $d = O(\log\log(n))$ (as in healthcare data analytics or control applications; see \secref{Related Literature}), and the loss function $f$ exhibits modest smoothness in the data, i.e., $\eta = \Theta(d)$, then the oracle complexity for our method beats the oracle complexities (or more precisely, the oracle complexity bounds) of both GD and SGD for any $p$ that grows polynomially in $n$ and any $\epsilon$ that decays polynomially in $n^{-1}$. This demonstrates that smoothness of loss functions in the data can be successfully exploited in gradient-based algorithms for ERM to significantly lower their oracle complexity.
\item Finally, in order to analyze the convergence rate of our algorithm, we also generalize the technique of local polynomial regression, or more precisely, local polynomial interpolation (since there is no noise in our problem), in non-parametric statistics to the multivariate setting in \thmref{Thm: Chebyshev Norm Interpolation} in \secref{Local Polynomial Regression}. Although it is implied in many standard expositions that univariate local polynomial regression can be generalized to multiple variables (see, e.g., \cite{FanGijbels1996}, \cite{Tsybakov2009}, or \cite{Wasserman2019}), the multivariate case requires significantly more care than the univariate case. Consequently, a detailed and rigorous analysis of local polynomial interpolation is difficult to find in the literature. Therefore, we believe that \thmref{Thm: Chebyshev Norm Interpolation} may be of independent interest in statistics.
\end{enumerate}

Let us briefly outline the rest of this paper. We discuss some related literature in \secref{Related Literature} that further distinguishes our contributions from known results, and conveys that our results apply to practical machine learning settings. Then, we introduce required notation and definitions, formally state key assumptions, and briefly elaborate on the standard framework of oracle complexity in optimization theory in Sections \ref{Notation}, \ref{Smoothness Assumptions and Approximate Solutions}, and \ref{Oracle Complexity}, respectively. In \secref{Main Results and Discussion}, we present our main results as enumerated above. We prove our results on local polynomial regression in \secref{Proofs for Local Polynomial Interpolation}, and carry out the convergence analysis of our algorithm in \secref{Proofs of Convergence Analysis}. Finally, we conclude our discussion and propose future research directions in \secref{Conclusion}.

\subsection{Related Literature}
\label{Related Literature}

There is an enormous literature concerning the analysis of convergence rates of exact and inexact gradient-based algorithms that perform the optimization in \eqref{Eq: Standard formulation}. We do not attempt to survey this entire field here, and instead only mention some noteworthy works. For example, in our strongly convex setting of interest, the convergence analysis of GD can be found in the standard text \cite{Nesterov2004}, and corresponding convergence analysis of SGD can be found in \cite{Nemirovskietal2009} (also see \cite{Netrapalli2019} and the references therein). Moreover, various refinements and improvements of these basic iterative algorithms have been analyzed in the literature. Notable examples include the famous \emph{accelerated} (or \emph{momentum} based) GD \cite{Nesterov1983}, mini-batch SGD methods \cite{Dekeletal2012}, and \emph{variance reduced} SGD methods \cite{JohnsonZhang2013} (also see the references therein). We also refer readers to \cite{Bubeck2015} for a unified treatment of many of the aforementioned approaches. More generally, inexact gradient descent methods for ERM have been analyzed extensively as well\textemdash see, e.g., \cite{FriedlanderSchmidt2012,SoZhou2017} and the references therein for work on the strongly convex case. (Note that SGD can be perceived as an example of an inexact GD method, which uses unbiased estimates of the true gradient of the ERM objective function at every iteration.) This paper and most of the aforementioned references focus on obtaining upper bounds on oracle complexity. Conversely, several authors have also established fundamental lower bounds, cf. \cite{Agarwaletal2009,Carmonetal2019} and the references therein. As mentioned earlier, to our knowledge, all of this literature concerns the formulation in \eqref{Eq: Standard formulation}. Hence, these methods do not exploit smoothness of loss functions in data by learning gradients in every iteration as in our proposed method.

On the other hand, motivated by the utility of gradient information for various tasks like \emph{variable selection} and determining \emph{coordinate covariations}, cf. \cite{MukherjeeZhou2006,MukherjeeWu2006}, the problem of learning gradients in the context of supervised learning problems, such as classification and regression, has received a lot of attention from the machine learning community. Specifically, most of this literature aims to \emph{simultaneously} learn a classification or regression function along with its gradient from training data. (Again, we only mention some noteworthy works here.) For example, the text \cite{FanGijbels1996} presents the classical theory of gradient estimation using local polynomial fitting, and \cite{DeBrabanteretal2013} presents a gradient estimation approach that uses local polynomial regression, but does not estimate the regression function. In a related vein, \cite{DelecroixRosa1996} studies kernel methods for gradient estimation. Various other methods for learning gradients have also been analyzed, such as \emph{reproducing kernel Hilbert space} methods \cite{MukherjeeZhou2006,MukherjeeWu2006}, regression \emph{splines} \cite{ZhouWolfe2000}, and \emph{nearest neighbor} methods \cite{AussetClemenconPortier2020}. However, the vast majority of these techniques have not been considered in the context of optimization for machine learning. 

Two recent exceptions where gradient estimation has been applied to optimization are \cite{Wangetal2018} and \cite{AussetClemenconPortier2020}. In particular, the authors of \cite{Wangetal2018} illustrate that learning gradients leads to desirable convergence rates for \emph{derivative-free} optimization algorithms, which only use zeroth order oracles (i.e., these oracles only output function values, not gradient values). Similarly, \cite{AussetClemenconPortier2020} portrays the effectiveness of its nearest neighbor based gradient estimator for derivative-free optimization. It is worth noting that both these works analyze optimization of a single objective function (rather than a sum as in \eqref{Eq: Standard formulation}). Hence, the results derived in these papers are not particularly insightful in our ERM setting in \eqref{Eq: ERM Objective Function} and \eqref{Eq: ERM}, where we seek to learn gradients based on a first order oracle by exploiting the smoothness of $f(x;\theta)$ in the data $x$. 

Lastly, as we remarked earlier, we establish in \propref{Prop: Comparison to GD and SGD} that the oracle complexity of our algorithm beats GD and SGD for a broad range of values of $p$ and $\epsilon$, if the data dimension $d$ is sufficiently small, e.g., $d = O(\log\log(n))$. Since we utilize multivariate local polynomial regression in our algorithm, a ``small $d$ assumption'' is theoretically unavoidable due to the \emph{curse of dimensionality}, cf. \cite[Section 7.1]{FanGijbels1996}. We close this discussion of related literature by further explaining why this assumption is also very reasonable in applications. Indeed, many application domains have problems where the number of data attributes is small. For example, in the healthcare data analytics space, electronic health records often have very few features per patient, e.g., temperature, blood pressure, initial diagnosis, and medications, but machine learning techniques are still widely used to build models for diagnosing diseases or predicting risk (cf. \cite{Choietal2018} and its references). As another example, in the problem of system identification for robotics and control applications, neural networks have been used to build non-linear state-space models (cf. \cite{NechybaXu1994,Nelles2001} and their references). The number of inputs of these networks is often very small when the model order is known or assumed to be low. On the other hand, in many applications with high-dimensional ``raw data,'' data scientists believe that the useful information in the data lives in a low-dimensional manifold. This belief is the basis of a myriad of \emph{dimensionality reduction} techniques, such as principal component analysis \cite{Pearson1901,Hotelling1933}, canonical correlation analysis \cite{Hotelling1936}, Laplacian eigenmaps \cite{BelkinNiyogi2001} or diffusion maps \cite{CoifmanLafon2006}, and modal decompositions (or correspondence analysis), cf. \cite{Huangetal2019}, \cite[Chapter 4]{Makur2019}, and the references therein. As a result, many high-dimensional data analysis pipelines proceed by first pre-processing the data using pertinent dimensionality reduction techniques, and then using the resulting low-dimensional features for other learning tasks. In such high-dimensional settings, our contributions in this paper pertain to ERM using the extracted low-dimensional features. 

\subsection{Notation}
\label{Notation}

We briefly introduce some notation that is used throughout this paper. Let $\N \triangleq \{1,2,3,\dots\}$ and $\Z_+ \triangleq \{0,1,2,3,\dots\}$ denote the sets of natural numbers and non-negative integers, respectively. As mentioned earlier, for any $n \in \N$, let $[n] \triangleq \{1,\dots,n\}$ be the set of positive integers less than $n+1$. In the sequel, we analyze a $d$-variate problem setting with $d \in \N$. So, we briefly introduce the standard multi-index notation. For any $d$-tuple of non-negative integers $s = (s_1,\dots,s_d) \in \Z_+^d$, we let $|s| = s_1 + \cdots + s_d$, $s! = s_1! \cdots s_d!$, and $x^s = x_1^{s_1} \cdots x_d^{s_d}$ for $x \in \R^d$. Moreover, given a continuously differentiable function $f : \R^d \rightarrow \R$, $f(x_1,\dots,x_d)$, we write $\nabla^s f = \partial^{|s|} f/(\partial x_1^{s_1} \cdots \partial x_d^{s_d})$ to denote its $s$th partial derivative, and $\nabla_x f = \big[\frac{\partial f}{\partial x_1} \, \cdots \, \frac{\partial f}{\partial x_d}\big]^{\T}$ to denote its gradient. We will sometimes find it convenient to index certain vectors or matrices using the set $\{s \in \Z_+^d : |s| \leq l\}$. In these cases, to help with calculations, we will assume that these vectors or matrices are written out with the indices in lexicographical order (i.e., for any $r,s \in \Z_+^d$ with $r \neq s$, we say that $r < s$ if $r_j < s_j$ for the first position $j \in [d]$ where $r_j \neq s_j$).

Furthermore, for any vector $x \in \R^n$ with $n \in \N$, we let $x_i$ be the $i$th entry of $x$, where $i \in [n]$ (or more generally, $i$ belongs to an appropriate index set), and we let $\|x\|_{p}$ be the usual $\ell^p$-norm of $x$ with $p \in [1,+\infty]$. Similarly, for any matrix $A \in \R^{m \times n}$ with $m,n \in \N$, we let $A_{i,j}$ be the $(i,j)$th element of $A$, where $i \in [m]$ and $j \in [n]$ (or more generally, $i$ and $j$ belong to appropriate index sets), and we let $\|A\|_{\F}$ be the Frobenius norm of $A$. In addition, $A^{\T}$ represents the transpose of $A$, and when $m = n$, $A^{-1}$ represents the inverse of $A$ if it is invertible, $\tr(A)$ represents the trace of $A$, $\det(A)$ represents the determinant of $A$, and $\lambda_{\mathsf{min}}(A)$ represents the minimum eigenvalue of $A$. We also let $I_n$ be the $n \times n$ identity matrix. For any pair of symmetric matrices $A,B \in \R^{n \times n}$, we use $\succeq$ to denote the L\"{o}wner partial order and write $A \succeq B$ to mean that $A - B$ is positive semidefinite. 
  
Finally, we introduce some useful miscellaneous notation. We let $\lceil \cdot \rceil$ denote the ceiling function, $\I\{\cdot\}$ denote the indicator function that equals $1$ if its input proposition is true and equals $0$ otherwise, $\exp(\cdot)$ and $\log(\cdot)$ denote the natural exponential and logarithm functions with base $e$, respectively, and $\E[\cdot]$ denote the expectation operator, where the underlying probability measure can be easily inferred from context. Throughout this paper, we will also utilize standard Bachmann-Landau asymptotic notation, e.g., $O(\cdot)$, $\Theta(\cdot)$, $\omega(\cdot)$, etc., where the underlying variable that tends to infinity will be clear from context. Moreover, we will let $\poly(n)$ and $\polylog(n)$ denote polynomial and poly-logarithmic scaling in the variable $n$, respectively.

\subsection{Smoothness Assumptions and Approximate Solutions}
\label{Smoothness Assumptions and Approximate Solutions}

In order to perform tractable convergence analysis of gradient-based iterative algorithms that solve \eqref{Eq: ERM}, we must formally define the problem class by specifying any smoothness conditions on the loss function, the appropriate notion of ``approximate solutions,'' and the oracle model of computational complexity used \cite[Section 1.1]{Nesterov2004}. This section and \secref{Oracle Complexity} precisely define our problem class.

Recall our formulation where we are given $n$ samples of $d$-dimensional training data $\D$ and a loss function $f : [0,1]^d \times \R^p \rightarrow \R$, $f(x,\theta)$, and we consider the ERM problem in \eqref{Eq: ERM Objective Function} and \eqref{Eq: ERM}. Let $\epsilon > 0$ be any small constant representing the desired level of approximation accuracy. We begin by imposing the following assumptions on our loss function:
\begin{enumerate}
\item \textbf{Smoothness in parameter:} There exists $L_1 > 0$ such that for all fixed $x \in [0,1]^{d}$, the gradient of $f$ with respect to $\theta$, denoted $\nabla_{\theta} f(x;\cdot) : \R^p \rightarrow \R^p$, exists and is $L_1$-\emph{Lipschitz continuous} as a function of the parameter vector:
\begin{equation}
\forall \theta_1,\theta_2 \in \R^p, \enspace \left\|\nabla_{\theta} f(x;\theta_1) - \nabla_{\theta} f(x;\theta_2)\right\|_2 \leq L_1 \left\|\theta_1 - \theta_2\right\|_2 \, . 
\end{equation}
\item \textbf{Smoothness in data:} There exist $\eta > 0$ and $L_2 > 0$ such that for all fixed $\vartheta \in \R^{p}$ and for all $i \in [p]$, the $i$th partial derivative of $f$ (with respect to $\theta_i$) at $\vartheta$, denoted $g_{i}(\cdot;\vartheta) \triangleq \frac{\partial f}{\partial \theta_i}(\cdot;\vartheta) : [0,1]^d \rightarrow \R$, belongs to the $(\eta,L_2)$-\emph{H\"{o}lder class} as a function of the data vector $x$ (cf. \cite[Definition 1.2]{Tsybakov2009}). By definition, this means that each $g_{i}(\cdot;\vartheta) : [0,1]^d \rightarrow \R$ is $l = \lceil \eta \rceil - 1$ times differentiable, and for every $s = (s_1,\dots,s_d) \in \Z_+^d$ such that $|s| = l$, we have
\begin{equation}
\label{Eq: Holder condition}
\forall y_1,y_2 \in [0,1]^d, \enspace \left|\nabla^s g_i(y_1;\vartheta) - \nabla^s g_i(y_2;\vartheta)\right| \leq L_2 \left\|y_1 - y_2\right\|_1^{\eta - l} \, . 
\end{equation}
\item \textbf{Strong convexity:} There exists $\mu > 0$ such that for all fixed $x \in [0,1]^{d}$, the map $f(x;\cdot):\R^p \rightarrow \R$ is $\mu$-\emph{strongly convex} as a function of the parameter vector:
\begin{equation}
\forall \theta_1,\theta_2 \in \R^p, \enspace f(x;\theta_1) \geq f(x;\theta_2) + \nabla_{\theta} f (x;\theta_2)^{\T} (\theta_1 - \theta_2) + \frac{\mu}{2} \left\|\theta_1 - \theta_2\right\|_2^2 \, . 
\end{equation}
Moreover, as additional notation, we let $\sigma \triangleq L_1/\mu$ represent the \emph{condition number} of the problem.
\end{enumerate}

Under these assumptions, we define an \textbf{$\epsilon$-approximate solution} to the optimization problem \eqref{Eq: ERM} as any parameter vector $\theta^* \in \R^p$ that satisfies
\begin{equation}
\label{Eq: Minimum of convex function}
F(\theta^*) - F_* \leq \epsilon \, .
\end{equation}
We make two pertinent remarks at this point. Firstly, it can be argued based on the above strong convexity assumption (as shown in \secref{Proof of Prop Number of Iterations}) that $F_*$ is finite and achievable by a unique global minimizer of $F$; in particular, the infimum in \eqref{Eq: ERM} may be replaced by a minimum. Secondly, both the first and third assumptions are standard in the optimization literature, cf. \cite{Nesterov2004,Bubeck2015}. As noted earlier, our main idea is the observation that the second assumption, that gradients of loss functions are also smooth in the data, holds in many machine learning applications. For example, in the regularized linear regression example in \eqref{Eq: Lin Reg}, $\nabla_{\theta} f$ is clearly smooth with respect to the data (assuming $\mathcal{R}$ is differentiable). 

\subsection{Oracle Complexity}
\label{Oracle Complexity}

Finally, we briefly introduce the formalism of ``first order oracle complexity'' upon which our analysis will be based. The notion of oracle complexity was first introduced in \cite{NemirovskiiYudin1983} to circumvent the difficulties faced by vanilla computational complexity theory style analysis of convex programming (cf. \cite[Section 1]{Agarwaletal2009}). As elucidated in \cite[Section 1.1]{Nesterov2004}, although ``arithmetical complexity,'' which accounts for the running time of all basic arithmetic operations executed by an algorithm, is a more realistic measure of computational complexity, the idea of oracle (or ``analytical'') complexity, i.e., the total number of oracle calls made by an algorithm, is a more convenient abstraction for the analysis of iterative optimization methods. Indeed, the running time of oracle queries is often the main bottleneck in arithmetical complexity. 

Observe that any iterative algorithm that approximately solves optimization problems belonging to the class of ERM problems defined in \eqref{Eq: ERM Objective Function} and \eqref{Eq: ERM}, under the assumptions in \secref{Smoothness Assumptions and Approximate Solutions}, requires two inputs: a) the training data $\D$, and b) a ``description'' of the specific loss function $f$ under consideration. To fulfill the latter requirement, we will assume that such algorithms have access to a local \emph{first order oracle} $\O : [0,1]^d \times \R^p \rightarrow \R^p$ defined as:
\begin{equation}
\label{Eq: Oracle}
\forall x \in [0,1]^d, \, \forall \vartheta \in \R^p, \enspace \O(x,\vartheta) \triangleq \nabla_{\theta} f(x;\vartheta) 
\end{equation} 
which returns the value of the gradient $\nabla_{\theta} f$ with respect to $\theta$ for any input query pair $(x,\vartheta)$. (Note that sometimes the oracle is defined to produce values of both the loss function $f$ and its gradient $\nabla_{\theta} f$ at $(x,\vartheta)$, cf. \cite[Section 1.1.2]{Nesterov2004}.) This oracle $\O$ can be construed as a black box ``description'' of the gradient of the loss function $f$ with respect to $\theta$. 

Given any particular iterative algorithm, $\A = \A(\D,\O)$, that solves our class of ERM problems, its \emph{first order oracle complexity} (or just \emph{oracle complexity}) $\Gamma(\A)$ is formally defined as the minimum number of oracle calls made by the algorithm $\A$ to obtain an $\epsilon$-approximate solution (in the sense of \eqref{Eq: Minimum of convex function}) \cite[Section 1.1]{Nesterov2004}. Thus, $\Gamma(\A)$ is typically a function of the notable problem variables $n,d,p,\eta,\epsilon$, and the fixed constants $L_1,L_2,\mu$. In this paper, following the paradigm of the optimization literature, we will focus on determining the first order oracle complexity of our proposed optimization method.

\section{Main Results and Discussion}
\label{Main Results and Discussion}

In this section, we first present our generalization of local polynomial regression to the multivariate case in \secref{Local Polynomial Regression}, and then describe our proposed algorithm and explain its convergence analysis in \secref{Inexact Gradient Descent Algorithm}.

\subsection{Local Polynomial Regression}
\label{Local Polynomial Regression}

We commence by introducing the essentials of \emph{local polynomial regression} specialized to our noiseless setting. Since most references on the subject only present the univariate (i.e., $d = 1$) framework, e.g., \cite{FanGijbels1996}, \cite[Section 1.6]{Tsybakov2009}, we rigorously carry out the much more involved calculations for the multivariate case by carefully generalizing the univariate development in \cite[Section 1.6]{Tsybakov2009}. We also refer readers to \cite[Section 2]{ClevelandLoader1996} for classical references and a brief history of smoothing using local regression.

Consider a function $g : [0,1]^d \rightarrow \R$ that belongs to the $(\eta,L_2)$-H\"{o}lder class, namely, $g$ is $l = \lceil \eta \rceil - 1$ times differentiable and satisfies \eqref{Eq: Holder condition} for every $s \in \Z_+^d$ with $|s| = l$. (\propref{Prop: Taylor Approximation of Holder Class Functions} in \appref{Taylor Approximation of Holder Class Functions} conveys some intuition behind what this condition means.) In order to present the local polynomial regression based approximation of this function, we first state some required definitions and notation. Let $K:[-1,1] \rightarrow [b,c]$ be a bounded \emph{kernel} function, where $c \geq b > 0$ are some fixed constants, and we will use the extension of this kernel (to the domain $\R$), $K:\R \rightarrow [0,c]$, in the sequel with the assumption that the support of $K$ is $[-1,1]$. For any fixed $m \in \N$, define the uniform grid
\begin{equation}
\label{Eq: Quantization points}
\G_m \triangleq \left\{u \in [0,1]^d : \forall i \in [d], \, u_i m \in [m]\right\} ,
\end{equation}
which has cardinality $|\G_m| = m^d$. Finally, define the vector(-valued function) $U:\R^d \rightarrow \R^{D}$ as:
\begin{equation}
\label{Eq: Def of U}
\forall u \in \R^d, \enspace U(u) \triangleq \left[\frac{u^s}{s!} : s \in \Z_+^d , \, |s| \leq l \right]^{\T} ,
\end{equation}
which has dimension $D \triangleq \dim(U) = \sum_{k = 0}^{l}{\binom{k + d - 1}{d-1}} = \binom{l+d}{d}$ (via the hockey-stick identity). Then, for any arbitrarily small value of the \emph{bandwidth} parameter $h \in \big(0,\frac{1}{2}\big)$, we can write the following weighted regression problem:
\begin{equation}
\label{Eq: Regression}
\forall x \in [h,1-h]^d, \enspace \hat{\Phi}(x) \triangleq \argmin{\phi \in \R^{D}} \sum_{y \in \G_m}{\left(g(y) - \phi^{\T} U\!\left(\frac{y-x}{h}\right)\right)^{\! 2} \prod_{j = 1}^{d}{K\!\left(\frac{y_j-x_j}{h}\right)}} \, ,
\end{equation}
where $\hat{\Phi}(x)  = \big[\hat{\Phi}_1(x) \, \cdots \, \hat{\Phi}_D(x)\big]^{\T}$. We refer to the first entry $\hat{\phi}(x) \triangleq \hat{\Phi}_1(x) = U(0)^{\T} \hat{\Phi}(x)$ of $\hat{\Phi}(x)$ as the \emph{local polynomial interpolator} of order $l$ for $g:[h,1-h]^d \rightarrow \R$ (which is $g:[0,1]^d \rightarrow \R$ restricted to the domain $[h,1-h]^d$). As we shall see, $\hat{\phi}$ provides a ``good'' approximation of the function $g$ in the supremum (or Chebyshev) norm.

For any $x \in [h,1-h]^d$, construct the $D \times D$ symmetric matrix
\begin{equation}
\label{Eq: B-matrix}
B(x) \triangleq \frac{1}{(m h)^d} \sum_{y \in \G_m}{U\!\left(\frac{y-x}{h}\right) U\!\left(\frac{y-x}{h}\right)^{\!\T} \prod_{j = 1}^{d}{K\!\left(\frac{y_j-x_j}{h}\right)}} \, ,
\end{equation}
which is also positive semidefinite due to the non-negativity of the kernel. We next present a key technical proposition which generalizes \cite[Lemma 1.5]{Tsybakov2009} to the $d$-variate setting and demonstrates that $B(x)$ is actually positive definite.

\begin{proposition}[L\"{o}wner Lower Bound]
\label{Prop: Lowner Lower Bound}
Suppose that $l \geq d \geq 19$. Then, for all $m \in \N$ and $h \in \big(0,\frac{1}{2}\big)$ such that $m h \geq 4 l (3e)^d/\Lambda(d,l)$ and all $x \in [h,1-h]^d$, we have
$$ B(x) \succeq \frac{b^d \Lambda(d,l)}{2} I_D \, , $$
where the constant $\Lambda(d,l) > 0$ is defined as
\begin{equation}
\label{Eq: Lower bound def}
\Lambda(d,l) \triangleq \left(\frac{\pi d}{8 e^2}\right)^{\! d \left(\frac{l+d}{d}\right)^{\! d}} \left(\left(\frac{l+d}{d}\right)^{\! d} - 1 \right)^{\left(\frac{l+d}{d}\right)^{\! d} - 1} \frac{1}{(l+d)^{3l \left(\frac{e(l+d)}{d}\right)^{\! d}}} \, .
\end{equation}
\end{proposition}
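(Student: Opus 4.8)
The plan is to prove $\lambda_{\mathsf{min}}(B(x)) \ge \tfrac{1}{2}b^{d}\Lambda(d,l)$ for every admissible triple $m,h,x$ by comparing $B(x)$ with a ``continuous'' moment matrix. Since every summand $U(u)U(u)^{\T}$ in \eqref{Eq: B-matrix} is positive semidefinite, the kernel product is $\ge b^{d}$ on $[-1,1]^{d}$, and it vanishes off the support, we obtain the L\"{o}wner bound $B(x) \succeq b^{d}\,\tilde{M}(x)$, where
\[
\tilde{M}(x) \triangleq \frac{1}{(mh)^{d}} \sum_{\substack{y \in \G_{m}\,:\,\|(y-x)/h\|_{\infty}\le 1}} U\!\left(\tfrac{y-x}{h}\right) U\!\left(\tfrac{y-x}{h}\right)^{\!\T}
\]
is, up to boundary effects, a Riemann sum of mesh $\delta \triangleq 1/(mh)$ for $M_{0} \triangleq \int_{[-1,1]^{d}} U(u)U(u)^{\T}\,du$. (That $M_{0}$ is positive definite is immediate, since $v^{\T}M_{0}v = \int_{[-1,1]^{d}}(v^{\T}U(u))^{2}\,du > 0$ whenever $v \ne 0$, because $u \mapsto v^{\T}U(u) = \sum_{|s|\le l} v_{s}u^{s}/s!$ is a nonzero polynomial of total degree $\le l$; the content of the proposition is the \emph{explicit} constant.) It thus suffices to (i) produce an explicit lower bound $\Lambda(d,l)$ on $\lambda_{\mathsf{min}}(M_{0})$, and (ii) show that $\|\tilde{M}(x) - M_{0}\|$ (spectral norm) is at most $\tfrac{1}{2}\Lambda(d,l)$ whenever $mh$ meets the stated threshold; then Weyl's inequality gives $B(x) \succeq b^{d}\lambda_{\mathsf{min}}(\tilde{M}(x))I_{D} \succeq b^{d}\big(\Lambda(d,l) - \tfrac{1}{2}\Lambda(d,l)\big)I_{D}$, as required.

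Step (i) is the main obstacle. I would use the elementary inequality $\lambda_{\mathsf{min}}(M_{0}) \ge \det(M_{0})/\lambda_{\mathsf{max}}(M_{0})^{D-1} \ge \det(M_{0})/\tr(M_{0})^{D-1}$ and estimate both factors by hand. The entries of $M_{0}$ factor over coordinates, $(M_{0})_{r,s} = \tfrac{1}{r!\,s!}\prod_{j=1}^{d}\tfrac{1+(-1)^{r_{j}+s_{j}}}{r_{j}+s_{j}+1}$, which yields a crude upper bound on $\tr(M_{0})$ (e.g.\ $\tr(M_{0}) \le 2^{d}D$) and, after scaling out the diagonal $1/s!$ factors, identifies $\det(M_{0})$ with $\big(\prod_{|s|\le l}1/s!\big)^{2}$ times the Gram determinant of the monomials $\{u^{s}:|s|\le l\}$ in $L^{2}([-1,1]^{d})$. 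The parity of the exponents makes that Gram matrix block-diagonal across the $2^{d}$ residue classes of the exponents modulo $2$, and inside each block I would bound its determinant from below by Gram--Schmidt orthogonalization against Legendre-type polynomials, estimating the resulting squared $L^{2}$-norms (equivalently, by bounding from below the determinant of the associated multivariate moment matrix via Stirling's formula on the factorials). Carrying these estimates through, and using the sandwich $\big(\tfrac{l+d}{d}\big)^{d} \le D = \binom{l+d}{d} \le \big(\tfrac{e(l+d)}{d}\big)^{d}$ to recast the $D$-dependent exponents into the closed form displayed in \eqref{Eq: Lower bound def}, should reproduce the three factors defining $\Lambda(d,l)$. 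The hypotheses $l \ge d \ge 19$ are precisely what make the requisite Stirling and binomial inequalities hold with subdominant error terms; notably, $d=19$ is the smallest $d$ with $\tfrac{\pi d}{8e^{2}} \ge 1$, which is presumably where that threshold originates.

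For step (ii), I would bound $\|\tilde{M}(x) - M_{0}\|$ by its Frobenius norm and estimate it entrywise. The $(r,s)$ entry of $\tilde{M}(x)$ equals $\tfrac{1}{r!\,s!}$ times a product over $j \in [d]$ of one-dimensional Riemann sums, each of which lies within $O(\delta)$ of the corresponding integral $\int_{-1}^{1}u^{r_{j}+s_{j}}\,du \in [0,2]$ — the one-dimensional error being $O(\delta)$ \emph{uniformly in the exponent}, since $u \mapsto u^{k}$ has total variation $\le 2$ on $[-1,1]$ and the grid may miss each endpoint by at most $\delta$ — so telescoping a product of $d$ such factors gives $|\tilde{M}(x)_{r,s} - (M_{0})_{r,s}| \le \tfrac{1}{r!\,s!}\,O(d\,3^{d}\delta)$. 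Summing in Frobenius norm and invoking the convergent-series bound $\sum_{s\in\Z_+^{d}}\tfrac{1}{(s!)^{2}} = \big(\sum_{k\ge 0}\tfrac{1}{(k!)^{2}}\big)^{d} < e^{d}$ — which crucially does \emph{not} grow with $D$, unlike the cruder estimate one gets by controlling the oscillation of $v^{\T}U$ via Markov's inequality for polynomial derivatives — yields $\|\tilde{M}(x) - M_{0}\| = O\!\big(d\,(3e)^{d}/(mh)\big)$. Since $d \le l$, the hypothesis $mh \ge 4l(3e)^{d}/\Lambda(d,l)$ forces this below $\tfrac{1}{2}\Lambda(d,l)$, which combined with Step (i) completes the proof.
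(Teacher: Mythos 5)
Your overall roadmap is essentially the paper's: replace the kernel by its lower bound $b^d$ on the support, view the resulting matrix as a Riemann sum for the continuous moment matrix $\B = \int_{[-1,1]^d} U U^{\T}\,\diff u$, control the Riemann-sum error in Frobenius norm using a Lipschitz estimate of the monomials together with the series $\sum_s 1/(s!)^2 \le e^d$, transfer to eigenvalues via a perturbation bound (you invoke Weyl, the paper invokes Wielandt--Hoffman--Mirsky; either suffices), and produce an explicit lower bound on $\lambda_{\mathsf{min}}(\B)$ via a $\det/\tr$ argument combined with Legendre polynomials to compute $\det(\B)$ exactly. Your reading of the $d \ge 19$ hypothesis is also correct: it is precisely the threshold beyond which $\pi d/(8e^2) \ge 1$, so that $\big(\tfrac{\pi d}{8e^2}\big)^{dD}$ is monotone increasing in $D$ and the binomial sandwich $\big(\tfrac{l+d}{d}\big)^d \le D \le \big(\tfrac{e(l+d)}{d}\big)^d$ can be applied consistently.

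There is, however, a concrete gap in Step (i). You propose the eigenvalue inequality $\lambda_{\mathsf{min}}(\B) \ge \det(\B)/\lambda_{\mathsf{max}}(\B)^{D-1} \ge \det(\B)/\tr(\B)^{D-1}$, which is valid but strictly weaker than what is needed. The middle factor of the target constant,
$$ \left(\left(\frac{l+d}{d}\right)^{\! d} - 1 \right)^{\left(\frac{l+d}{d}\right)^{\! d} - 1}, $$
must come from somewhere, and the only source is the AM--GM refinement: since $\prod_{i\neq \min}\lambda_i \le \big(\tr(\B)/(D-1)\big)^{D-1}$, one has
$$ \lambda_{\mathsf{min}}(\B) \;\ge\; \det(\B)\left(\frac{D-1}{\tr(\B)}\right)^{\! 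D-1}, $$
which is the Merikoski--Virtanen bound used by the paper. The extra factor $(D-1)^{D-1}$ is then bounded below by $\big(\big(\tfrac{l+d}{d}\big)^d - 1\big)^{\left(\tfrac{l+d}{d}\right)^{d}-1}$, yielding exactly that term in $\Lambda(d,l)$. Your version of the inequality drops this factor entirely and so would produce a constant smaller than $\Lambda(d,l)$ by roughly $(D-1)^{D-1}$ — a double-exponentially large discrepancy in $d$ — and therefore does not prove the proposition as stated. A secondary (and alone fatal to matching the constant) issue is the trace bound: your $\tr(\B) \le 2^d D$ is substantially cruder than the paper's $\tr(\B) \le 2^d\sum_{k=0}^l d^k/k! \le (2e)^d$, and the $(2e)^d$ estimate is what allows the $(2e)^{-d(D-1)}$ term to be absorbed into $\big(\tfrac{\pi d}{4e}\big)^{dD}$ to produce $\big(\tfrac{\pi d}{8e^2}\big)^{dD}$. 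Replacing your two estimates with the sharper ones — and then carrying out the determinant computation you sketch (which, whether organized via parity blocks or via the Cholesky factorization $\B = R R^{\T}$ through Legendre products, amounts to $\det(\B) = \prod_s R_{s,s}^2$) — would close the gap.
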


\propref{Prop: Lowner Lower Bound} is established in \secref{Proof of Prop Lowner Lower Bound} by exploiting several basic ideas from the theory of orthogonal polynomials and matrix perturbation theory. The proof is much more sophisticated than \cite[Lemma 1.5]{Tsybakov2009} because explicit quantitative estimates are derived to deal with general $d$. Since \propref{Prop: Lowner Lower Bound} shows that $B(x)$ is invertible, a straightforward exercise in calculus conveys that the \emph{unique} solution $\hat{\Phi}(x)$ of weighted regression problem in \eqref{Eq: Regression} can be written as:
\begin{equation}
\forall x \in [h,1-h]^d, \enspace \hat{\Phi}(x) = \sum_{y \in \G_m}{g(y) w_y(x)} \, ,
\end{equation}
where the optimal vector-valued weights $w_y(x) \in \R^{D}$ are given by:
\begin{equation}
\label{Eq: Vector Weights}
\forall y \in \G_m, \, \forall x \in [h,1-h]^d, \enspace w_y(x) = \frac{1}{(mh)^d} \left(\prod_{j = 1}^{d}{K\!\left(\frac{y_j-x_j}{h}\right)}\right) B(x)^{-1} U\!\left(\frac{y-x}{h}\right) .
\end{equation}
Then, defining the first coordinates of these vectors as the real-valued \emph{interpolation weights}:
\begin{equation}
\label{Eq: Weights}
\forall y \in \G_m, \, \forall x \in [h,1-h]^d, \enspace w_y^*(x) = \left[w_y(x)\right]_1 ,
\end{equation}
the local polynomial interpolator of order $l$ for $g:[h,1-h]^d \rightarrow \R$ can be written as the following weighted sum of the interpolation points $\{g(y) : y \in \G_m\}$:
\begin{equation}
\label{Eq: Local polynomial interpolator}
\forall x \in [h,1-h]^d, \enspace \hat{\phi}(x) = \sum_{y \in \G_m}{g(y) w_y^*(x)} \, .
\end{equation}
In \secref{Properties of Local Polynomial Interpolation Weights}, we present some important properties of the interpolation weights given in \eqref{Eq: Weights} based on the key estimate in \propref{Prop: Lowner Lower Bound}.

The ensuing theorem conveys that local polynomial regression can be used to uniformly approximate a function $g$ (in the supremum norm sense) for sufficiently large uniform grids $\G_m$.

\begin{theorem}[Supremum Norm Interpolation]
\label{Thm: Chebyshev Norm Interpolation}
Fix any constant $\delta \in (0,1)$, and consider the function $g:[0,1]^d \rightarrow \R$ which belongs to the $(\eta,L_2)$-H\"{o}lder class. Suppose that $l = \lceil \eta \rceil - 1 \geq d \geq 19$, and that the grid size $|\G_m| = m^d$ is sufficiently large so that 
$$ m \geq \frac{110 (2 L_2 + 1) c}{b} \left(\frac{d (3e)^d}{\Lambda(d,l)^{2}}\right) \! \left(\frac{1}{\delta}\right)^{\! 1/\eta} , $$
where $c \geq b > 0$ are the bounds on our kernel, and $\Lambda(d,l)$ is defined in \eqref{Eq: Lower bound def}. Then, the supremum norm approximation error of $g$ is bounded by $\delta$, viz.
$$ \sup_{x \in [h,1-h]^d}{\left|\hat{\phi}(x) - g(x)\right|} \leq \delta \, , $$
where $\hat{\phi}$ is the local polynomial interpolator of order $l$ for $g$ given in \eqref{Eq: Local polynomial interpolator}, and the bandwidth $h \in \big(0,\frac{1}{2}\big)$ is chosen to be 
$$ h = \frac{4 l (3e)^d}{m \Lambda(d,l)} \leq \frac{2 b}{55 (2 L_2 + 1) c} \left(\frac{\Lambda(d,l) l}{d}\right) . $$ 
\end{theorem}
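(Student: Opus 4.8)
The plan is to run a pure bias analysis of the local polynomial interpolator $\hat{\phi}$, following the univariate template of \cite[Section 1.6]{Tsybakov2009} but carrying out the multi-index book-keeping and tracking all constants explicitly. The key structural fact is that $\hat{\phi}$ reproduces low-degree polynomials exactly. By \propref{Prop: Lowner Lower Bound} the matrix $B(x)$ is positive definite (indeed $B(x) \succeq \frac{b^d \Lambda(d,l)}{2} I_D$) for every $x \in [h,1-h]^d$; note that the required condition $mh \geq 4l(3e)^d/\Lambda(d,l)$ holds with equality for the prescribed bandwidth $h = 4l(3e)^d/(m\Lambda(d,l))$, so \propref{Prop: Lowner Lower Bound} is applicable. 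Hence the weight vectors $w_y(x)$ of \eqref{Eq: Vector Weights} are well defined, and directly from \eqref{Eq: B-matrix} and \eqref{Eq: Vector Weights} one has $\sum_{y \in \G_m} w_y(x) \, U\!\big(\tfrac{y-x}{h}\big)^{\T} = B(x)^{-1} B(x) = I_D$. Taking the first row and using the entries $u^s/s!$ of $U(u)$ from \eqref{Eq: Def of U} (the $s = 0$ entry equals $1$) yields $\sum_{y} w_y^*(x) = 1$ and $\sum_{y} w_y^*(x)(y-x)^s = 0$ for all $s \in \Z_+^d$ with $1 \leq |s| \leq l$; consequently $\sum_{y \in \G_m} P(y)\,w_y^*(x) = P(x)$ for every $d$-variate polynomial $P$ of degree at most $l$.

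Next I would fix $x \in [h,1-h]^d$, take $T_x$ to be the degree-$l$ Taylor polynomial of $g$ about $x$, and invoke \propref{Prop: Taylor Approximation of Holder Class Functions} (in \appref{Taylor Approximation of Holder Class Functions}) to get $|g(y) - T_x(y)| \leq \frac{L_2}{l!}\|y-x\|_1^{\eta}$ on $[0,1]^d$. Since $T_x(x) = g(x)$ and $w_y^*(x)$ reproduces $T_x$, we obtain $\hat{\phi}(x) - g(x) = \sum_{y \in \G_m}\big(g(y) - T_x(y)\big) w_y^*(x)$, whence $|\hat{\phi}(x) - g(x)| \leq \big(\max_{y :\, w_y^*(x) \neq 0}|g(y) - T_x(y)|\big)\,\sum_{y}|w_y^*(x)|$. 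The kernel product in \eqref{Eq: Vector Weights} vanishes unless $\|y-x\|_\infty \leq h$, so on its support $\|y-x\|_1 \leq dh$ and the remainder is at most $\frac{L_2}{l!}(dh)^{\eta}$. For the weights, $|w_y^*(x)| \leq \|w_y(x)\|_2 \leq \frac{c^d}{(mh)^d}\,\frac{1}{\lambda_{\mathsf{min}}(B(x))}\,\big\|U\!\big(\tfrac{y-x}{h}\big)\big\|_2 \leq \frac{2c^d\sqrt{D}}{b^d\Lambda(d,l)(mh)^d}$, using the kernel bound $\prod_{j=1}^{d}K \leq c^d$, \propref{Prop: Lowner Lower Bound}, and $\|U(u)\|_2 \leq \sqrt{D}$ whenever $\|u\|_\infty \leq 1$; since $\G_m$ has spacing $1/m$ and $mh \geq 1$, at most $(3mh)^d$ of its points satisfy $\|y-x\|_\infty \leq h$, and therefore $\sum_{y}|w_y^*(x)| \leq \frac{2(3c)^d\sqrt{D}}{b^d\Lambda(d,l)}$ — these weight estimates are exactly what \secref{Properties of Local Polynomial Interpolation Weights} supplies. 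Combining, uniformly in $x$, $\sup_{x \in [h,1-h]^d}|\hat{\phi}(x) - g(x)| \leq \frac{2L_2(3c)^d\sqrt{D}}{b^d\Lambda(d,l)\,l!}(dh)^{\eta}$.

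It then remains to plug in $h = 4l(3e)^d/(m\Lambda(d,l))$ and check that the right-hand side is at most $\delta$. After substitution this reduces to $m \geq \frac{4dl(3e)^d}{\Lambda(d,l)}\big(\tfrac{2L_2(3c)^d\sqrt{D}}{b^d\Lambda(d,l)\,l!}\big)^{1/\eta}\delta^{-1/\eta}$, so I would verify that the hypothesis $m \geq \frac{110(2L_2+1)c}{b}\,\frac{d(3e)^d}{\Lambda(d,l)^2}\,\delta^{-1/\eta}$ implies this bound. This is an inequality among explicit quantities: because $\eta = l+1 \geq d+1 \geq 20$ forces $1/\eta \leq 1$, and because $\Lambda(d,l)$ is super-exponentially small in $l$ (in particular well below $1/(l!\,\sqrt{D}\,3^d\,l)$), the factor $\big(\tfrac{2L_2(3c)^d\sqrt{D}}{b^d\Lambda(d,l)\,l!}\big)^{1/\eta}$ is only a small power of $1/\Lambda(d,l)$ and is absorbed by the extra $1/\Lambda(d,l)$ and the crude constant $110$, while $(2L_2+1) \geq \max\{1,(2L_2)^{1/\eta}\}$ covers the dependence on $L_2$. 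Finally, the side conditions of the theorem hold automatically: $mh = 4l(3e)^d/\Lambda(d,l)$ exactly, so \propref{Prop: Lowner Lower Bound} applies, and the same lower bound on $m$ yields $h \leq \frac{2b}{55(2L_2+1)c}\big(\tfrac{\Lambda(d,l)\,l}{d}\big) < \tfrac12$.

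I expect the difficulties to be concentrated in two places. The uniform-in-$x$ weight estimate $\sum_{y}|w_y^*(x)| \leq \frac{2(3c)^d\sqrt{D}}{b^d\Lambda(d,l)}$ is the load-bearing step, and it is precisely here that the quantitative L\"owner lower bound of \propref{Prop: Lowner Lower Bound} is indispensable — without a positive lower bound on $\lambda_{\mathsf{min}}(B(x))$ uniform over $[h,1-h]^d$, the interpolation weights cannot be controlled at all. The main computational obstacle is the constant verification in the last step: one must show that the super-exponential smallness of $\Lambda(d,l)$ genuinely dominates the factorial $l!$, the dimension $D = \binom{l+d}{d}$, and the $\exp(d)$-type factors after a $1/\eta$-th power is taken, so that the advertised threshold on $m$ — and not a larger one — actually suffices.
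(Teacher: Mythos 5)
Your overall plan is exactly the one the paper executes: reduce the bias of $\hat{\phi}$ to the Taylor remainder of $g$ via polynomial reproduction (the moment conditions $\sum_y w_y^*(x)=1$, $\sum_y w_y^*(x)(y-x)^s=0$ for $1\le|s|\le l$), control the remainder with \propref{Prop: Taylor Approximation of Holder Class Functions}, and then multiply by an $\ell^1$ bound on the interpolation weights that comes from \propref{Prop: Lowner Lower Bound} plus the kernel support and a $(3mh)^d$ counting argument. Your derivation of the moment conditions directly from $\sum_y w_y(x)\,U\!\left(\frac{y-x}{h}\right)^{\T}=B(x)^{-1}B(x)=I_D$ is a clean (and arguably slightly more direct) route to the content of \lemref{Lemma: Polynomial Reproduction}, and the bias decomposition and counting steps match \lemref{Lemma: 1-Norm Bound} and the proof of the theorem.

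The issue is in a single but load-bearing estimate. You bound $\|U(u)\|_2\le\sqrt{D}$ from $|u^s/s!|\le 1$, which gives $\sum_y|w_y^*(x)|\le\frac{2\sqrt{D}}{\Lambda(d,l)}\left(\frac{3c}{b}\right)^d$. The paper instead bounds $\|U(u)\|_2^2\le\sum_{|s|\le l}(s!)^{-2}\le\sum_{k=0}^l d^k/k!\le e^d$ via the multinomial theorem, giving the strictly smaller factor $e^{d/2}$; note $D=\binom{l+d}{d}\ge\binom{2d}{d}>e^d$ in the regime $l\ge d\ge 19$, so $\sqrt{D}$ is genuinely worse. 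This matters when you close the constant. The paper's route after substituting $h=4l(3e)^d/(m\Lambda(d,l))$ is to bound $\Lambda(d,l)^{1-1/\eta}\le 1$, $(2L_2)^{1/\eta}\le 2L_2+1$, $\left(\tfrac{3\sqrt{e}c}{b}\right)^{d/\eta}\le\tfrac{3\sqrt{e}c}{b}$, and $(l!)^{-1/\eta}\le 2e/l$, arriving at a requirement of $24e\sqrt{e}\approx 107.6\le 110$ with no slack left for an extra dimension-dependent factor. If you run the same algebra with $\sqrt{D}$ in place of $e^{d/2}$, you instead need $24e\,D^{1/(2\eta)}\le 110$, i.e. $D^{1/(2\eta)}\le 1.687$, but already for $l=d=19$ one has $\binom{38}{19}^{1/40}\approx 1.8$, and $D^{1/(2\eta)}\to 2$ as $d\to\infty$ with $l=d$; so that route to the stated threshold simply fails.

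Your proposed fallback, "the super-exponential smallness of $\Lambda(d,l)$ absorbs this," is morally correct but is not what the paper does and is not carried out: it requires an explicit auxiliary estimate along the lines of $\Lambda(d,l)\le 1/(l!\,\sqrt{D}\,3^d\,l)$, from which one can write $\left(\tfrac{\sqrt{D}\,3^d}{l!\,\Lambda}\right)^{1/\eta}\le C^{1/\eta}\Lambda^{-2/\eta}$ for a $D$-free and $3^d$-free $C$ and then use $\Lambda^{1-2/\eta}\le 1$. That inequality on $\Lambda$ is true (the denominator $(l+d)^{3l(e(l+d)/d)^d}$ in \eqref{Eq: Lower bound def} wins easily), but it is a separate lemma-level verification that your sketch does not supply, and one must be careful because the numerator $\left(\tfrac{\pi d}{8e^2}\right)^{dD}(D-1)^{D-1}$ is itself enormous. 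So to make your version airtight you should either (i) replace $\sqrt{D}$ by the paper's sharper $e^{d/2}$ bound (one line of algebra) and then close the constant exactly as in \secref{Proof of Theorem Chebyshev Norm Interpolation}, or (ii) prove the extra inequality on $\Lambda(d,l)$ explicitly and redo the constant check. As written, the last paragraph of the proposal asserts the conclusion without an argument that actually reaches the advertised factor of $110$.

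Two small further notes: the hypothesis gives $l<\eta\le l+1$, not $\eta=l+1$ (only $1/\eta<1/l\le 1/d$ is used, so nothing breaks, but the equality is wrong for non-integer $\eta$); and it is worth making explicit that $(3c/b)^{d/\eta}\le 3c/b$ uses $c\ge b$, otherwise the monotonicity argument reverses.
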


\thmref{Thm: Chebyshev Norm Interpolation} is proved in \secref{Proof of Theorem Chebyshev Norm Interpolation} using the pertinent properties of the interpolation weights in \eqref{Eq: Weights} derived in \secref{Properties of Local Polynomial Interpolation Weights}. We note that the lower bound on $m$ is expected to scale at least exponentially with the dimension $d$ due to the curse of dimensionality, cf. \cite[Section 7.1]{FanGijbels1996}. However, it may be possible to improve the dependence we have above using a different analysis. In the sequel, we will exploit \thmref{Thm: Chebyshev Norm Interpolation} to perform inexact gradient descent.

\subsection{Inexact Gradient Descent using Local Polynomial Interpolation}
\label{Inexact Gradient Descent Algorithm}

We now turn to describing and analyzing a new inexact gradient descent algorithm that performs local polynomial interpolation at every iteration. In particular, we delineate the algorithm in \secref{Description of Algorithm}, and derive convergence rates in \secref{Analysis of Algorithm}.

\subsubsection{Description of Algorithm}
\label{Description of Algorithm}

Under the setup and assumptions described in \secref{Introduction}, suppose we seek to solve the optimization problem \eqref{Eq: ERM} in the $\epsilon$-approximate sense of \eqref{Eq: Minimum of convex function} based on some training data $\D$ with the parameter $h^{\prime} > 0$ given by
\begin{equation}
\label{Eq: Data bound}
h^{\prime} = \frac{2 b}{55 (2 L_2 + 1) c} \left(\frac{\Lambda(d,l) l}{d}\right) = O\!\left(\frac{\Lambda(d,l) l}{d}\right) ,
\end{equation}
which is the upper bound on the bandwidth $h$ in \thmref{Thm: Chebyshev Norm Interpolation}.\footnote{We remark that as long as $h^{\prime} = O(\Lambda(d,l) l/d)$, we can choose a kernel $K$ in the sequel such that \eqref{Eq: Data bound} holds.} We first present our proposed algorithm, dubbed \emph{local polynomial interpolation based gradient descent (LPI-GD)}, in general, and then specialize the values of different problem variables with judicious choices later on. 

Let $\theta^{(t)} \in \R^p$ denote the ``approximate solution'' produced by LPI-GD at iteration $t \in \Z_+$, and let $\delta_t \in (0,1)$ denote the maximum allowable supremum norm approximation error at iteration $t \in \N$. Fix any bounded kernel $K:[-1,1] \rightarrow [b,c]$ with $c \geq b > 0$ as in \secref{Local Polynomial Regression}, and corresponding to each $\delta_t$ at iteration $t \in \N$, consider the uniform grid $\G_{m_t}$, where $m_t \in \N$ is given by \thmref{Thm: Chebyshev Norm Interpolation}:
\begin{equation}
\label{Eq: Algo grid sizes}
m_t = \Bigg\lceil \frac{110 (2 L_2 + 1) c}{b} \left(\frac{d (3e)^d}{\Lambda(d,l)^{2}}\right) \! \left(\frac{1}{\delta_t}\right)^{\! 1/\eta} \Bigg\rceil \, ,
\end{equation}
and the bandwidth $h_t \in \big(0,\frac{1}{2}\big)$ is also given by \thmref{Thm: Chebyshev Norm Interpolation}:
\begin{equation}
\label{Eq: Special bandwidth selection}
h_t = \frac{4 l (3e)^d}{m_t \Lambda(d,l)} \leq h^{\prime} \, . 
\end{equation}
We initialize $\theta^{(0)} \in \R^p$ to be any arbitrary vector in $\R^p$ at iteration $t = 0$. Then, at iteration $t \in \N$, we make $|\G_{m_t}| = m_t^d$ first order oracle calls at all virtual data points in the uniform grid $\G_{m_t}$ with the approximate solution at iteration $t-1$:
\begin{equation}
\forall y \in \G_{m_t}, \enspace \O\big(y,\theta^{(t-1)}\big) = \nabla_{\theta} f(y;\theta^{(t-1)}) \, .  
\end{equation}
For every $i \in [p]$, these oracle calls give us information about the partial derivatives $\{g_i(y;\theta^{(t-1)}) : y \in \G_{m_t}\}$ with respect to $\theta_i$. Hence, akin to \eqref{Eq: Regression} and \eqref{Eq: Local polynomial interpolator} in \secref{Local Polynomial Regression}, we can construct the local polynomial interpolator $\hat{\phi}_{i}^{(t)} : [h^{\prime},1-h^{\prime}]^{d} \rightarrow \R$ of order $l = \lceil \eta \rceil - 1$ for the $(\eta,L_2)$-H\"{o}lder class function $g_i(\cdot;\theta^{(t-1)}) = \frac{\partial f}{\partial \theta_i}(\cdot;\theta^{(t-1)}):[h^{\prime},1-h^{\prime}]^d \rightarrow \R$ as follows: 
\begin{equation}
\label{Eq: Interpolators}
\forall x \in [h^{\prime},1-h^{\prime}]^d, \enspace \hat{\phi}_{i}^{(t)}(x) = \sum_{y \in \G_{m_t}}{g_i(y;\theta^{(t-1)}) w_{y,t}^*(x)} \, ,
\end{equation}
where the interpolation weights $w_{y,t}^*(x) \in \R$ are defined analogously to \eqref{Eq: Weights}; in particular, they are the first elements of the weight vectors $w_{y,t}(x) \in \R^D$, which are defined as in \eqref{Eq: Vector Weights}, but with $\G_m$ replaced by the grid $\G_{m_t}$ and $h$ replaced by the bandwidth $h_t$. Using the learned interpolators in \eqref{Eq: Interpolators} at iteration $t$, we next approximate the true gradient of the ERM objective function $\nabla_{\theta} F (\theta^{(t-1)})$ at $\theta^{(t-1)}$ as follows. For every $i \in [p]$ and every $j \in [n]$, we first estimate the partial derivative $g_i(x^{(j)};\theta^{(t-1)})$ at the data sample $x^{(j)}$ by computing $\hat{\phi}_{i}^{(t)}(x^{(j)})$. Then, since we have $\nabla_{\theta} F (\theta^{(t-1)}) = \frac{1}{n} \sum_{j = 1}^{n}{\nabla_{\theta} f (x^{(j)};\theta^{(t-1)})}$ due to \eqref{Eq: ERM Objective Function}, we can estimate $\nabla_{\theta} F (\theta^{(t-1)})$ by computing
\begin{equation}
\label{Eq: Approximation to Grad F}
\widehat{\nabla F}^{(t)} = \frac{1}{n} \sum_{j = 1}^{n}{\left[\hat{\phi}_{i}^{(t)}(x^{(j)}) : i \in [p]\right]^{\! \T}} ,
\end{equation}
where $\big[\hat{\phi}_{i}^{(t)}(x^{(j)}) : i \in [p]\big]^{\T} \in \R^{p}$ is our approximation of $\nabla_{\theta} f(x^{(j)};\theta^{(t-1)})$ at the data sample $x^{(j)}$. Finally, we perform the inexact gradient descent update
\begin{equation}
\label{Eq: Update step}
\theta^{(t)} = \theta^{(t-1)} - \frac{1}{L_1} \widehat{\nabla F}^{(t)} ,
\end{equation}
which produces the approximate solution for iteration $t$. We run the above procedure for $T \in \N$ iterations, and the LPI-GD algorithm returns $\theta^{(T)} \in \R^p$ as the $\epsilon$-approximate solution for the optimization problem \eqref{Eq: ERM}. The analysis in \secref{Analysis of Algorithm} elucidates the specific values taken by the constants $\delta_t$ and the total number of iterations $T$ in terms of the other problem parameters. In particular, we will see that the approximation errors $\delta_t$, grid sizes $m_t$, bandwidths $h_t$, and uniform grids $\G_{m_t}$ all do not change over iterations. A pseudocode summary of the LPI-GD algorithm with pertinent $\delta_t$ and $T$ (selected as in \secref{Analysis of Algorithm}) is presented in \algoref{Algorithm: LPI-GD}; the algorithm uses the training data $\D$ and a first order oracle $\O$ to generate an $\epsilon$-approximate solution that solves \eqref{Eq: ERM}. 

We remark that \algoref{Algorithm: LPI-GD} portrays a `theoretical' algorithm, because it assumes that problem parameters, such as the Lipschitz constant $L_1$, the strong convexity parameter $\mu$, the minimum value $F_*$, and the H\"{o}lder exponent $\eta$ and constant $L_2$, are known to the practitioner. When some or all of these parameters are unknown, we can use appropriate bounds and estimates based on the loss function $f : [0,1]^d \times \R^p \rightarrow \R$. For example, we may use any upper bound on $L_1$, any lower bound on $\mu$, and any universal lower bound on $f$ (which yields a lower bound on $F_*$). Moreover, for smooth functions $f$, we can choose any desired level of differentiability $\eta$ and any upper bound on $L_2$. When such bounds or estimates are also unavailable, the best we can do is to select $T$, $\delta_t$, $m_t$, and $h_t$ to have the correct scaling with respect to $\epsilon$, $d$, $p$, and $\eta$. It is worth mentioning that our objective in this paper is to demonstrate that the oracle complexities of GD and SGD can be theoretically beaten by the proposed algorithm. So, we do not delve further into considerations regarding its practical implementation here.

\begin{algorithm}[t]
\begin{algorithmic}[1]
\renewcommand{\algorithmicrequire}{\textbf{Input:}}
\Require $n$ samples of training data $\D = \{x^{(i)} \in [h^{\prime},1-h^{\prime}]^d : i \in [n]\}$
\Require first order oracle $\O : [0,1]^d \times \R^p \rightarrow \R^p$ defined in \eqref{Eq: Oracle}
\Require approximation accuracy level $\epsilon > 0$
\renewcommand{\algorithmicensure}{\textbf{Output:}}
\Ensure $\epsilon$-approximate solution $\theta^* \in \R^p$ satisfying \eqref{Eq: Minimum of convex function}
\Statex \textbf{\emph{Step 1: Initialization}}
\State Set arbitrary initial parameter vector $\theta^{(0)} \in \R^p$
\State Set number of iterations $T \in \N$ according to \eqref{Eq: Chosen number of iterations} 
\State Set supremum norm approximation error $\delta = (1 - \sigma^{-1})^{T/2}$ \Comment{This is the same for all iterations; see \eqref{Eq: Choices of the Chebyshev constants}}
\State Set grid size $m \in \N$ according to \eqref{Eq: Algo grid sizes} \Comment{Substitute $m_t = m$ and $\delta_t = \delta$ in \eqref{Eq: Algo grid sizes}}
\State Set bandwidth $h = 4 l (3e)^d / (m \Lambda(d,l))$  \Comment{See \eqref{Eq: Special bandwidth selection}}
\Statex \textbf{\emph{Step 2: Calculation of interpolation weights}}
\State Construct uniform grid $\G_m$ according to \eqref{Eq: Quantization points} 
\For{$j = 1$ to $n$} \Comment{Iterate over training data}
\State Compute interpolation weights $\{w_y^*(x^{(j)}) : y \in \G_m\}$ via \eqref{Eq: Vector Weights} and \eqref{Eq: Weights} using fixed kernel $K:[-1,1] \rightarrow [b,c]$
\EndFor
\Statex \textbf{\emph{Step 3: Inexact gradient descent}}
\For{$t = 1$ to $T$}
\State Make oracle calls $\{\O(y,\theta^{(t-1)}) : y \in \G_m\}$ to get partial derivatives $\big\{\frac{\partial f}{\partial \theta_i}(y;\theta^{(t-1)}) : y \in \G_m, \, i \in [p]\big\}$
\StatexInd{\textbf{\emph{Step 3a: Local polynomial interpolation}}}
\For{$j = 1$ to $n$} \Comment{Iterate over training data}
\State \parbox[t]{\dimexpr\linewidth-\algorithmicindent-\algorithmicindent}{Compute local polynomial interpolators $\big\{\hat{\phi}_i^{(t)}(x^{(j)}) : i \in [p]\big\}$ according to \eqref{Eq: Interpolators} using the oracle queries $\big\{\frac{\partial f}{\partial \theta_i}(y;\theta^{(t-1)}) : y \in \G_m, \, i \in [p]\big\}$ and precomputed interpolation weights $\{w_y^*(x^{(j)}) : y \in \G_m\}$\strut}
\EndFor
\State Construct estimate of gradient $\widehat{\nabla F}^{(t)}$ using interpolators $\big\{\hat{\phi}_i^{(t)}(x^{(j)}) : i \in [p], \, j \in [n]\big\}$ as shown in \eqref{Eq: Approximation to Grad F}
\StatexInd{\textbf{\emph{Step 3b: Standard GD update}}}
\State Update: $\theta^{(t)} \leftarrow \theta^{(t-1)} - \frac{1}{L_1} \widehat{\nabla F}^{(t)}$ \Comment{See \eqref{Eq: Update step}}
\EndFor
\State \Return $\theta^* = \theta^{(T)}$
\end{algorithmic}
\caption{Local polynomial interpolation based gradient descent (LPI-GD) algorithm to approximately solve \eqref{Eq: ERM}.}
\label{Algorithm: LPI-GD}
\end{algorithm}

Before proceeding to our analysis, it is worth juxtaposing \algoref{Algorithm: LPI-GD} with standard batch GD and SGD. At each iteration, GD exactly computes gradients for all data samples using $n$ oracle calls. Since this can be computationally expensive, inexact gradient descent methods attempt to reduce the number of oracle calls per iteration at the cost of increasing the number of iterations. For example, SGD makes only one oracle call per iteration for a randomly chosen data sample, and uses this queried gradient value as a proxy for the true gradient. In contrast, at each iteration, our approach evaluates the gradients at certain representative virtual points in the data space, and uses these oracle calls to learn a model for the gradient oracle. This learnt model is then used to approximate the true gradient. Intuitively, when the loss function is a sufficiently smooth function of the data, we can learn an accurate model with a reasonably small number of virtual points. 

\subsubsection{Analysis of Algorithm}
\label{Analysis of Algorithm}

To compute the oracle complexity of the LPI-GD algorithm, we must first compute the minimum number of iterations it takes to obtain an $\epsilon$-approximate solution to \eqref{Eq: ERM}. The ensuing proposition chooses appropriate values of $\delta_t$, which turn out to be constant with respect to $t$, and develops an upper bound on the minimum number of iterations $T$.

\begin{proposition}[Iteration Complexity]
\label{Prop: Number of Iterations}
Suppose that $l = \lceil \eta \rceil - 1 \geq d \geq 19$ and the assumptions of \secref{Smoothness Assumptions and Approximate Solutions} hold. For any (small) accuracy $\epsilon > 0$, define the number of iterations $T \in \N$ as
\begin{equation}
\label{Eq: Chosen number of iterations}
T = \Bigg\lceil \left(\log\!\left(\frac{\sigma}{\sigma - 1}\right)\right)^{\! -1} \log\!\left(\frac{F(\theta^{(0)}) - F_* + \frac{p}{2 \mu}}{\epsilon}\right) \Bigg\rceil \, , 
\end{equation}
and the maximum allowable supremum norm approximation errors $\{\delta_t \in (0,1) : t \in [T]\}$ via:
\begin{equation}
\label{Eq: Choices of the Chebyshev constants}
\forall t \in [T], \enspace \delta_t^2 = \left(1 - \frac{1}{\sigma}\right)^{\! T} = \Theta\!\left(\frac{\epsilon}{p}\right) ,
\end{equation}
where $F : \R^p \rightarrow \R$ is the ERM objective function in \eqref{Eq: ERM Objective Function}, and $\sigma = L_1/\mu > 1$ is the condition number defined in \secref{Smoothness Assumptions and Approximate Solutions}. Then, after $T$ iterations of the LPI-GD algorithm described in \algoref{Algorithm: LPI-GD} in \secref{Description of Algorithm} with arbitrary initialization $\theta^{(0)} \in \R^p$ and updates \eqref{Eq: Update step}, the parameter vector $\theta^{(T)} \in \R^p$ is an $\epsilon$-approximate solution to the optimization problem in \eqref{Eq: ERM} in the sense of \eqref{Eq: Minimum of convex function}:
$$ F(\theta^{(T)}) - F_* \leq \epsilon \, . $$
\end{proposition}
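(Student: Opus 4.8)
The plan is to reduce the analysis of LPI-GD to the standard convergence theory of inexact gradient descent for strongly convex, smooth objectives, where the per-iteration gradient error is controlled by \thmref{Thm: Chebyshev Norm Interpolation}. First I would verify that the ERM objective $F$ inherits the parameter-space regularity of $f$: since each $f(x^{(i)};\cdot)$ is $\mu$-strongly convex and has $L_1$-Lipschitz gradient, averaging preserves both properties, so $F$ is $\mu$-strongly convex with $L_1$-Lipschitz gradient. This in particular yields (via the strong convexity inequality applied at the minimizer) that $F_*$ is finite and attained at a unique $\theta^*$, settling the first remark in \secref{Smoothness Assumptions and Approximate Solutions}. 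The step-size $1/L_1$ in \eqref{Eq: Update step} is exactly the classical choice for $L_1$-smooth GD.

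Next I would quantify the per-iteration gradient error. The algorithm's update uses $\widehat{\nabla F}^{(t)}$ in place of $\nabla_\theta F(\theta^{(t-1)})$. For each coordinate $i \in [p]$ and data sample $x^{(j)}$, the error $|\hat\phi_i^{(t)}(x^{(j)}) - g_i(x^{(j)};\theta^{(t-1)})|$ is bounded by $\delta_t$ by \thmref{Thm: Chebyshev Norm Interpolation}, since $g_i(\cdot;\theta^{(t-1)})$ lies in the $(\eta,L_2)$-Hölder class, $l = \lceil\eta\rceil - 1 \geq d \geq 19$, and $m_t$ and $h_t$ are chosen per \eqref{Eq: Algo grid sizes}–\eqref{Eq: Special bandwidth selection} precisely to meet the theorem's hypotheses (note $x^{(j)} \in [h',1-h']^d \subseteq [h_t, 1-h_t]^d$ because $h_t \leq h'$). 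Averaging over $j$ and collecting coordinates, $\big\|\widehat{\nabla F}^{(t)} - \nabla_\theta F(\theta^{(t-1)})\big\|_2 \leq \sqrt{p}\,\delta_t$. With the choice $\delta_t^2 = (1 - 1/\sigma)^T$ from \eqref{Eq: Choices of the Chebyshev constants}, the squared gradient error at every iteration is at most $p(1-1/\sigma)^T$.

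Then I would run the inexact GD recursion. Writing $e^{(t)} = \widehat{\nabla F}^{(t)} - \nabla_\theta F(\theta^{(t-1)})$, the update is $\theta^{(t)} = \theta^{(t-1)} - \frac{1}{L_1}\big(\nabla_\theta F(\theta^{(t-1)}) + e^{(t)}\big)$. Using $L_1$-smoothness to bound $F(\theta^{(t)})$ from above by a quadratic in the step, then $\mu$-strong convexity (in the Polyak–Łojasiewicz form $\|\nabla_\theta F(\theta)\|_2^2 \geq 2\mu(F(\theta) - F_*)$) to control the descent term, a short computation gives a contraction of the form
\begin{equation*}
F(\theta^{(t)}) - F_* \leq \left(1 - \tfrac{1}{\sigma}\right)\!\big(F(\theta^{(t-1)}) - F_*\big) + \tfrac{1}{2\mu}\,\big\|e^{(t)}\big\|_2^2 .
\end{equation*}
Unrolling this recursion over $t = 1,\dots,T$ and bounding the geometric sum of the error terms by $\tfrac{1}{2\mu}\cdot p(1-1/\sigma)^T \cdot \sum_{k\geq 0}(1-1/\sigma)^k = \tfrac{p}{2\mu}(1-1/\sigma)^T \cdot \sigma$ — or more simply just bounding each of the $\leq T$ error contributions and using $(1-1/\sigma)^{T}\leq(1-1/\sigma)^{T-k}$ — yields
\begin{equation*}
F(\theta^{(T)}) - F_* \leq \left(1 - \tfrac{1}{\sigma}\right)^{\!T}\!\Big(F(\theta^{(0)}) - F_* + \tfrac{p}{2\mu}\Big).
\end{equation*}
Finally, substituting the definition of $T$ from \eqref{Eq: Chosen number of iterations} — chosen so that $(1-1/\sigma)^T = (\sigma/(\sigma-1))^{-T} \leq \epsilon/(F(\theta^{(0)}) - F_* + p/(2\mu))$ — gives $F(\theta^{(T)}) - F_* \leq \epsilon$, as claimed.

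The main obstacle is getting the accounting in the inexact-GD recursion exactly right so that the accumulated error is only $\tfrac{p}{2\mu}(1-1/\sigma)^T$ rather than something larger (e.g. a factor $T$ or $\sigma$ worse); this is what makes the clean additive term $\tfrac{p}{2\mu}$ appear inside the logarithm in \eqref{Eq: Chosen number of iterations}. The key is that $\delta_t$ is taken to be the \emph{final} target error level for all $t$, so the geometric contraction of the homogeneous part dominates and the error sum telescopes favorably; one must be slightly careful with whether the cross term $\langle \nabla_\theta F(\theta^{(t-1)}), e^{(t)}\rangle$ is handled by Young's inequality (introducing a harmless constant) or absorbed directly, and confirm the resulting constants still yield the stated $T$. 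Everything else — the smoothness inheritance, the uniform interpolation bound, and the final substitution — is routine given \thmref{Thm: Chebyshev Norm Interpolation}.
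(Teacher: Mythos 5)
Your approach is the same as the paper's: the paper verifies that $F$ inherits $\mu$-strong convexity and $L_1$-smoothness, invokes an inexact-GD recursion bound (citing \cite[Lemma 2.1]{FriedlanderSchmidt2012}, which you rederive from smoothness plus the PL inequality), plugs in the coordinate-wise supremum bound from \thmref{Thm: Chebyshev Norm Interpolation} to get $\|\widehat{\nabla F}^{(t)} - \nabla_\theta F(\theta^{(t-1)})\|_2 \leq \sqrt{p}\,\delta_t$, sums the geometric series, and rearranges. The only substantive issue is a constant-tracking slip in your per-iteration recursion: the standard derivation from $L_1$-smoothness of $F$ with step size $1/L_1$, followed by the PL inequality, gives
\begin{equation*}
F(\theta^{(t)}) - F_* \leq \left(1 - \tfrac{1}{\sigma}\right)\!\big(F(\theta^{(t-1)}) - F_*\big) + \tfrac{1}{2 L_1}\,\big\|e^{(t)}\big\|_2^2 ,
\end{equation*}
with $\tfrac{1}{2L_1}$, not $\tfrac{1}{2\mu}$. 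Unrolling with $\|e^{(t)}\|_2^2 \leq p\,(1-1/\sigma)^T$ and summing the geometric series $\sum_{k\geq 0}(1-1/\sigma)^k = \sigma$ then gives the accumulated error term $\tfrac{p\sigma}{2L_1}(1-1/\sigma)^T = \tfrac{p}{2\mu}(1-1/\sigma)^T$, which matches your stated final bound. As written, however, your intermediate arithmetic (error sum $= \tfrac{p\sigma}{2\mu}(1-1/\sigma)^T$) would overshoot the claimed $\tfrac{p}{2\mu}$ by a factor of $\sigma$, so the $\mu$-versus-$L_1$ confusion needs to be cleaned up for the constants to close. Your parenthetical ``more simply just bounding each of the $\leq T$ error contributions'' also does not recover the $\sigma$-independent constant; you need the geometric sum, not a crude factor-$T$ bound. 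The rest of the argument — smoothness inheritance, the $\sqrt{p}\,\delta_t$ gradient-error bound, and solving the final inequality for $T$ — is correct and matches the paper.
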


\propref{Prop: Number of Iterations} is proved in \secref{Proof of Prop Number of Iterations} using \thmref{Thm: Chebyshev Norm Interpolation} and some known ideas from the analysis of inexact gradient descent methods in \cite{FriedlanderSchmidt2012}. Using \propref{Prop: Number of Iterations}, the ensuing theorem bounds the first order oracle complexity of the LPI-GD algorithm in \algoref{Algorithm: LPI-GD}, denoted $\Gamma(\text{LPI-GD})$, which is the minimum number of oracle calls made by \algoref{Algorithm: LPI-GD} to obtain an $\epsilon$-approximate solution (as defined in \secref{Oracle Complexity}). 

\begin{theorem}[Oracle Complexity]
\label{Thm: Oracle Complexity}
Suppose that $l = \lceil \eta \rceil - 1 \geq d \geq 19$ and the assumptions of \secref{Smoothness Assumptions and Approximate Solutions} hold. For any (small) accuracy $0 < \epsilon \leq \frac{(L_1 - \mu)p}{2 L_1 \mu}$, consider the LPI-GD algorithm described in \algoref{Algorithm: LPI-GD} in \secref{Description of Algorithm} with arbitrary initialization $\theta^{(0)} \in \R^p$, number of iterations $T \in \N$ as in \eqref{Eq: Chosen number of iterations}, maximum allowable approximation errors $\{\delta_t \in (0,1) : t \in [T]\}$ as in \eqref{Eq: Choices of the Chebyshev constants}, and updates \eqref{Eq: Update step}. Then, the first order oracle complexity of this algorithm (to obtain an $\epsilon$-approximate solution in the sense of \eqref{Eq: Minimum of convex function}) is upper bounded by
$$ \Gamma(\text{\emph{LPI-GD}}) \leq C_{\mu,L_1,L_2,b,c}(d,l) \left(\frac{p + 2 \mu (F(\theta^{(0)}) - F_*)}{\epsilon}\right)^{\! d/(2\eta)} \log\!\left(\frac{p + 2 \mu (F(\theta^{(0)}) - F_*)}{2 \mu \epsilon}\right) $$
where we let 
\begin{equation}
\label{Eq: Scaling in Main Theorem}
C_{\mu,L_1,L_2,b,c}(d,l) \triangleq \left(\frac{\sigma + 2(L_1 - \mu)}{(L_1 - \mu)\log\!\left(\frac{\sigma}{\sigma - 1}\right)}\right) \! \left( \frac{220 (2 L_2 + 1) c}{b} \right)^{\! d} \! \left(\frac{d^d (3e)^{d^2}}{\Lambda(d,l)^{2 d}}\right) ,
\end{equation}
where $\Lambda(d,l)$ is defined in \eqref{Eq: Lower bound def}, and $\mu,L_1,L_2,b,c$ are constants defined in Sections \ref{Smoothness Assumptions and Approximate Solutions} and \ref{Local Polynomial Regression}.
\end{theorem}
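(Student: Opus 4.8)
The plan is to combine the iteration count of \propref{Prop: Number of Iterations} with a direct tally of the oracle queries per iteration, and then substitute the explicit parameter choices of \algoref{Algorithm: LPI-GD}.

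\emph{Step 1 (reduction to counting).} First I would verify that the parameter choices of \algoref{Algorithm: LPI-GD} satisfy every hypothesis invoked below. Indeed, $l = \lceil \eta \rceil - 1 \geq d \geq 19$ is assumed; since $\sigma > 1$ we have $1 - \sigma^{-1} \in (0,1)$, so $\delta_t^{2} = (1 - \sigma^{-1})^{T} \in (0,1)$ and hence $\delta = \delta_t \in (0,1)$; the grid size $m = m_t$ in \eqref{Eq: Algo grid sizes} is chosen precisely to meet the lower bound of \thmref{Thm: Chebyshev Norm Interpolation} with $\delta \leftarrow \delta_t$; and the bandwidth $h_t$ in \eqref{Eq: Special bandwidth selection} satisfies $h_t \leq h^{\prime}$ by \eqref{Eq: Data bound}, so $\D \subseteq [h^{\prime},1-h^{\prime}]^d \subseteq [h_t,1-h_t]^d$ and the guarantee $\sup_{x \in [h_t,1-h_t]^d}\big|\hat{\phi}_i^{(t)}(x) - g_i(x;\theta^{(t-1)})\big| \leq \delta_t$ of \thmref{Thm: Chebyshev Norm Interpolation} holds in particular at every data sample $x^{(j)}$ for every $i \in [p]$. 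Consequently \propref{Prop: Number of Iterations}, applied with $T$ as in \eqref{Eq: Chosen number of iterations} and $\{\delta_t\}$ as in \eqref{Eq: Choices of the Chebyshev constants}, certifies that $\theta^{(T)}$ is an $\epsilon$-approximate solution, reducing the problem to bounding the number of oracle calls spent over $T$ iterations.

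\emph{Step 2 (oracle-call count).} The only first order oracle queries in \algoref{Algorithm: LPI-GD} occur in Step~3: at each iteration $t \in [T]$ the algorithm calls $\O(y,\theta^{(t-1)}) = \nabla_{\theta} f(y;\theta^{(t-1)})$ once for each $y \in \G_{m_t}$, and a single such call returns the full gradient vector in $\R^p$, hence all $p$ partial derivatives at $y$ at once; the loops over the $n$ data samples (Steps~2 and~3a) only reuse precomputed interpolation weights and the already-queried gradient values, and $F_*$, $F(\theta^{(0)})$, $\mu$, $L_1$, $\eta$, $L_2$ are assumed known. Since $\delta_t$, and therefore $m_t = m$ and $\G_{m_t} = \G_m$, do not change with $t$ (as noted in \secref{Description of Algorithm}), the algorithm makes exactly $\sum_{t=1}^{T} |\G_{m_t}| = T m^d$ oracle calls, so $\Gamma(\text{LPI-GD}) \leq T m^{d}$.

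\emph{Step 3 (explicit bounds on $m$ and $T$).} Write $A \triangleq \tfrac{110(2L_2+1)c}{b}\,\tfrac{d(3e)^d}{\Lambda(d,l)^{2}}$. For $d \geq 19$ one has $A \geq 1$ and, as $\delta < 1$, the bracketed quantity in \eqref{Eq: Algo grid sizes} exceeds $1$; hence $m = \lceil A\,\delta^{-1/\eta} \rceil \leq 2 A\,\delta^{-1/\eta}$ and $m^{d} \leq (2A)^{d}\,\delta^{-d/\eta} = (2A)^{d}(\delta^{-2})^{d/(2\eta)}$, where $(2A)^{d} = \big(\tfrac{220(2L_2+1)c}{b}\big)^{d}\tfrac{d^{d}(3e)^{d^2}}{\Lambda(d,l)^{2d}}$ matches the $d$-dependent factors of $C_{\mu,L_1,L_2,b,c}(d,l)$ in \eqref{Eq: Scaling in Main Theorem}. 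Set $R \triangleq \tfrac{F(\theta^{(0)}) - F_* + p/(2\mu)}{\epsilon} = \tfrac{p + 2\mu(F(\theta^{(0)}) - F_*)}{2\mu\epsilon}$, which is finite since strong convexity gives $F(\theta^{(0)}) - F_* \geq 0$; the hypothesis $\epsilon \leq \tfrac{(L_1-\mu)p}{2L_1\mu}$ forces $R \geq \tfrac{p}{2\mu\epsilon} \geq \tfrac{\sigma}{\sigma-1} > 1$, so $\log R \geq \log\tfrac{\sigma}{\sigma-1} > 0$. Then \eqref{Eq: Chosen number of iterations} gives $T \leq 1 + \tfrac{\log R}{\log(\sigma/(\sigma-1))} \leq \tfrac{2\log R}{\log(\sigma/(\sigma-1))}$, while \eqref{Eq: Choices of the Chebyshev constants} gives $\delta^{-2} = (\sigma/(\sigma-1))^{T} \leq \tfrac{\sigma}{\sigma-1}\,R$, so, using $d/(2\eta) < \tfrac12 < 1$ (since $\eta > l \geq d$) and $\sigma/(\sigma-1) > 1$, we get $(\delta^{-2})^{d/(2\eta)} \leq (\sigma/(\sigma-1))^{d/(2\eta)}R^{d/(2\eta)} \leq \tfrac{\sigma}{\sigma-1}\,R^{d/(2\eta)}$.

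\emph{Step 4 (assembly).} Multiplying the bounds of Step~3 yields $\Gamma(\text{LPI-GD}) \leq T m^{d} \leq \big(\tfrac{2\sigma}{(\sigma-1)\log(\sigma/(\sigma-1))}\big)(2A)^{d} R^{d/(2\eta)}\log R$. It remains to rewrite $R^{d/(2\eta)} = (2\mu)^{-d/(2\eta)}\big(\tfrac{p + 2\mu(F(\theta^{(0)}) - F_*)}{\epsilon}\big)^{d/(2\eta)}$ and $\log R = \log\big(\tfrac{p + 2\mu(F(\theta^{(0)}) - F_*)}{2\mu\epsilon}\big)$, so that the power and logarithmic factors become exactly those in the statement, and to fold the residual scalar $\tfrac{2\sigma}{(\sigma-1)\log(\sigma/(\sigma-1))}(2\mu)^{-d/(2\eta)}$ together with $(2A)^{d}$ into $C_{\mu,L_1,L_2,b,c}(d,l)$. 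The conceptual work — the uniform interpolation bound and the inexact-gradient-descent convergence estimate — is already provided by \thmref{Thm: Chebyshev Norm Interpolation} and \propref{Prop: Number of Iterations}; the main obstacle here is therefore purely bookkeeping: discharging all the hypothesis checks of Step~1 and tracking the constants (the two ceilings, the exponent manipulation $(\sigma/(\sigma-1))^{d/(2\eta)} \leq \sigma/(\sigma-1)$, and the $(2\mu)$ relabelling) carefully enough that they collapse precisely into \eqref{Eq: Scaling in Main Theorem}.
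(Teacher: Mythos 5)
Your overall strategy is the same as the paper's: use \propref{Prop: Number of Iterations} to certify the $\epsilon$-approximation, count $\Gamma \leq T\, m^d$ (one oracle call per grid point $y \in \G_m$ per iteration, each returning the full gradient in $\R^p$), and then substitute the explicit $T$, $\delta$, and $m$. Steps~1 and~2 are fine. The problem is in Steps~3--4, where you assert without proof that the residual scalar collapses into $C_{\mu,L_1,L_2,b,c}(d,l)$, and this assertion is in fact false in some admissible parameter regimes, so the proof as written does not establish the stated bound.

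Concretely, your assembly gives the prefactor
$\tfrac{2\sigma}{(\sigma-1)\log(\sigma/(\sigma-1))}\,(2\mu)^{-d/(2\eta)}$, whereas the theorem's $C$ carries the prefactor $\tfrac{\sigma+2(L_1-\mu)}{(L_1-\mu)\log(\sigma/(\sigma-1))}$. Cancelling $\log(\sigma/(\sigma-1))$ and using $\sigma-1 = (L_1-\mu)/\mu$, the comparison reduces to $2L_1 (2\mu)^{-d/(2\eta)} \stackrel{?}{\leq} L_1/\mu + 2(L_1-\mu)$, which can fail: take $\mu = 1$, $L_1 = 3/2$, and $\eta = 2d$ (so $d/(2\eta) = 1/4$, with $l = 2d-1 \geq d$), giving left side $3\cdot 2^{-1/4}\approx 2.52$ and right side $2.5$. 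The root cause is the order of your loosenings. You peel off $(2\mu)^{-d/(2\eta)}$ from $R^{d/(2\eta)}$ and separately bound $(\sigma/(\sigma-1))^{d/(2\eta)} \leq \sigma/(\sigma-1)$; but the paper first combines them into $\rho^{d/(2\eta)}$ with $\rho = \tfrac{\sigma}{\sigma-1}\cdot\tfrac{1}{2\mu} = \tfrac{\sigma}{2(L_1-\mu)}$, and only then uses $\rho^{d/(2\eta)} \leq 1 + \rho$ (valid for $\rho > 0$ and exponent $<1$). Your version $(\sigma/(\sigma-1))^{d/(2\eta)} \leq \sigma/(\sigma-1)$ together with the bare $(2\mu)^{-d/(2\eta)}$ factor is not monotone in the right way when $2\mu$ is near $1$ and $d/(2\eta)$ is small. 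Fix: rewrite $\bigl(\tfrac{\sigma}{\sigma-1}\bigr)^{1/(2\eta)}R^{1/(2\eta)} = \rho^{1/(2\eta)}\bigl(\tfrac{p+2\mu\Delta}{\epsilon}\bigr)^{1/(2\eta)}$ \emph{before} taking the $d$-th power, then use $\rho^{d/(2\eta)} \leq 1 + \rho = \tfrac{\sigma + 2(L_1-\mu)}{2(L_1-\mu)}$ and combine with the $T \leq \tfrac{2\log R}{\log(\sigma/(\sigma-1))}$ bound; this reproduces \eqref{Eq: Scaling in Main Theorem} exactly.
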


\thmref{Thm: Oracle Complexity} is established in \secref{Proof of Thm Oracle Complexity}. Unlike the dimension-free oracle complexity bounds for batch GD and SGD, our bound in \thmref{Thm: Oracle Complexity} depends on $d$ and $p$. However, even though it may not be immediately obvious, the oracle complexity of LPI-GD scales much better than the oracle complexities of GD or SGD in many important regimes. We next elucidate a scaling law for the bound in \thmref{Thm: Oracle Complexity} in the setting where $n,d,p,\eta,\epsilon^{-1} \rightarrow \infty$ and all other problem parameters are constant. 

For comparison, recall that the first order oracle complexity of GD (with appropriately chosen constant step size) is bounded by \cite[Theorem 2.1.15]{Nesterov2004}:
\begin{equation}
\label{Eq: Gamma of GD}
\Gamma(\text{GD}) \leq \Gamma_*(\text{GD}) \triangleq \Theta\!\left(n \log\!\left(\frac{1}{\epsilon}\right)\right) , 
\end{equation}
when the ERM objective function in \eqref{Eq: ERM Objective Function} is strongly convex and has Lipschitz continuous gradient. (These conditions follow from the assumptions in \secref{Smoothness Assumptions and Approximate Solutions} as shown in the proof of \propref{Prop: Number of Iterations} in \secref{Proof of Prop Number of Iterations}. Note that each iteration of GD makes $n$ first order oracle queries, and \cite[Theorem 2.1.15]{Nesterov2004} conveys that it takes $O(\log(\epsilon^{-1}))$ iterations for GD to produce an $\epsilon$-approximate solution in the sense of \eqref{Eq: Minimum of convex function}.) The expression in \eqref{Eq: Gamma of GD} does not depend on $d,p,\eta$, and neglects all other constant parameters. Similarly, when the ERM objective function in \eqref{Eq: ERM Objective Function} is strongly convex, has Lipschitz continuous gradient, and certain universal second moment bounds are satisfied, the first order oracle complexity of SGD (with linearly diminishing step sizes) is bounded by \cite{Nemirovskietal2009}:
\begin{equation}
\label{Eq: Gamma of SGD}
\Gamma(\text{SGD}) \leq \Gamma_*(\text{SGD}) \triangleq \Theta\!\left(\frac{1}{\epsilon}\right) , 
\end{equation}
where the notion of $\epsilon$-approximate solution in \eqref{Eq: Minimum of convex function} is defined in expectation because the iterates of SGD are random variables. 

For simplicity, we will assume that the quantities $d,p,\eta,\epsilon^{-1}$ are all functions of the number of training samples $n$, and study the asymptotics as $n \rightarrow \infty$. The ensuing proposition presents a scaling regime for LPI-GD which beats the oracle complexities (or more precisely, the oracle complexity bounds) of both GD and SGD shown above.

\begin{proposition}[Comparison to GD and SGD]
\label{Prop: Comparison to GD and SGD}
Fix any constants $\alpha > 0$, $\beta > 0$, $\tau > \max\{1,\alpha^{-1}\}$, and $\gamma > \max\{1,\tau(\alpha + \beta)/2\}$. Suppose that $\epsilon = \Theta(n^{-\alpha})$, $p = O(n^{\beta})$, $d \leq \frac{\log\log(n)}{4 \log(e(\gamma + 1))}$, and $\eta = \Theta(d)$ such that $\eta \geq \tau (\alpha + \beta) d / 2$ and $d \leq l = \lceil \eta \rceil - 1 \leq \gamma d$. Furthermore, suppose that $\mu,L_1,L_2,b,c$ and $F(\theta^{(0)}) - F_*$ are $\Theta(1)$ with respect to $n$. Then, we have
$$ \Gamma(\text{\emph{LPI-GD}}) = O\!\left(\log(n) \exp\!\left(2 \sqrt{\log(n)}\right) n^{\! 1/\tau}\right) . $$
This implies that 
$$ \lim_{n \rightarrow \infty}{\frac{\Gamma(\text{\emph{LPI-GD}})}{\Gamma_*(\text{\emph{GD}})}} = \lim_{n \rightarrow \infty}{\frac{\Gamma(\text{\emph{LPI-GD}})}{\Gamma_*(\text{\emph{SGD}})}} = 0 \, . $$
\end{proposition}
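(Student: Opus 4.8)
The plan is to substitute the assumed scalings into the bound of \thmref{Thm: Oracle Complexity} and control its three factors separately: the power factor $\big((p + 2\mu(F(\theta^{(0)})-F_*))/\epsilon\big)^{d/(2\eta)}$, the logarithmic factor, and the constant $C_{\mu,L_1,L_2,b,c}(d,l)$ from \eqref{Eq: Scaling in Main Theorem}. First I would note that \thmref{Thm: Oracle Complexity} applies for all sufficiently large $n$: its hypothesis $0 < \epsilon \le (L_1-\mu)p/(2L_1\mu)$ holds because $\epsilon = \Theta(n^{-\alpha}) \to 0$ while $(L_1-\mu)p/(2L_1\mu) \ge (L_1-\mu)/(2L_1\mu) = \Theta(1)$ (using $p \ge 1$ and $\mu, L_1 = \Theta(1)$ with $L_1 > \mu$), and the condition $l = \lceil \eta \rceil - 1 \ge d \ge 19$ is part of the hypotheses.

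The power and logarithmic factors are routine. Since $p = O(n^\beta)$ with $\beta > 0$ and $\mu, F(\theta^{(0)})-F_* = \Theta(1)$, we have $p + 2\mu(F(\theta^{(0)})-F_*) = O(n^\beta)$, so dividing by $\epsilon = \Theta(n^{-\alpha})$ gives $O(n^{\alpha+\beta})$. The assumption $\eta \ge \tau(\alpha+\beta)d/2$ forces $d/(2\eta) \le 1/(\tau(\alpha+\beta))$, so the power factor is $O\big((n^{\alpha+\beta})^{1/(\tau(\alpha+\beta))}\big) = O(n^{1/\tau})$, and the logarithmic factor is $O(\log(n^{\alpha+\beta})) = O(\log n)$.

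The crux is showing $C_{\mu,L_1,L_2,b,c}(d,l) = \exp(o(\sqrt{\log n}))$. In \eqref{Eq: Scaling in Main Theorem}, the first factor is $\Theta(1)$ (the constants $\sigma, L_1, \mu$ are $\Theta(1)$ with $\sigma$ a fixed number larger than $1$); the factor $(220(2L_2+1)c/b)^d$ equals $\exp(O(d)) = \exp(O(\log\log n))$; and $d^d(3e)^{d^2} = \exp(d\log d + d^2\log(3e)) = \exp(O((\log\log n)^2))$. The delicate term is $\Lambda(d,l)^{-2d}$. Here I would observe that for $l \ge d \ge 19$ the first two factors in \eqref{Eq: Lower bound def} are each at least $1$: $\pi d/(8e^2) > 1$ when $d \ge 19$, and $((l+d)/d)^d \ge 2^d > 1$ so both the base and the exponent of the middle factor exceed $1$. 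Consequently $\Lambda(d,l)^{-1} \le (l+d)^{3l(e(l+d)/d)^d}$, whence $\log(\Lambda(d,l)^{-2d}) \le 6dl\,(e(l+d)/d)^d\log(l+d)$. Now $l \le \gamma d$ gives $(l+d)/d \le \gamma+1$, and the key hypothesis $d \le \log\log(n)/(4\log(e(\gamma+1)))$ gives $(e(\gamma+1))^d \le \exp((\log\log n)/4) = (\log n)^{1/4}$; together with $d, l = O(\log\log n)$ this yields $\log(\Lambda(d,l)^{-2d}) = O\big((\log\log n)^2 (\log\log\log n)(\log n)^{1/4}\big) = o(\sqrt{\log n})$. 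Collecting the pieces, $\log C_{\mu,L_1,L_2,b,c}(d,l) = o(\sqrt{\log n})$, so $C_{\mu,L_1,L_2,b,c}(d,l) = O(\exp(2\sqrt{\log n}))$.

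Multiplying the three estimates yields $\Gamma(\text{LPI-GD}) = O(\log(n)\exp(2\sqrt{\log n})\,n^{1/\tau})$, as claimed. For the two limits, I would use $\Gamma_*(\text{GD}) = \Theta(n\log(1/\epsilon)) = \Theta(n\log n)$ and $\Gamma_*(\text{SGD}) = \Theta(1/\epsilon) = \Theta(n^\alpha)$ from \eqref{Eq: Gamma of GD} and \eqref{Eq: Gamma of SGD}, together with $\exp(2\sqrt{\log n}) = n^{o(1)}$: then $\Gamma(\text{LPI-GD})/\Gamma_*(\text{GD}) = O(n^{1/\tau - 1 + o(1)}) \to 0$ since $\tau > 1$, and $\Gamma(\text{LPI-GD})/\Gamma_*(\text{SGD}) = O(n^{1/\tau - \alpha + o(1)}\log n) \to 0$ since $\tau > 1/\alpha$. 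I expect the estimate of $\Lambda(d,l)^{-2d}$ to be the main obstacle: one must verify that the doubly-exponential-in-$d$ exponent $3l(e(l+d)/d)^d$ of \eqref{Eq: Lower bound def} is tamed precisely by the ultra-slow growth $d = O(\log\log n)$ permitted by the hypotheses, which is exactly why $d$ is tied to $\log\log n$ rather than $\log n$.
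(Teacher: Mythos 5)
Your proof is correct and follows essentially the same route as the paper: substitute the assumed scalings into \thmref{Thm: Oracle Complexity}, handle the power factor via $d/(2\eta) \le 1/(\tau(\alpha+\beta))$, and show the prefactor $C_{\mu,L_1,L_2,b,c}(d,l) = O(\exp(2\sqrt{\log n}))$ by exploiting the observation that the first two factors in \eqref{Eq: Lower bound def} exceed $1$ when $l \ge d \ge 19$, so only the $(l+d)^{-3l(e(l+d)/d)^d}$ term matters and the budget $d = O(\log\log n)$ tames its double exponential. The only technical variation is that you bound $\log(\Lambda(d,l)^{-2d})$ directly as $o(\sqrt{\log n})$, whereas the paper estimates $\log\log$ of the combined factor $d^d(3e)^{d^2}/\Lambda(d,l)^{2d}$ by $2\log(e(\gamma+1))\,d$ and then exponentiates twice; both land on the same estimate $\exp(\sqrt{\log n})$. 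Your explicit check that the accuracy condition $\epsilon \le (L_1-\mu)p/(2L_1\mu)$ of \thmref{Thm: Oracle Complexity} holds for large $n$ is a small detail the paper omits.
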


\propref{Prop: Comparison to GD and SGD} is proved in \secref{Proof of Prop Comparison to GD and SGD}. Under its assumptions, the LPI-GD algorithm beats any $\Theta(\poly(n))$ oracle complexity when $\tau$ is chosen to be large enough. Hence, one can also calculate interesting regimes where LPI-GD scales better than more refined gradient-based algorithms such as accelerated GD \cite{Nesterov1983}, mini-batch SGD \cite{Dekeletal2012}, variance reduced SGD \cite{JohnsonZhang2013}, etc. We do not explicitly carry out these calculations for brevity.

We close this section by making several remarks. Firstly, many of the refined gradient-based methods mentioned above lead to improvements in the dependence of oracle complexity bounds on the condition number. We do not consider or optimize the trade-off with condition number in our analysis; this is in many ways an orthogonal question. As indicated at the outset of this paper, our focus has been on beating GD and SGD's oracle complexities with regard to their dependence on $n$, $\epsilon$, $d$, and $p$.

Secondly, while methods like SGD or mini-batch SGD improve upon GD's oracle complexity (in terms of its trade-off between $n$ and $\epsilon$), they are stochastic algorithms that only generate $\epsilon$-approximate solutions on expectation. In contrast, the LPI-GD algorithm can beat GD and is completely deterministic.

Thirdly, since the LPI-GD algorithm has to learn the oracle at every iteration, the computational (or ``arithmetical'') complexity of its iterations may seem to be higher than usual iterative methods. However, the motivation for focusing on the concept of oracle complexity in optimization theory is partly that computing gradients can be extremely hard for certain functions. So, the computational complexity of the oracle usually dominates any other reasonable computation done during iterations. We, therefore, stick to this canonical paradigm in this work \cite{Nesterov2004}. If the precise running time of the first order oracle $\O$ is known, then the complete computational complexity of the LPI-GD algorithm can be derived from \propref{Prop: Number of Iterations} and \thmref{Thm: Oracle Complexity} by accounting for the running time of the $D \times D$ matrix inversions performed at every iteration.

Fourthly, we emphasize that \propref{Prop: Comparison to GD and SGD} holds in the important regime where $\alpha \leq 1$ and $\beta > 1$. The former condition ensures that SGD is favorable to GD (since $\lim_{n \rightarrow \infty}{\Gamma_*(\text{SGD})/\Gamma_*(\text{GD})} = 0$), which is often the case in practice. The latter condition ensures that the number of parameters $p$ can be much large than the number of training samples $n$, which corresponds to the popular \emph{overparametrized regime} considered in the context of deep neural networks, cf. \cite{PoggioBanburskiLiao2020}.

Finally, we note that the dependence of the bounds in \thmref{Thm: Oracle Complexity} and \propref{Prop: Comparison to GD and SGD} on both $p$ and $d$ could potentially be improved. The dependence on $p$ arises because \thmref{Thm: Chebyshev Norm Interpolation} holds for functions with codomain $\R$. However, the proof of \propref{Prop: Number of Iterations} in \secref{Proof of Prop Number of Iterations} needs to bound the $\ell^2$-norm error between \eqref{Eq: Approximation to Grad F} and the true gradient $\nabla_{\theta} F (\theta^{(t-1)})$ for $t \in \N$. Translating the coordinate-wise guarantee to an $\ell^2$-norm guarantee in this proof introduces the dependence on $p$ (see \eqref{Eq: p dependence}). If some version of \thmref{Thm: Chebyshev Norm Interpolation} could be proved for functions with codomain $\R^p$, then we could possibly improve the dependence of \thmref{Thm: Oracle Complexity} on $p$. 

On the other hand, the dependence on $d$ could be improved by tightening the minimum eigenvalue lower bound in \lemref{Lemma: Auxiliary Loewner Lower Bound} as explained in \secref{Proof of Prop Lowner Lower Bound}. Nevertheless, this eigenvalue lower bound must decay at least exponentially (and correspondingly, the grid size parameter $m$ must grow at least exponentially) in $d$, as noted after \thmref{Thm: Chebyshev Norm Interpolation} in \secref{Local Polynomial Regression}. Thus, it is intuitively straightforward to see that the best scaling of $d$ we can hope for, so that the conclusion of \propref{Prop: Comparison to GD and SGD} remains true, is $d = O(\log(n))$. This logarithmic scaling of data dimension $d$ with number of training samples $n$ is significant. Indeed, for many data science applications of interest, the useful information in a dataset of $n$ high-dimensional feature vectors is contained within the pairwise (Euclidean) distances between different feature vectors. In these scenarios, standard low-dimensional approximate isometric embedding theorems, e.g., the \emph{Johnson-Lindenstrauss lemma} (cf. \cite{DasguptaGupta2003} and references therein), ensure that the feature vectors can be mapped to a $d = O(\log(n))$ dimensional space such that the pairwise distances are preserved. We leave the development of our results along these directions as future work.

\section{Proofs for Local Polynomial Interpolation}
\label{Proofs for Local Polynomial Interpolation} 

In this section, we provide proofs for the results presented in \secref{Local Polynomial Regression}. Specifically, we first establish \propref{Prop: Lowner Lower Bound}, then portray some properties of the interpolation weights in \eqref{Eq: Weights}, and finally prove \thmref{Thm: Chebyshev Norm Interpolation}.

\subsection{Proof of \propref{Prop: Lowner Lower Bound}}
\label{Proof of Prop Lowner Lower Bound}

To prove \propref{Prop: Lowner Lower Bound}, we require the following intermediate lemma which generalizes \cite[Lemma 1.4]{Tsybakov2009} to our $d$-variate setting.

\begin{lemma}[Auxiliary L\"{o}wner Lower Bound]
\label{Lemma: Auxiliary Loewner Lower Bound}
When $l \geq d \geq 19$, the $D \times D$ symmetric matrix
$$ \mathcal{B} \triangleq \int_{[-1,1]^d}{U(u) U(u)^{\T} \diff{u}} $$
satisfies the L\"{o}wner lower bound
$$ \B \succeq \Lambda(d,l) I_D \, , $$
where $U : \R^d \rightarrow \R^D$ is defined in \eqref{Eq: Def of U}, and the constant $\Lambda(d,l) > 0$ is defined in \eqref{Eq: Lower bound def}.
\end{lemma}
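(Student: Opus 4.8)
The plan is to lower-bound the minimum eigenvalue of $\mathcal{B} = \int_{[-1,1]^d} U(u) U(u)^{\T}\,\diff{u}$ via the variational characterization $\lambda_{\mathsf{min}}(\mathcal{B}) = \min_{\|a\|_2 = 1} a^{\T} \mathcal{B} a$, and to observe that for any coefficient vector $a = (a_s)_{|s| \leq l} \in \R^D$ one has
\begin{equation*}
a^{\T} \mathcal{B} a = \int_{[-1,1]^d}{\left(\sum_{|s| \leq l}{a_s \frac{u^s}{s!}}\right)^{\!2}\diff{u}} = \|Q_a\|_{L^2([-1,1]^d)}^2 \, ,
\end{equation*}
where $Q_a(u) = \sum_{|s| \leq l} (a_s/s!)\, u^s$ is a $d$-variate polynomial of (coordinatewise, hence total) degree at most $l$. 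So the task is: show that for every nonzero polynomial $Q$ of degree $\leq l$ in $d$ variables, $\|Q\|_{L^2([-1,1]^d)}^2 \geq \Lambda(d,l) \sum_{|s| \leq l} |\text{coeff}_s(Q)|^2 (1/s!)^2$ — i.e. an inverse inequality comparing the $L^2$ norm of a polynomial to the $\ell^2$ norm of its (rescaled) coefficient vector, with an explicit constant. This is precisely where I expect the real work to lie, since in $d=1$ this is handled in \cite[Lemma 1.4]{Tsybakov2009} by a compactness/continuity argument that yields a non-explicit constant, whereas here we need the quantitative $\Lambda(d,l)$.

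The key steps, in order. First, pass to an orthonormal basis: the tensorized Legendre polynomials $\{P_s\}_{|s|_\infty \leq l}$ on $[-1,1]^d$ (products of univariate Legendre polynomials, suitably normalized) form an orthonormal system for $L^2([-1,1]^d)$, so $\|Q\|_{L^2}^2 = \sum_s |\langle Q, P_s\rangle|^2$. Second, express the monomials $u^s/s!$ in this Legendre basis; since $Q_a = \sum_{|s|\leq l} a_s u^s/s!$, the Legendre coefficients of $Q_a$ are obtained from $a$ by a fixed (upper-triangular in the degree ordering) change-of-basis matrix $M$, whose entries are the (known, explicit) coefficients in the expansion of monomials into Legendre polynomials. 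Thus $a^{\T}\mathcal{B} a = \|M a\|_2^2 \geq \lambda_{\mathsf{min}}(M^{\T} M)\,\|a\|_2^2 = \sigma_{\mathsf{min}}(M)^2 \|a\|_2^2$. Third — the crux — bound $\sigma_{\mathsf{min}}(M)$ from below by an explicit quantity. Because $M$ is triangular with respect to the lexicographic (or total-degree) ordering, its smallest singular value can be controlled from its diagonal entries (the leading coefficients relating $u^s/s!$ to the top Legendre polynomial $P_s$) together with a bound on the off-diagonal mass; the diagonal entries involve ratios like $2^{|s|}(s!)/(2s)!$-type factors (the inverse of the Legendre leading coefficient $\binom{2n}{n}/2^n$ in each variable, divided by $s!$), and the number of terms is $D = \binom{l+d}{d}$. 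Carefully estimating these factorial ratios and the combinatorial count, and crudely bounding the perturbation from off-diagonal terms (e.g. via a Gershgorin-type or Neumann-series argument on $M^{\T}M$), should produce a constant of exactly the shape in \eqref{Eq: Lower bound def}: the factor $(\pi d/(8e^2))^{d(\frac{l+d}{d})^d}$ plausibly arising from a Stirling estimate of $D = \binom{l+d}{d}$ raised to a power, the $((\frac{l+d}{d})^d - 1)^{(\cdots)-1}$ term from counting off-diagonal contributions, and the $(l+d)^{-3l(e(l+d)/d)^d}$ term from the worst-case factorial ratio on the diagonal raised to the power $D$.

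The main obstacle is getting the explicit constant right: one must track, simultaneously, (i) the univariate Legendre-to-monomial coefficients and their sizes, (ii) the tensorization over $d$ coordinates, which turns additive univariate bounds into products and hence exponents involving $d$, (iii) the count $D=\binom{l+d}{d}$, which itself grows super-polynomially, and (iv) the fact that a naive union/Gershgorin bound over $D$ terms degrades the constant to roughly (per-term bound)$^D$, explaining the triply-iterated exponents $(\frac{l+d}{d})^d$ appearing in \eqref{Eq: Lower bound def}. The hypothesis $l \geq d \geq 19$ will be used to make these crude estimates go through (e.g. to ensure certain ratios exceed $1$ so that exponentiation is monotone in the right direction, and to absorb lower-order terms). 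A secondary technical point is justifying that the change-of-basis matrix $M$ is genuinely invertible with the claimed triangular structure — this is immediate since $\{u^s\}$ and $\{P_s\}$ are both bases of the same polynomial space and the degree filtration is respected — but it must be stated cleanly before the singular-value estimate can even be posed. Once $\lambda_{\mathsf{min}}(\mathcal{B}) \geq \Lambda(d,l)$ is shown in this quantitative form, the statement $\mathcal{B} \succeq \Lambda(d,l) I_D$ follows immediately from the variational principle.
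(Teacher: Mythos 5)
Your setup is on the right track and shares the paper's key structural observation: pass to the tensorized Legendre basis and note that the monomial--to--Legendre change-of-basis matrix (your $M$, the transpose of the paper's lower-triangular factor $R$ satisfying $U(u) = R\,\L(u)$) is triangular, so that $\B = R R^{\T}$ and $\lambda_{\mathsf{min}}(\B) = \sigma_{\mathsf{min}}(R)^2$. You also correctly identify the diagonal entries of this triangular factor (involving the reciprocals of the univariate Legendre leading coefficients, divided by $s!$) as the crucial quantities.

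The gap is in your proposed final step. You suggest controlling $\sigma_{\mathsf{min}}(M)$ by ``diagonal entries plus a bound on the off-diagonal mass,'' e.g. via Gershgorin or a Neumann series on $M^{\T}M = \B$. This would not go through. For triangular matrices the diagonal controls eigenvalues but emphatically not singular values (consider $\left(\begin{smallmatrix}1 & N\\ 0 & 1\end{smallmatrix}\right)$, whose $\sigma_{\mathsf{min}} \approx 1/N$). And Gershgorin applied to $\B$ itself fails outright: already for $d=1$, $l=2$ the row indexed by $s=2$ has diagonal $\B_{2,2} = 1/10$ but off-diagonal entry $\B_{0,2} = 1/3 > 1/10$, so $\B$ is not diagonally dominant and the Gershgorin disks contain $0$. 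Any route that tries to beat down the off-diagonal mass of $\B$ or of $M$ head-on will either fail or produce a far worse constant than $\Lambda(d,l)$.

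What the paper does instead entirely avoids the off-diagonals. Since $R$ is triangular, the \emph{determinant} satisfies $\det(\B) = \det(R)^2 = \prod_{s}R_{s,s}^2$ exactly --- only the diagonal entries matter, with no perturbation argument needed. The paper then invokes the AM--GM-based eigenvalue inequality $\lambda_{\mathsf{min}}(\B) \geq \det(\B)\bigl((D-1)/\tr(\B)\bigr)^{D-1}$, so that the only other quantity required is $\tr(\B)$, which is bounded directly from the explicit entries $\B_{s,s} = \frac{2^d}{(s!)^2}\prod_j(2s_j+1)^{-1}$ by $(2e)^d$ via the multinomial theorem. This also corrects your attribution of the factors in \eqref{Eq: Lower bound def}: the $(D-1)^{(\ldots)-1}$ term comes from the $(D-1)^{D-1}$ in the eigenvalue inequality, the $(l+d)^{-3l(\ldots)}$ term comes from lower-bounding the product of diagonal entries $\prod_s R_{s,s}^2$ (the determinant), and the $\bigl(\pi d/(8e^2)\bigr)^{d(\ldots)}$ term combines a Catalan-number bound used in estimating the $R_{s,s}$, an isoperimetric/majorization estimate for the factorial products, and the $(2e)^{-d(D-1)}$ coming from the trace bound --- none of it comes from Stirling applied to $D = \binom{l+d}{d}$. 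Replacing your off-diagonal perturbation step with this determinant--trace route would turn your outline into the paper's argument.
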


\begin{proof}
We begin by establishing some simple facts about $\B$. Following the proof of \cite[Lemma 1.4]{Tsybakov2009} mutatis mutandis, note that $\B$ is positive semidefinite:
$$ \forall v \in \R^{D}, \enspace v^{\T} \B v = \int_{[-1,1]^d}{\left(v^{\T} U(u)\right)^2 \diff{u}} \geq 0 \, . $$ 
Suppose for the sake of contradiction that there exists a non-zero vector $v \in \R^{D}$ such that 
$$ v^{\T} \B v = \int_{[-1,1]^d}{\left(v^{\T} U(u)\right)^2 \diff{u}} = 0 \, . $$ 
Then, we must have $v^{\T} U(u)= 0$ almost everywhere (with respect to the Lebesgue measure) on $[-1,1]^d$. However, since the non-zero polynomial $u \mapsto v^{\T} U(u)$ is analytic, its set of roots has Lebesgue measure zero (see, e.g., \cite[Section 4.1]{KrantzParks2002}). This is a contradiction. So, we actually have $v^{\T} \B v > 0$ for all non-zero vectors $v \in \R^{D}$. Thus, $\B$ is positive definite. From hereon, our proof deviates greatly from that of \cite[Lemma 1.4]{Tsybakov2009}.

For convenience in the sequel, we compute the entries of $\B$. Since the rows and columns of $\B$ are indexed by $\{s \in \Z_+^d : |s| \leq l\}$, for any $r,s \in \Z_+^d$ with $|r|,|s| \leq l$, we have
\begin{align}
\B_{r,s} & = \int_{[-1,1]^d}{\frac{u^{r+s}}{r! s!} \diff{u}} \nonumber \\
& = \frac{2^d}{r! s!} \int_{[-1,1]^d}{\frac{1}{2^d} \prod_{j = 1}^{d}{u_j^{r_j + s_j}} \diff{u}} \nonumber \\
& = \frac{2^d}{r! s!} \prod_{j = 1}^{d}{\E\!\left[X_j^{r_j + s_j} \right]} \nonumber \\
& = \I\!\left\{r + s \text{ is even entry-wise}\right\} \frac{2^d}{r! s!} \prod_{j = 1}^{d}{\frac{1}{r_j + s_j + 1}} 
\label{Eq: Entries of B}
\end{align}
where $X_1,\dots,X_d \in [-1,1]$ are independent and identically distributed (i.i.d.) uniform random variables, and the $k$th moment of $X_1$ is $\E[X^k] = 0$ for all odd $k \in \Z_+$ and $\E[X^k] = (k+1)^{-1}$ for all even $k \in \Z_+$, cf. \cite[Chapter 40]{Forbesetal2011}. Thus, $\B$ is a doubly non-negative matrix with all rational entries. 

We next lower bound the strictly positive minimum eigenvalue of $\B$. To this end, we first upper bound the trace of $\B$, and then lower bound the determinant of $\B$. To upper bound $\tr(\B)$, observe that
\begin{align}
\tr(\B) & = \sum_{s \in \Z_+^d : \, |s| \leq l}{\B_{s,s}} \nonumber \\
& = 2^d \sum_{s \in \Z_+^d : \, |s| \leq l}{\frac{1}{(s!)^2} \prod_{j = 1}^{d}{\frac{1}{2 s_j + 1}}} \nonumber \\
& \leq 2^d \sum_{k = 0}^{l}{\frac{1}{k!} \sum_{s \in \Z_+^d : \, |s| = k}{\frac{k!}{s!}}} \nonumber \\
& = 2^d \sum_{k = 0}^{l}{\frac{d^k}{k!}} \nonumber \\
& \leq (2e)^d \, ,
\label{Eq: Trace upper bound}
\end{align} 
where the second equality uses \eqref{Eq: Entries of B}, the fourth equality follows from the multinomial theorem, and the fifth inequality uses the Maclaurin series of the exponential function. (Note that the exponential scaling of $\tr(\B)$ is tight, because $\tr(\B) \geq \B_{(0,\dots,0),(0,\dots,0)} = 2^d$.)

To lower bound $\det(\B)$, we will first compute it exactly. To do this, we introduce the \emph{Legendre polynomials}, $\{L_k : [-1,1] \rightarrow \R \, | \, k \in \Z_+\}$, which are defined by Rodrigues' formula \cite[Chapter V, Equation (2.1)]{Chihara2011}:
\begin{equation}
\forall t \in [-1,1], \enspace L_k(t) \triangleq \frac{\sqrt{k + \frac{1}{2}}}{2^k k!} \frac{\diff{}^k}{\diff{t}^k} (t^2 - 1)^k \, , 
\end{equation}
where $L_k$ is a polynomial with degree $k$.\footnote{Alternatively, the Legendre polynomials can be obtained by applying the Gram-Schmidt process to the monomials $1,t,t^2,\dots$ with respect to the standard inner product, using a three-term recurrence relation, solving Legendre's differential equation, or expanding certain generating functions \cite{Chihara2011}.} It is well-known that the family of Legendre polynomials satisfies the orthonormality property \cite[Chapter I, Exercise 1.6]{Chihara2011}:
\begin{equation}
\label{Eq: Univariate Orthogonality}
\forall j,k \in \Z_+, \enspace \int_{-1}^{1}{L_j(t) L_k(t) \diff{t}} = \I\{j = k\} \, . 
\end{equation}
Moreover, for each $k \in \Z_+$, the leading coefficient (of $t^k$) of $L_k(t)$ is known to be \cite[Chapter V, Equation (2.7)]{Chihara2011}: 
\begin{equation}
\label{Eq: Univariate Leading Coeff}
\frac{\sqrt{k + \frac{1}{2}}}{2^k} \binom{2k}{k} \, . 
\end{equation}
Next, for any $r \in \Z_+^d$, define the \emph{Legendre product polynomial}:
\begin{equation}
\forall u \in [-1,1]^d, \enspace L_{r}(u) \triangleq \prod_{j = 1}^{d}{L_{r_j}(u_j)} 
\end{equation}
which has degree $r$ (with abuse of notation). It is straightforward to verify that these product polynomials also satisfy an orthonormality condition:
\begin{equation}
\label{Eq: Multivariate Orthogonality}
\forall r,s \in \Z_+^d, \enspace \int_{[-1,1]^d}{L_r(u) L_s(u) \diff{u}} = \prod_{j = 1}^{d}{\int_{-1}^{1}{L_{r_j}(u_j)L_{s_j}(u_j) \diff{u_j}}} = \I\{r = s\} \, , 
\end{equation}
where we utilize \eqref{Eq: Univariate Orthogonality}. Moreover, for each $r \in \Z_+^d$, the leading coefficient (of $u^r$) of $L_r(u)$ is known to be
\begin{equation}
\label{Eq: Multivariate Leading Coeff}
\prod_{j = 1}^{d}{\frac{\sqrt{r_j + \frac{1}{2}}}{2^{r_j}} \binom{2 r_j}{r_j}} = \frac{1}{2^{|r|}} \prod_{j = 1}^{d}{\sqrt{r_j + \frac{1}{2}} \binom{2 r_j}{r_j}} \, , 
\end{equation}
where we utilize \eqref{Eq: Univariate Leading Coeff}. Due to \eqref{Eq: Multivariate Orthogonality}, we know that for any $r \in \Z_+^d$, the set $\{L_s : s \in \Z_+^d, \, s \leq r\}$ is linearly independent and spans the set of all $d$-variate polynomials with degree at most $r$. (Note that we use the lexicographical order over indices in $\Z_+^d$ here. Furthermore, it is straightforward to verify that all $d$-variate monomials with non-zero coefficients that appear in $L_s$ have degree at most $s$.) As a result, defining the vector(-valued) function $\L : \R^d \rightarrow \R^{D}$ as:
\begin{equation}
\forall u \in [-1,1]^d, \enspace \L(u) \triangleq \left[L_s(u) : s \in \Z_+^d, \, |s| \leq l \right]^{\T} , 
\end{equation} 
which is indexed in lexicographical order, we have the linear relation:
\begin{equation}
\label{Eq: QR relation}
\forall u \in [-1,1]^d, \enspace U(u) = R \L(u) 
\end{equation}
for some fixed matrix $R \in \R^{D \times D}$ (that does not depend on $u$). It is evident that this matrix $R$ is lower triangular due to the lexicographical order we have imposed. Furthermore, it is easy to derive the diagonal entries of $R$. Indeed, for any index $s \in \Z_+^d$ with $|s| \leq l$, equating the leading coefficients on both sides of \eqref{Eq: QR relation} produces 
$$ \frac{1}{s!} = R_{s,s} \frac{1}{2^{|s|}} \prod_{j = 1}^{d}{\sqrt{\frac{2 s_j + 1}{2}} \binom{2 s_j}{s_j}} $$
using \eqref{Eq: Multivariate Leading Coeff}. Rearranging this, we get:
\begin{equation}
\label{Eq: Diagonal Cholesky}
\forall s \in \Z_+^d \text{ with } |s| \leq l, \enspace R_{s,s} = \frac{2^{|s|} 2^{d/2}}{s!} \prod_{j = 1}^{d}{\frac{1}{ \sqrt{2 s_j + 1}} \binom{2 s_j}{s_j}^{\! -1}} \, . 
\end{equation}
Now notice that
\begin{align}
\mathcal{B} & = \int_{[-1,1]^d}{U(u) U(u)^{\T} \diff{u}} \nonumber \\
& = R \left(\int_{[-1,1]^d}{\L(u) \L(u)^{\T} \diff{u}} \right) R^{\T} \nonumber \\
& = R I_D R^{\T} \nonumber \\
& = R R^{\T}
\label{Eq: Cholesky decomp}
\end{align}
where the second equality follows from \eqref{Eq: QR relation}, and the third equality follows from \eqref{Eq: Multivariate Orthogonality}. We recognize \eqref{Eq: Cholesky decomp} as the \emph{Cholesky decomposition} of $\B$ \cite[Corollary 7.2.9]{HornJohnson2013}. Thus, using \eqref{Eq: Cholesky decomp}, we can write
\begin{align}
\det(\B) & = \prod_{s \in \Z_+^d: \, |s| \leq l}{R_{s,s}^2} \nonumber \\
& = \prod_{s \in \Z_+^d: \, |s| \leq l}{ \underbrace{4^{|s|} \prod_{i = 1}^{d}{\binom{2 s_i}{s_i}^{\! -2}}}_{\leq \, 1} \cdot \underbrace{\frac{2^{d}}{(s!)^2} \prod_{j = 1}^{d}{\frac{1}{2 s_j + 1}}}_{= \, \B_{s,s}}} \nonumber \\
& \geq \prod_{s \in \Z_+^d: \, |s| \leq l}{\frac{(\pi/2)^{d}}{4^{|s|} (s!)^2} \underbrace{\prod_{j = 1}^{d}{\frac{4s_j + 1}{2 s_j + 1}}}_{\geq \, 1}} \nonumber \\
& \geq \left(\frac{\pi}{2}\right)^{\! d D} \left(\prod_{s \in \Z_+^d: \, |s| \leq l}{\frac{1}{2^{|s|} s!}}\right)^{\! 2} \nonumber \\
& = \left(\frac{\pi}{2}\right)^{\! d D} \frac{1}{4^{\sum_{i = 0}^{l}{i \binom{i + d - 1}{d - 1}}}} \left(\prod_{j = 0}^{l}{\prod_{s \in \Z_+^d: \, |s| = j}{\frac{1}{s_1! \cdots s_d!}}}\right)^{\! 2} \nonumber \\
& \geq \left(\frac{\pi}{2 e^2}\right)^{\! d D} \left(\frac{e^2}{4}\right)^{\! \sum_{i = 0}^{l}{i \binom{i + d - 1}{d - 1}}}\prod_{j = 0}^{l}{\prod_{s \in \Z_+^d: \, |s| = j}{\frac{1}{(s_1 + 1)^{2s_1 + 1} \cdots (s_d + 1)^{2s_d + 1}}}} \nonumber \\
& \geq \left(\frac{\pi}{4 e}\right)^{\! d D} \prod_{j = 0}^{l}{\prod_{s \in \Z_+^d: \, |s| = j}{\frac{1}{(s_1 + 1)^{2s_1 + 1} \cdots (s_d + 1)^{2s_d + 1}}}} \nonumber \\
& \geq \left(\frac{\pi}{4 e}\right)^{\! d D} \prod_{j = 0}^{l}{\left(\frac{d^d}{(j+d)^d (j+1)^{2 j}}\right)^{\!\binom{j + d - 1}{d - 1}}} \nonumber \\
& \geq \left(\frac{\pi d}{4 e}\right)^{\! d D} \frac{1}{(l+d)^{\sum_{j = 0}^{l}{(2 j + d)\binom{j + d - 1}{d - 1}}}} \nonumber \\
& \geq \left(\frac{\pi d}{4 e}\right)^{\! d D} \frac{1}{(l+d)^{3lD}} \, ,
\label{Eq: Det lower bound}
\end{align}
where the second equality follows from \eqref{Eq: Diagonal Cholesky}, the third inequality follows from the following bound (on Catalan numbers; see \cite[Theorem]{DuttonBrigham1986}):
$$ \forall j \in \Z_+, \enspace \binom{2j}{j}^{\! 2} < \frac{4^{2j+1}}{\pi(4j + 1)} \, , $$
the fifth equality follows from adding the exponents $|s|$ of $2$ over all $s \in \Z_+^d$ with $|s| \leq l$ and then squaring, the sixth inequality follows from the Stirling's formula bound (see, e.g., \cite[Chapter II, Section 9, Equation (9.15)]{Feller1968}):\footnote{Note that the usual upper bound of Stirling's approximation is $j! \leq e \sqrt{j} (j/e)^{j}$ for all $j \in \N$. In order to include the case $j = 0$, we loosen this bound in our analysis.} 
$$ \forall j \in \Z_+, \enspace j! \leq (j+1)^{j+\frac{1}{2}} e^{-j+1} \, , $$
the seventh inequality follows from the straightforward bound:
$$ \sum_{i = 0}^{l}{i \binom{i + d - 1}{d - 1}} \geq l \binom{l + d - 1}{d - 1} = \frac{d l}{l+d} \binom{l + d}{d} \geq \frac{d}{2} \binom{l + d}{d} = \frac{d D}{2} $$
which uses the fact that $l \geq d$, the eighth inequality uses the following basic facts and calculations:
\begin{enumerate}
\item The hypercube maximizes the volume over all $d$-orthotopes with an isoperimetric constraint:\footnote{This is a simple consequence of the arithmetic mean--geometric mean (AM--GM) inequality.}
$$ \max_{\substack{s_1,\dots,s_d \geq 0:\\s_1 + \cdots + s_d = j}}{(s_1+1) \cdots (s_d+1)} = \left(\frac{j}{d} + 1\right)^{\! d} $$
where $j \in \Z_+$ and the maximum is achieved by $s_1 = \cdots = s_d = \frac{j}{d}$;  
\item For any $j \in \Z_+$ and any $s \in \Z_+^d$ with $|s| = j$, the vector $(j,0,\dots,0) \in \Z_+^d$ majorizes the vector $s$, and since the map $0 \leq t \mapsto 2 t \log(t + 1)$ is convex, Karamata's majorization inequality yields (cf. \cite[Chapter 1, Section A]{MarshallOlkinArnold2011})
$$ \sum_{i = 1}^{d}{2 s_i \log(s_i + 1)} \leq 2 j \log(j + 1) \, , $$
which implies that
$$ \prod_{i = 1}^{d}{(s_i + 1)^{2 s_i}} \leq (j+1)^{2j} $$
using the monotonicity of the exponential function;
\end{enumerate} 
and the tenth inequality follows from the simple bound:
$$ \sum_{j = 0}^{l}{(2j + d) \binom{j + d - 1}{d - 1}} \leq (2 l + d) \sum_{j = 0}^{l}{\binom{j + d - 1}{d - 1}} = (2 l + d)D \leq 3 l D $$
which uses the fact that $l \geq d$. 

Finally, recall the following well-known result, cf. \cite[Lemma 1]{MerikoskiVirtanen1997}:
\begin{equation}
\label{Eq: Eigenvalue lower bound}
\lambda_{\mathsf{min}}(\B) \geq \det(\B) \left(\frac{D-1}{\tr(B)}\right)^{\! D-1} \, ,
\end{equation}
which utilizes the positive definiteness of $\B$ established earlier.\footnote{This bound can also be proved using the AM--GM inequality.} Combining \eqref{Eq: Trace upper bound}, \eqref{Eq: Det lower bound}, and \eqref{Eq: Eigenvalue lower bound}, we obtain
\begin{align}
\lambda_{\mathsf{min}}(\B) & \geq \left(\frac{\pi d}{4 e}\right)^{\! d D} \frac{(D-1)^{D-1}}{(l+d)^{3lD} (2e)^{d(D-1)}} \nonumber \\
& \geq \left(\frac{\pi d}{8 e^2}\right)^{\! d D} \frac{(D-1)^{D-1}}{(l+d)^{3lD}} \nonumber \\
& \geq \left(\frac{\pi d}{8 e^2}\right)^{\! d \left(\frac{l+d}{d}\right)^{\! d}} \left(\left(\frac{l+d}{d}\right)^{\! d} - 1 \right)^{\left(\frac{l+d}{d}\right)^{\! d} - 1} \frac{1}{(l+d)^{3l \left(\frac{e(l+d)}{d}\right)^{\! d}}} \nonumber \\
& = \Lambda(d,l)
\label{Eq: Intermediate eigenvalue bound}
\end{align}
where the third inequality holds because $d \geq 19$, the map $1 \leq t \mapsto t^t$ is strictly increasing, and we utilize the ensuing standard bounds on binomial coefficients:
$$ \left(\frac{l+d}{d}\right)^{\! d} \leq D = \binom{l + d}{d} \leq \left(\frac{e(l+d)}{d}\right)^{\! d} , $$
and the final equality follows from \eqref{Eq: Lower bound def}. Since $\B \succeq \lambda_{\mathsf{min}}(\B) I_D$, applying the lower bound in \eqref{Eq: Intermediate eigenvalue bound} completes the proof.
\qed
\end{proof}

We make three pertinent remarks at this point. Firstly, we note that when $d \leq l \leq \gamma d$ for some constant $\gamma > 1$, one can verify that 
\begin{equation}
\Lambda(d,l) \geq \left(\frac{\pi d}{8 e^2}\right)^{\! d 2^{d}} \frac{\left(2^{d} - 1 \right)^{2^{d} - 1}}{((\gamma+1)d)^{3 \gamma d (e(\gamma+1))^{d}}} \, . 
\end{equation}
Hence, $\Lambda(d,l)^{-1}$ scales double exponentially in $d$:
\begin{equation}
\label{Eq: DE scaling}
\frac{1}{\Lambda(d,l)} = O\!\left(\exp\!\left(e^{\Theta(\poly(d))}\right)\right) .
\end{equation}
This observation will be used in the proof of \propref{Prop: Comparison to GD and SGD} in \secref{Proof of Prop Comparison to GD and SGD}. 

Secondly, we note that the (crude) scaling with $d$ of the lower bound on $\lambda_{\mathsf{min}}(\B)$ deduced in \eqref{Eq: DE scaling} can also be derived by an alternative known approach. Define the constant $C \triangleq ((2 l + 1)!)^d (l!)^2$, and notice that $C \B$ is an non-negative integer-valued matrix due to \eqref{Eq: Entries of B}. Indeed, for any $r,s \in \Z_+^d$ with $|r|,|s| \leq l$, we have
\begin{equation}
C \B_{r,s} = \I\!\left\{r + s \text{ is even entry-wise}\right\} 2^d l(l-1) \cdots (|r|+1) l(l-1) \cdots (|s|+1) \frac{|r|!}{r!} \frac{|s|!}{r!} \prod_{j = 1}^{d}{\frac{(2 l + 1)!}{r_j + s_j + 1}} , 
\end{equation}
which belongs to $\Z_+$, because $|r|!/r!$ and $|s|!/s!$ are multinomial coefficients, and the denominator divides the numerator in each term of the product. Since $C \B$ is both positive definite and non-negative integer-valued, its determinant is a positive integer using the Leibniz formula \cite[Section 0.3.2]{HornJohnson2013}, viz. $\det(C \B) \geq 1$. Hence, the multilinearity of the determinant (cf. \cite[Section 0.3.6]{HornJohnson2013}) yields the lower bound
\begin{equation}
\label{Eq: Det lower bound 2}
\det(\B) \geq \frac{1}{C^{D}} \, . 
\end{equation}
As in the proof of \lemref{Lemma: Auxiliary Loewner Lower Bound}, combining \eqref{Eq: Trace upper bound}, \eqref{Eq: Eigenvalue lower bound}, and \eqref{Eq: Det lower bound 2}, we obtain
\begin{align}
\lambda_{\mathsf{min}}(\B) & \geq \frac{(D-1)^{D-1}}{(((2 l + 1)!)^d (l!)^2)^D (2e)^{d(D-1)}} \nonumber \\
& \geq \frac{(D-1)^{D-1}}{(2 e (2 l + 1)!)^{(d + 2)D}} \, .
\label{Eq: Intermediate eigenvalue bound 2}
\end{align}
Letting $l = \Theta(d)$ and applying Stirling's approximation to the above factorial term, one can verify that the lower bound in \eqref{Eq: Intermediate eigenvalue bound 2} also scales double exponentially in $d$. Despite the simplicity of this argument, we prefer the analysis of $\det(\B)$ in the proof of \lemref{Lemma: Auxiliary Loewner Lower Bound}, because it exactly evaluates $\det(\B)$ before lower bounding it, thereby illustrating why the derived scaling of $\det(\B)$ with $d$ and $l$ is not loose. Moreover, the lower bound on $\lambda_{\mathsf{min}}(\B)$ in \eqref{Eq: Intermediate eigenvalue bound} is generally tighter than that in \eqref{Eq: Intermediate eigenvalue bound 2}. 
 
Thirdly, we also conjecture that the scaling in \eqref{Eq: DE scaling} can be significantly improved to something like $O(d^{\Theta(\poly(d))})$. Indeed, using the \emph{Courant-Fischer-Weyl min-max theorem} \cite[Theorem 4.2.6]{HornJohnson2013} (also see the \emph{Schur-Horn theorem} \cite[Theorem 4.3.45]{HornJohnson2013}), the minimum eigenvalue of $\B$ is upper bounded by the minimum diagonal entry of $\B$ as follows:
\begin{align}
\lambda_{\mathsf{min}}(\B) & \leq \min_{s \in \Z_+^d : \, |s| \leq l}{\frac{2^d}{(s!)^2} \prod_{j = 1}^{d}{\frac{1}{2 s_j + 1}}} \nonumber \\
& \leq \frac{2^{d}}{(2 l + 1) (l!)^2} \nonumber \\
& \leq \frac{2^{d-1} e^{2 l}}{\pi (2 l + 1) l^{2l+1}} \nonumber \\
& \leq \frac{2^{d-2} e^{2 l}}{\pi l^{2l+2}} \, ,
\end{align}
where the second inequality follows from setting $s_1 = l$ and $s_2 = \cdots = s_d = 0$ (which is reasonable because it asymptotically achieves the minimum), and the third inequality follows from the Stirling's formula bound (cf. \cite[Chapter II, Section 9, Equation (9.15)]{Feller1968}). Thus, $\lambda_{\mathsf{min}}(\B)^{-1}$ has the potential to scale as
\begin{equation}
\frac{\pi l^{2l+2}}{2^{d-2} e^{2 l}} = O\!\left(d^{\Theta(\poly(d))}\right)
\end{equation}
when $l = \Theta(\poly(d))$. We believe that this scaling is likely to be close to the actual scaling of $\lambda_{\mathsf{min}}(\B)^{-1}$. Since we exactly evaluate the trace and determinant in the proof of \lemref{Lemma: Auxiliary Loewner Lower Bound}, the looseness of our minimum eigenvalue lower bound in \lemref{Lemma: Auxiliary Loewner Lower Bound} stems from the AM--GM based result in \eqref{Eq: Eigenvalue lower bound}. Hence, a fruitful future direction would be to obtain tighter bounds on the minimum eigenvalue in terms of trace and determinant that are still sufficiently tractable so as to permit analytical evaluation.  

We now establish \propref{Prop: Lowner Lower Bound} using \lemref{Lemma: Auxiliary Loewner Lower Bound}.

\begin{proof}[of \propref{Prop: Lowner Lower Bound}]
We take inspiration from the proof technique of \cite[Lemma 1.5]{Tsybakov2009}, but the details of our analysis are more involved and quite different. Observe that for all $m \in \N$, for all $x \in [h,1-h]^d$, and for all $v \in \R^{D}$ with $\|v\|_{2} = 1$, we have
\begin{align*}
v^{\T} B(x) v & = v^{\T}\left(\frac{1}{(m h)^d} \sum_{y \in \G_m}{ U\!\left(\frac{y-x}{h}\right) U\!\left(\frac{y-x}{h}\right)^{\! \T} \prod_{j = 1}^{d}{K\!\left(\frac{y_j - x_j}{h}\right)}}\right) v \\
& = v^{\T}\left(\frac{1}{(m h)^d} \sum_{\substack{z \in [-1,1]^d :\\ x + hz \in \G_m}}{ U(z) U(z)^{\T} \prod_{j = 1}^{d}{K(z_j)}}\right) v \\
& \geq b^d v^{\T} \tilde{B}(x) v 
\end{align*}
where the first equality follows from \eqref{Eq: B-matrix}, the second equality follows from the substitution $y = x + hz$ and the fact that the kernel $K$ has support $[-1,1]$, the third inequality holds because the kernel $K$ is lower bounded by $b > 0$ on $[-1,1]$, and we define $\tilde{B}(x)$ as the $D \times D$ symmetric positive semidefinite matrix
$$ \tilde{B}(x) \triangleq \sum_{\substack{z \in [-1,1]^d :\\ x + hz \in \G_m}}{U(z) U(z)^{\T} \frac{1}{(m h)^d}} \, . $$
Then, applying the \emph{Courant-Fischer-Weyl min-max theorem} \cite[Theorem 4.2.6]{HornJohnson2013}, we obtain
\begin{equation}
\label{Eq: Min Eval Domination}
\lambda_{\mathsf{min}}(B(x)) \geq b^d \lambda_{\mathsf{min}}\big(\tilde{B}(x)\big) \, .
\end{equation}

We proceed to lower bounding $\lambda_{\mathsf{min}}\big(\tilde{B}(x)\big)$. Fix any $x \in [h,1-h]^d$, and any $r,s \in \Z_+^d$ with $|r|,|s| \leq l$, and consider the Riemann sum
\begin{align*}
\tilde{B}(x)_{r,s} & = \frac{1}{r! s!}\sum_{\substack{z \in [-1,1]^d :\\ x + hz \in \G_m}}{\frac{z^{r+s}}{(m h)^d}} \, . 
\end{align*}
Since $x \in [h,1-h]^d$, it is easy to verify that the points $\{z \in [-1,1]^d : x + hz \in \G_m\}$ partition the hypercube $[-1,1]^d$ into a grid of $d$-cells $\{I_z \subseteq [-1,1]^d : z \in [-1,1]^d \, , x + hz \in \G_m\}$ with side lengths $(m h)^{-1}$ and volume $(m h)^{-d}$ each, where $I_z$ is the $d$-cell with tag $z$. Moreover, since the polynomial function $\gamma:[-1,1]^d \rightarrow \R$, $\gamma(u) = u^{r + s}$ is Riemann integrable, if $h = \omega(m^{-1})$ with all other problem parameters fixed, then the above Riemann sum converges to an entry of the matrix $\B$ defined in \lemref{Lemma: Auxiliary Loewner Lower Bound} (cf. \cite[Section 14.3]{Apostol1974}):
$$ \lim_{m \rightarrow \infty}{\frac{1}{r! s!} \sum_{\substack{z \in [-1,1]^d :\\ x + hz \in \G_m}}{\frac{z^{r+s}}{(m h)^d}}} = \int_{[-1,1]^d}{\frac{u^{r+s}}{r! s!}  \diff{u}} = \B_{r,s} \, . $$
Therefore, we have shown that $\lim_{m \rightarrow \infty}{\tilde{B}(x)_{r,s}} = \B_{r,s}$ when $h = \omega(m^{-1})$. We will require an explicit upper bound on the rate of this convergence in the sequel. To this end, let us first compute the Lipschitz constant of $\gamma$ with respect to the $\ell^{\infty}$-norm. For any $i \in [d]$ and any fixed $(u_1,\dots,u_{i-1},u_{i+1},\dots,u_d) \in [-1,1]^{d-1}$, notice that for every $u_i,\tilde{u}_i \in [-1,1]$,
\begin{align*}
& \left|\gamma(u) - \gamma(u_1,\dots,u_{i-1},\tilde{u}_i,u_{i+1},\dots,u_d)\right| \\
& \qquad \qquad \qquad = \left|u_1^{r_1 + s_1} \cdots u_{i-1}^{r_{i-1}+s_{i-1}} u_{i+1}^{r_{i+1} + s_{i+1}} \cdots u_d^{r_d + s_d}\right| \left|u_i^{r_i + s_i} - \tilde{u}_i^{r_i + s_i}\right| \\
& \qquad \qquad \qquad \leq (r_i + s_i) |u_i - \tilde{u}_i| \, .
\end{align*}
Hence, via repeated application of the triangle inequality, we have
\begin{align}
\forall u,\tilde{u} \in [-1,1]^{d}, \enspace \left|\gamma(u) - \gamma(\tilde{u})\right| = \left|u^{r+s} - \tilde{u}^{r+s}\right| & \leq \sum_{i = 1}^{d}{(r_i + s_i) |u_i - \tilde{u}_i|} \nonumber \\
& \leq (|r|+|s|) \left\|u - \tilde{u}\right\|_{\infty} .
\label{Eq: L^inf Lip Cond}
\end{align}
Now observe that for all $r,s \in \Z_+^d$ with $|r|,|s| \leq l$,
\begin{align*}
\left|\tilde{B}(x)_{r,s} - \B_{r,s} \right| & = \frac{1}{r! s!} \left|\sum_{\substack{z \in [-1,1]^d :\\ x + hz \in \G_m}}{\frac{z^{r+s}}{(m h)^d} - \int_{I_z}{u^{r+s}  \diff{u}}} \right| \\ 
& \leq \frac{1}{r! s!} \sum_{\substack{z \in [-1,1]^d :\\ x + hz \in \G_m}}{\left|\frac{z^{r+s}}{(m h)^d} - \int_{I_z}{u^{r+s} \diff{u}}\right|} \\
& \leq \frac{1}{r! s!} \sum_{\substack{z \in [-1,1]^d :\\ x + hz \in \G_m}}{\frac{1}{(m h)^d}\left(\sup_{u \in I_z}{u^{r+s}} - \inf_{\tilde{u} \in I_z}{\tilde{u}^{r+s}}\right)} \\
& \leq \frac{|r|+|s|}{m h r! s!} \sum_{\substack{z \in [-1,1]^d :\\ x + hz \in \G_m}}{\frac{1}{(m h)^{d}}} \\
& \leq \frac{|r|+|s|}{mh r! s!} \left(2 + \frac{1}{m h}\right)^{\! d} \\
& \leq \frac{3^d (|r|+|s|)}{mh r! s!} \, , 
\end{align*}
where the second inequality follows from the triangle inequality, the fourth inequality follows from \eqref{Eq: L^inf Lip Cond}, the fifth inequality holds because $|\{z \in [-1,1]^d : x + hz \in \G_m\}| \leq (2 m h + 1)^d$ (since the interval $[-1,1]$ can have at most $2mh + 1$ points at distance $(mh)^{-1}$ from each other), and the sixth inequality holds because we assume that $m \in \N$ and $h \in \big(0,\frac{1}{2}\big)$ are such that $m h \geq 4 l (3e)^d/\Lambda(d,l) \geq 1$. This implies that for all such $m$ and $h$, 
\begin{align}
\left\| \tilde{B}(x) - \B \right\|_{\F} & \leq \sqrt{\sum_{r,s \in \Z_+^d: \, |r|,|s|\leq l}{\frac{9^d (|r|+|s|)^2}{m^2 h^2 (r! s!)^2}}} \nonumber \\
& \leq \frac{2 \, l 3^d}{m h} \sum_{r \in \Z_+^d: \, |r|\leq l}{\frac{1}{(r!)^2}} \nonumber \\
& \leq \frac{2 \, l 3^d}{m h} \sum_{j = 0}^{l}{\frac{1}{j!}\sum_{r \in \Z_+^d: \, |r| = j}{\frac{j!}{r!}}} \nonumber \\
& = \frac{2 \, l 3^d}{m h} \sum_{j = 0}^{l}{\frac{d^j}{j!}} 
\nonumber \\
& \leq \frac{2 \, l (3e)^d}{m h}
\label{Eq: Fro bound}
\end{align}
where the fourth equality follows from the multinomial theorem, and the fifth inequality uses the Maclaurin series of the exponential function. Next, recall the following consequence of the \emph{Wielandt-Hoffman-Mirsky inequality}, cf. \cite[Corollary 6.3.8]{HornJohnson2013}:
\begin{equation}
\label{Eq: WHM Inequality}
\left|\lambda_{\mathsf{min}}\big(\tilde{B}(x)\big) - \lambda_{\mathsf{min}}(\B)\right| \leq \left\| \tilde{B}(x) - \B \right\|_{\F} . 
\end{equation}
Then, combining \eqref{Eq: Fro bound} and \eqref{Eq: WHM Inequality} produces the lower bound
$$ \lambda_{\mathsf{min}}\big(\tilde{B}(x)\big) \geq \lambda_{\mathsf{min}}(\B) - \frac{2 \, l (3e)^d}{m h} \, , $$
and employing \lemref{Lemma: Auxiliary Loewner Lower Bound} to this bound yields
\begin{equation}
\label{Eq: Intermediate LB on Eval}
\lambda_{\mathsf{min}}\big(\tilde{B}(x)\big) \geq \Lambda(d,l) - \frac{2 \, l (3e)^d}{m h} \, , 
\end{equation}
where $\Lambda(d,l)$ is defined in \eqref{Eq: Lower bound def}.

Finally, since $m$ and $h$ satisfy $m h \geq 4 l (3e)^d/\Lambda(d,l)$, or equivalently,  
$$ \frac{2 \, l (3e)^d}{m h} \leq \frac{\Lambda(d,l)}{2} \, , $$
using \eqref{Eq: Min Eval Domination} and \eqref{Eq: Intermediate LB on Eval}, we get
$$ \lambda_{\mathsf{min}}(B(x)) \geq b^d \lambda_{\mathsf{min}}\big(\tilde{B}(x)\big) \geq \frac{b^d \Lambda(d,l)}{2} $$
for all $x \in [h,1-h]^d$. Since $B(x) \succeq \lambda_{\mathsf{min}}(B(x)) I_D$, the above lower bound on $\lambda_{\mathsf{min}}(B(x))$ completes the proof.
\qed
\end{proof}

\subsection{Properties of Local Polynomial Interpolation Weights}
\label{Properties of Local Polynomial Interpolation Weights}

To prove \thmref{Thm: Chebyshev Norm Interpolation}, we will require two useful lemmata regarding the interpolation weights in \eqref{Eq: Weights}. The first lemma generalizes \cite[Proposition 1.12]{Tsybakov2009} to the $d$-variate setting, and demonstrates the intuitively pleasing property that polynomials are reproduced exactly by the local polynomial interpolator given in \eqref{Eq: Local polynomial interpolator}.

\begin{lemma}[Polynomial Reproduction {\cite[Proposition 1.12]{Tsybakov2009}}]
\label{Lemma: Polynomial Reproduction}
Consider any $d$-variate polynomial $g : [0,1]^d \rightarrow \R$ with total degree at most $l$ (i.e., for every monomial of $g$ with non-zero coefficient, the sum of the exponents of the $d$ variables is at most $l$). Then, for every $m \in \N$ and every $x \in [h,1-h]^d$ such that $B(x)$ is positive definite, we have
$$ \hat{\phi}(x) = \sum_{y \in \G_m}{g(y) w_y^*(x)} = g(x) \, , $$
where the interpolation weights are given in \eqref{Eq: Weights}.
\end{lemma}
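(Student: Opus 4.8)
The plan is to exploit the fact that $\hat{\Phi}(x)$ is, by construction, the \emph{unique} minimizer of the weighted least-squares objective in \eqref{Eq: Regression} whenever $B(x)$ is positive definite, and to exhibit an explicit minimizer built out of the Taylor coefficients of $g$. First I would record the uniqueness: the map $\phi \mapsto \sum_{y \in \G_m}\big(g(y) - \phi^{\T} U((y-x)/h)\big)^2 \prod_{j} K((y_j-x_j)/h)$ is a quadratic function of $\phi$ whose Hessian equals $2(mh)^d B(x)$, so under the hypothesis $B(x) \succ 0$ it is strictly convex and has a unique stationary point, namely $\hat{\Phi}(x)$ (this is exactly what underlies the closed form \eqref{Eq: Vector Weights}).

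Next I would use the exactness of the Taylor expansion for polynomials. Since $g$ has total degree at most $l$, all partial derivatives $\nabla^s g$ with $|s| > l$ vanish identically, so for every $x$ and $y$ the multivariate Taylor formula holds with zero remainder: $g(y) = \sum_{|s| \leq l} \frac{\nabla^s g(x)}{s!}(y-x)^s = \sum_{|s|\leq l} h^{|s|}\nabla^s g(x)\cdot\frac{1}{s!}\big((y-x)/h\big)^s$. Recalling the definition of $U$ in \eqref{Eq: Def of U}, this says precisely that $g(y) = \phi^*(x)^{\T} U((y-x)/h)$ for all $y \in \G_m$, where $\phi^*(x) \triangleq \big[h^{|s|}\nabla^s g(x) : |s| \leq l\big]^{\T}$ is indexed in lexicographical order. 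Consequently the residual vanishes at every grid point, the objective in \eqref{Eq: Regression} evaluated at $\phi^*(x)$ equals zero, and since that objective is non-negative, $\phi^*(x)$ is a minimizer. By the uniqueness established in the previous step, $\hat{\Phi}(x) = \phi^*(x)$.

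Finally I would read off the first coordinate. The lexicographically smallest multi-index with $|s| \leq l$ is $s = (0,\dots,0)$, so $U(0) = [1,0,\dots,0]^{\T}$ and $\hat{\phi}(x) = \hat{\Phi}_1(x) = U(0)^{\T}\hat{\Phi}(x) = \big[\phi^*(x)\big]_1 = h^0\, \nabla^{(0,\dots,0)} g(x) = g(x)$; combining this with the representation \eqref{Eq: Local polynomial interpolator} gives $\sum_{y\in\G_m} g(y)\, w_y^*(x) = g(x)$, as claimed.

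I do not expect a serious obstacle here. The two points that need a little care are the bookkeeping identifying the Hessian of the regression objective with a positive multiple of $B(x)$ (so that positive definiteness yields a unique minimizer), and the multi-index accounting that identifies the first entry of $U$ — and hence of $\hat{\Phi}$ — with the constant term, matching $\nabla^{(0,\dots,0)} g(x) = g(x)$. An alternative route, verifying the identity directly from the weight formula \eqref{Eq: Vector Weights} by plugging $g(y) = \phi^*(x)^{\T} U((y-x)/h)$ into $\sum_y g(y) w_y(x)$ and recognizing the telescoping $B(x)^{-1} B(x) \phi^*(x)$, is also available but is less transparent than the variational argument above.
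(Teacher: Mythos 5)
Your proposal is correct and takes essentially the same approach as the paper: both identify the explicit minimizer $\big[h^{|s|}\nabla^s g(x) : |s|\le l\big]^{\mathrm T}$ (called $q(x)$ in the paper) via the exact Taylor expansion of the degree-$\le l$ polynomial, and both invoke positive definiteness of $B(x)$ to establish uniqueness of the minimizer of the weighted least-squares objective before reading off the first coordinate. The only cosmetic difference is that the paper completes the square to rewrite the objective as $(q(x)-\phi)^{\mathrm T}B(x)(q(x)-\phi)$, whereas you argue via the Hessian $2(mh)^d B(x)$ and the fact that $\phi^*(x)$ makes all residuals vanish; these are the same argument.
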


\begin{proof}
We follow the proof of \cite[Proposition 1.12]{Tsybakov2009} mutatis mutandis. Fix any $m \in \N$ and $x \in [h,1-h]^d$ such that $B(x)$ is positive definite, and define the vector $q(x)\in \R^D$ as
$$ q(x) \triangleq \left[h^{|s|} \nabla^s g (x) : s \in \Z_+^d, \, |s| \leq l\right]^{\! \T} . $$
Then, for every $y \in \G_m$, we have
\begin{align*}
g(y) & = \sum_{s \in \Z_+^d: \, |s| \leq l}{\frac{\nabla^s g (x)}{s!} (y-x)^s} \\
& = q(x)^{\T} U\!\left(\frac{y-x}{h}\right) ,
\end{align*}
where the first equality uses Taylor's theorem (cf. \cite[Theorem 12.14]{Apostol1974}) and the fact that $g$ is a polynomial with total degree at most $l$, and $U$ is defined in \eqref{Eq: Def of U}. This implies that the solution to the weighted regression problem in \eqref{Eq: Regression} satisfies
\begin{align*}
\hat{\Phi}(x) & = \argmin{\phi \in \R^D} \sum_{y \in \G_m}{(q(x) - \phi)^{\T} U\!\left(\frac{y-x}{h}\right) U\!\left(\frac{y-x}{h}\right)^{\! \T} (q(x) - \phi) \prod_{j=1}^{d}{K\!\left(\frac{y_j - x_j}{h}\right)}} \\
& = \argmin{\phi \in \R^D} (q(x) - \phi)^{\T} B(x) (q(x) - \phi) \\
& = q(x)
\end{align*}
where the second equality follows from \eqref{Eq: B-matrix}, and the third equality holds because $B(x)$ is positive definite. Therefore, by considering the first coordinate, we get
$$ \hat{\phi}(x) = \sum_{y \in \G_m}{g(y) w_y^*(x)} = g(x) \, , $$
where the first equality uses \eqref{Eq: Local polynomial interpolator}. This completes the proof.
\qed
\end{proof}

The second lemma generalizes \cite[Lemma 1.3]{Tsybakov2009} to the $d$-variate setting, and portrays an $\ell^1$-norm bound on the sequence of interpolation weights in \eqref{Eq: Weights}.

\begin{lemma}[$\ell^1$-Norm Bound]
\label{Lemma: 1-Norm Bound}
Suppose that $l \geq d \geq 19$. Then, for all $m \in \N$ and $h \in \big(0,\frac{1}{2}\big)$ such that $m h \geq 4 l (3e)^d/\Lambda(d,l)$ and all $x \in [h,1-h]^d$, we have
$$ \sum_{y \in \G_m}{\left|w_y^*(x)\right|} \leq \frac{2}{\Lambda(d,l)} \left(\frac{3 \sqrt{e} \, c}{b}\right)^{\! d} , $$
where $c\geq b > 0$ are the bounds on our kernel, and $\Lambda(d,l)$ is defined in \eqref{Eq: Lower bound def}.
\end{lemma}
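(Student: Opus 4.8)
The plan is to mimic the univariate argument of \cite[Lemma 1.3]{Tsybakov2009}, substituting the multivariate L\"{o}wner bound of \propref{Prop: Lowner Lower Bound} and the combinatorial identity already exploited in \eqref{Eq: Trace upper bound}. First, recall from \eqref{Eq: Vector Weights} and \eqref{Eq: Weights} that, since $U(0)$ is the first standard basis vector $e_1 \in \R^D$ (see \eqref{Eq: Def of U}),
\[
w_y^*(x) = \frac{1}{(mh)^d}\left(\prod_{j=1}^d K\!\left(\frac{y_j-x_j}{h}\right)\right) U(0)^{\T} B(x)^{-1} U\!\left(\frac{y-x}{h}\right) ,
\]
which is well-defined precisely because \propref{Prop: Lowner Lower Bound} (applicable under the stated hypotheses $l \geq d \geq 19$ and $mh \geq 4l(3e)^d/\Lambda(d,l)$) guarantees that $B(x)$ is positive definite, with $B(x)^{-1} \preceq \tfrac{2}{b^d \Lambda(d,l)} I_D$.

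Second, I would bound a single weight. Using $\|U(0)\|_2 = 1$ together with the Cauchy--Schwarz inequality for the inner product induced by the positive definite matrix $B(x)^{-1}$ and the L\"{o}wner bound above,
\[
\left|U(0)^{\T} B(x)^{-1} U\!\left(\tfrac{y-x}{h}\right)\right| \leq \frac{1}{\lambda_{\mathsf{min}}(B(x))}\,\left\|U\!\left(\tfrac{y-x}{h}\right)\right\|_2 \leq \frac{2}{b^d \Lambda(d,l)}\,\left\|U\!\left(\tfrac{y-x}{h}\right)\right\|_2 .
\]
Since the kernel $K$ is supported on $[-1,1]$, only indices $y \in \G_m$ with $z \triangleq (y-x)/h \in [-1,1]^d$ contribute, and for such $z$ the same computation as in \eqref{Eq: Trace upper bound} gives $\|U(z)\|_2^2 = \sum_{|s| \leq l} z^{2s}/(s!)^2 \leq \sum_{|s| \leq l} 1/(s!)^2 \leq \sum_{k=0}^{l} d^k/k! \leq e^d$, so $\|U(z)\|_2 \leq e^{d/2}$; moreover $\prod_{j=1}^d K((y_j-x_j)/h) \leq c^d$ because $K \leq c$.

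Third, I would count the contributing terms and sum. In each coordinate the grid $\G_m$ has spacing $m^{-1}$, so an interval of length $2h$ contains at most $2mh+1$ grid points; hence at most $(2mh+1)^d$ indices $y$ satisfy $(y-x)/h \in [-1,1]^d$. Combining the above,
\[
\sum_{y \in \G_m}{\left|w_y^*(x)\right|} \leq \frac{(2mh+1)^d}{(mh)^d} \cdot c^d \cdot \frac{2}{b^d \Lambda(d,l)} \cdot e^{d/2} = \frac{2}{\Lambda(d,l)} \left(\frac{c}{b}\right)^{\! d} e^{d/2} \left(2 + \frac{1}{mh}\right)^{\! d} ,
\]
and since $mh \geq 4l(3e)^d/\Lambda(d,l) \geq 1$ we have $(2 + (mh)^{-1})^d \leq 3^d$, which yields exactly $\tfrac{2}{\Lambda(d,l)}(3\sqrt{e}\,c/b)^d$. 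The argument is essentially routine once \propref{Prop: Lowner Lower Bound} is available; the only points requiring mild care are recognizing that the multivariate sum $\sum_{|s| \leq l} 1/(s!)^2$ collapses via the multinomial theorem to $\sum_{k=0}^{l} d^k/k! \leq e^d$ exactly as in the trace estimate, and that the kernel's support captures at most $(2mh+1)^d$ (rather than $(2mh)^d$) grid points, which is what forces the harmless factor $3^d$ in place of $2^d$.
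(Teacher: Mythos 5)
Your proof is correct and follows essentially the same route as the paper: invoke \propref{Prop: Lowner Lower Bound} to control $B(x)^{-1}$, bound $\|U(z)\|_2 \leq e^{d/2}$ via the multinomial identity from \eqref{Eq: Trace upper bound}, use $K \leq c$ on its support, and count at most $(2mh+1)^d$ contributing grid points with $mh \geq 1$ giving the factor $3^d$. The only cosmetic difference is that you apply Cauchy--Schwarz in the $B(x)^{-1}$ inner product to bound $|w_y^*(x)|$ directly, whereas the paper first bounds $|w_y^*(x)| \leq \|w_y(x)\|_2$ and then uses the operator-norm bound on $B(x)^{-1}$; the two steps are equivalent.
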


\begin{proof}
Once again, we follow the proof of \cite[Lemma 1.3]{Tsybakov2009} mutatis mutandis. First, notice that $B(x)$ satisfies the L\"{o}wner lower bound in \propref{Prop: Lowner Lower Bound} since the conditions of the proposition are satisfied. Then, observe that for every $x \in [h,1-h]^d$, we have
\begin{align*}
\sum_{y \in \G_m}{\left|w_y^*(x)\right|} & \leq \sum_{y \in \G_m}{\left\|w_y(x)\right\|_2} \\
& \leq \frac{1}{(mh)^d} \sum_{y \in \G_m}{\left\|B(x)^{-1} U\!\left(\frac{y-x}{h}\right)\right\|_2 \prod_{j = 1}^{d}{K\!\left(\frac{y_j - x_j}{h}\right)}} \\
& \leq \frac{c^d}{(mh)^d} \sum_{y \in \G_m}{\left\|B(x)^{-1} U\!\left(\frac{y-x}{h}\right)\right\|_2 \I\!\left\{\forall j \in [d], \, |y_j - x_j| \leq h\right\}} \\
& \leq \frac{2 c^d}{(mhb)^d \Lambda(d,l)} \sum_{y \in \G_m}{\left\|U\!\left(\frac{y-x}{h}\right)\right\|_2 \I\!\left\{\forall j \in [d], \, |y_j - x_j| \leq h\right\}} \\
& \leq \frac{2}{\Lambda(d,l)} \left(\frac{\sqrt{e} \, c}{mhb}\right)^{\! d} \sum_{y \in \G_m}{\I\!\left\{\forall j \in [d], \, |y_j - x_j| \leq h\right\}} \\
& \leq \frac{2}{\Lambda(d,l)} \left(\frac{\sqrt{e} \, c}{b}\right)^{\! d} \left(\frac{2mh + 1}{mh}\right)^{\! d} \\
& \leq \frac{2}{\Lambda(d,l)} \left(\frac{3\sqrt{e} \, c}{b}\right)^{\! d} 
\end{align*}
where the first inequality follows from \eqref{Eq: Weights} and the Cauchy-Schwarz inequality, the second equality follows from \eqref{Eq: Vector Weights}, the third inequality holds because the kernel $K$ is upper bounded by $c > 0$ on the support $[-1,1]$, the fourth inequality follows from \propref{Prop: Lowner Lower Bound}, the fifth inequality follows from the bound
\begin{align*}
\left\|U\!\left(\frac{y-x}{h}\right)\right\|_2^2 & = \sum_{s \in \Z_+^d : \, |s| \leq l}{\left(\frac{(y-x)^s}{h^{|s|} s!}\right)^{\! 2}} \\
& \leq \sum_{s \in \Z_+^d : \, |s| \leq l}{\frac{1}{(s!)^2}} \\
& \leq \sum_{i = 0}^{l}{\frac{1}{i!}\sum_{s \in \Z_+^d : \, |s| = i}{\frac{i!}{s!}}} \\
& = \sum_{i = 0}^{l}{\frac{d^i}{i!}} \\
& \leq e^d \, ,
\end{align*}
which uses \eqref{Eq: Def of U}, the fact that $|y_j - x_j| \leq h$ for all $j \in [d]$, the multinomial theorem, and the Maclaurin series of the exponential function, the sixth inequality holds because the interval $[-h,h]$ can have at most $2mh + 1$ points at distance $m^{-1}$ from each other, and the seventh inequality follows from the assumption that $mh \geq 4 l (3e)^d/\Lambda(d,l) \geq 1$. This completes the proof.
\qed
\end{proof}

\subsection{Proof of \thmref{Thm: Chebyshev Norm Interpolation}}
\label{Proof of Theorem Chebyshev Norm Interpolation}

Finally, we are in a position to prove the supremum norm interpolation guarantee in \thmref{Thm: Chebyshev Norm Interpolation}.

\begin{proof}[of \thmref{Thm: Chebyshev Norm Interpolation}]
Suppose that $m \in \N$ and $h \in \big(0,\frac{1}{2}\big)$ are such that $m h \geq 4 l (3e)^d / \Lambda(d,l)$. Then, observe that for every $x \in [h,1-h]^d$, we have
\begin{align*}
\hat{\phi}(x) - g(x) & = \sum_{y \in \G_m}{g(y) w_y^*(x)} - g(x) \\
& = \sum_{y \in \G_m}{w_y^*(x) (g(y) - g(x))} \\
& = \sum_{y \in \G_m} w_y^*(x) \sum_{s \in \Z_+^d : \, 1 \leq |s| \leq l}{\frac{\nabla^s g (x)}{s!} (y - x)^s} \\
& \qquad \quad \enspace + w_y^*(x) \sum_{s \in \Z_+^d : \, |s| = l}{\frac{\nabla^s g (x + \tau (y - x)) - \nabla^s g (x)}{s!} (y - x)^s}  \\
& = \sum_{y \in \G_m}{w_y^*(x) \sum_{s \in \Z_+^d : \, |s| = l}{\frac{\nabla^s g (x + \tau (y - x)) - \nabla^s g (x)}{s!} (y - x)^s}}
\end{align*}
where the first equality follows from \eqref{Eq: Local polynomial interpolator}, the second equality holds because
$$ \sum_{y \in \G_m}{w_y^*(x)} = 1 $$
which follows from \lemref{Lemma: Polynomial Reproduction} applied to the constant unit polynomial, the third equality follows from Taylor's theorem with Lagrange remainder term \cite[Theorem 12.14]{Apostol1974}:
$$ g(y) - g(x) = \sum_{s \in \Z_+^d : \, 1 \leq |s| \leq l}{\frac{\nabla^s g (x)}{s!} (y - x)^s} + \sum_{s \in \Z_+^d : \, |s| = l}{\frac{\nabla^s g (x + \tau (y - x)) - \nabla^s g (x)}{s!} (y - x)^s} $$
which holds for some $\tau \in (0,1)$, and the fourth equality holds because 
$$ \sum_{y \in \G_m}{(y - x)^s w_y^*(x)} = 0 $$
for all $s \in \Z_+^d$ with $1 \leq |s| \leq l$, which follows from \lemref{Lemma: Polynomial Reproduction} applied to the polynomials $u \mapsto (u - x)^s$. This implies that
\begin{align*}
\sup_{x \in [h,1-h]^d}{\left|\hat{\phi}(x) - g(x)\right|} & \leq \sum_{y \in \G_m}{\left|w_y^*(x)\right| \left|\sum_{s \in \Z_+^d : \, |s| = l}{\frac{\nabla^s g (x + \tau (y - x)) - \nabla^s g (x)}{s!} (y - x)^s}\right|} \\
& \leq \frac{L_2}{l!} \sum_{y \in \G_m}{\left|w_y^*(x)\right| \left\|y - x\right\|_1^{\eta}} \\
& = \frac{L_2}{l!} \sum_{y \in \G_m}{\left|w_y^*(x)\right| \left\|y - x\right\|_1^{\eta} \I\!\left\{\forall j \in [d], \, |y_j - x_j| \leq h\right\}} \\
& \leq \frac{2 L_2 (dh)^{\eta}}{l! \Lambda(d,l)} \left(\frac{3 \sqrt{e} \, c}{b}\right)^{\! d} \\
& = \frac{2 L_2}{l! \Lambda(d,l)} \left(\frac{3 \sqrt{e} \, c}{b}\right)^{\! d}  \left(\frac{4 d l (3e)^d}{m \Lambda(d,l)}\right)^{\! \eta}
\end{align*}
where the first inequality follows from the triangle inequality, the second inequality follows from the proof of \propref{Prop: Taylor Approximation of Holder Class Functions} in \appref{Taylor Approximation of Holder Class Functions}, the third equality follows from \eqref{Eq: Vector Weights} and \eqref{Eq: Weights} since the kernel $K$ has support $[-1,1]$, the fourth inequality follows from \lemref{Lemma: 1-Norm Bound}, and the fifth equality holds because we set
\begin{equation}
\label{Eq: Bandwidth choice}
h = \frac{4 l (3e)^d}{m \Lambda(d,l)} \, , 
\end{equation}
which minimizes the upper bound in the fourth inequality and achieves equality in the condition imposed on $m$ and $h$ at the outset of the proof. Lastly, to ensure that $\sup_{x \in [h,1-h]^d}{|\hat{\phi}(x) - g(x)|} \leq \delta$, it suffices to impose the condition
$$ \frac{2 L_2}{l! \Lambda(d,l)} \left(\frac{3 \sqrt{e} \, c}{b}\right)^{\! d}  \left(\frac{4 d l (3e)^d}{m \Lambda(d,l)}\right)^{\! \eta} \leq \delta \, , $$
which is equivalent to 
$$ m \geq \frac{4 d l (3e)^d}{\Lambda(d,l)} \left(\frac{2 L_2}{l! \Lambda(d,l)}\right)^{\! 1/\eta} \left(\frac{3 \sqrt{e} \, c}{b}\right)^{\! d/\eta} \left(\frac{1}{\delta}\right)^{\! 1/\eta} , $$
which in turn is implied by the condition
\begin{equation}
\label{Eq: Unsimplified condition}
m \geq \frac{12 \sqrt{e} (2 L_2 + 1) c}{b} \left(\frac{d l (3e)^d}{(l!)^{1/\eta} \Lambda(d,l)^{2}}\right) \left(\frac{1}{\delta}\right)^{\! 1/\eta} , 
\end{equation}
where we use the facts that $\eta > l \geq d \geq 19$ (by assumption) and $\Lambda(d,l)^{(\eta + 1)/\eta} \geq \Lambda(d,l)^2$ (since $\Lambda(d,l) \geq 1$), and we slacken the $\Lambda(d,l)$ term because this does not change its double exponential nature. Moreover, we may further simplify \eqref{Eq: Unsimplified condition} by noting that
$$ (l!)^{1/\eta} \geq \frac{(2 \pi)^{1/(2\eta)} l^{(l/\eta) + (1/(2\eta))}}{e^{l/\eta}} \geq \frac{l}{e l^{\frac{1}{l+1}}} \geq \frac{l}{2e} $$
where we use the Stirling's formula bound (cf. \cite[Chapter II, Section 9, Equation (9.15)]{Feller1968}) and the fact that $l^{1/(l+1)} \leq 2$ for $l \geq 19$. Applying this bound, we obtain the sufficient condition for \eqref{Eq: Unsimplified condition} presented in the theorem statement:
$$ m \geq \frac{110 (2 L_2 + 1) c}{b} \left(\frac{d (3e)^d}{\Lambda(d,l)^{2}}\right) \! \left(\frac{1}{\delta}\right)^{\! 1/\eta} , $$
where we use the fact that $24 e \sqrt{e} \leq 110$. Finally, plugging this bound into \eqref{Eq: Bandwidth choice} yields the upper bound on $h$ in the theorem statement. This completes the proof.
\qed
\end{proof}

We note that this argument is inspired by the proof of \cite[Proposition 1.13]{Tsybakov2009}, but the details are different.

\section{Proofs of Convergence Analysis}
\label{Proofs of Convergence Analysis}

In this section, we prove \propref{Prop: Number of Iterations}, \thmref{Thm: Oracle Complexity}, and \propref{Prop: Comparison to GD and SGD} from \secref{Analysis of Algorithm}.

\subsection{Proof of \propref{Prop: Number of Iterations}}
\label{Proof of Prop Number of Iterations}

To establish \propref{Prop: Number of Iterations}, we require the following lemma from the literature \cite[Lemma 2.1]{FriedlanderSchmidt2012} (also see \cite[Theorem 2, Equation (19)]{SoZhou2017}) that upper bounds the difference between the ERM objective function \eqref{Eq: ERM Objective Function} evaluated at $\theta^{(T)}$, which is the output of our proposed algorithm, and the true infimum in \eqref{Eq: ERM}. Although \cite[Lemma 2.1]{FriedlanderSchmidt2012} holds for general inexact gradient descent methods, we unwind the recursion in \cite[Lemma 2.1]{FriedlanderSchmidt2012} and adapt it to our setting below for convenience.

\begin{lemma}[Inexact Gradient Descent Bound {\cite[Lemma 2.1]{FriedlanderSchmidt2012}}]
\label{Lemma: Inexact GD Bound}
Suppose the ERM objective function $F : \R^p \rightarrow \R$ is continuously differentiable and $\mu$-strongly convex, and its gradient $\nabla_{\theta} F : \R^p \rightarrow \R^p$ is $L_1$-Lipschitz continuous. Then, for any $T \in \N$, we have
$$ 0 \leq F(\theta^{(T)}) - F_* \leq \left(1 - \frac{1}{\sigma}\right)^{\! T} \!\left(F(\theta^{(0)}) - F_*\right) + \frac{1}{2 L_1} \sum_{t = 1}^{T}{\left(1 - \frac{1}{\sigma}\right)^{\! T-t} \left\|\widehat{\nabla F}^{(t)} - \nabla_{\theta} F(\theta^{(t-1)})\right\|_2^2} \, , $$
where $\theta^{(t)}$ for $t \in \Z_+$ are the inexact gradient descent updates given by \eqref{Eq: Update step} with an arbitrary initialization $\theta^{(0)}$, $\widehat{\nabla F}^{(t)}$ for $t \in \N$ are our local polynomial interpolation based approximations of the true gradients $\nabla_{\theta} F(\theta^{(t-1)})$ shown in \eqref{Eq: Approximation to Grad F}, and $\sigma = L_1/\mu \geq 1$ is the condition number defined in \secref{Smoothness Assumptions and Approximate Solutions}.
\end{lemma}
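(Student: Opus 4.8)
The plan is to run the textbook inexact-gradient-descent argument: combine the descent lemma coming from $L_1$-smoothness of $F$ with the Polyak--{\L}ojasiewicz inequality implied by $\mu$-strong convexity, and then unwind the resulting one-step contraction. For brevity, set $e_t \triangleq \widehat{\nabla F}^{(t)} - \nabla_{\theta} F(\theta^{(t-1)})$ for the gradient estimation error at iteration $t$, and $\rho \triangleq 1 - \sigma^{-1} \in [0,1)$.

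First I would record a per-iteration descent inequality. Since $\nabla_{\theta} F$ is $L_1$-Lipschitz continuous, the standard quadratic upper bound gives $F(\theta^{(t)}) \leq F(\theta^{(t-1)}) + \nabla_{\theta} F(\theta^{(t-1)})^{\T}(\theta^{(t)} - \theta^{(t-1)}) + \frac{L_1}{2}\|\theta^{(t)} - \theta^{(t-1)}\|_2^2$. Substituting the update rule $\theta^{(t)} - \theta^{(t-1)} = -\frac{1}{L_1}\widehat{\nabla F}^{(t)} = -\frac{1}{L_1}\big(\nabla_{\theta} F(\theta^{(t-1)}) + e_t\big)$ and expanding, the cross terms proportional to $\nabla_{\theta} F(\theta^{(t-1)})^{\T} e_t$ cancel exactly because the step size is $1/L_1$, leaving
$$ F(\theta^{(t)}) \leq F(\theta^{(t-1)}) - \frac{1}{2 L_1}\left\|\nabla_{\theta} F(\theta^{(t-1)})\right\|_2^2 + \frac{1}{2 L_1}\left\|e_t\right\|_2^2 \, . $$
Next I would invoke $\mu$-strong convexity in the form $\|\nabla_{\theta} F(\theta)\|_2^2 \geq 2\mu\big(F(\theta) - F_*\big)$ for all $\theta$, which follows by minimizing the strong-convexity lower bound $F(y) \geq F(\theta) + \nabla_{\theta} F(\theta)^{\T}(y - \theta) + \frac{\mu}{2}\|y - \theta\|_2^2$ over $y \in \R^p$; the same computation also shows $F_* > -\infty$. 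Plugging this into the displayed inequality yields the one-step contraction $F(\theta^{(t)}) - F_* \leq \rho\big(F(\theta^{(t-1)}) - F_*\big) + \frac{1}{2 L_1}\|e_t\|_2^2$.

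Finally I would unwind this recursion: iterating from $t = T$ down to $t = 1$ gives $F(\theta^{(T)}) - F_* \leq \rho^{T}\big(F(\theta^{(0)}) - F_*\big) + \frac{1}{2 L_1}\sum_{t=1}^{T}\rho^{T-t}\|e_t\|_2^2$, which is exactly the asserted right-hand bound once we recall $\rho = 1 - 1/\sigma$ and substitute back $e_t = \widehat{\nabla F}^{(t)} - \nabla_{\theta} F(\theta^{(t-1)})$; the left inequality $F(\theta^{(T)}) - F_* \geq 0$ is immediate from the definition of $F_*$ as an infimum (strong convexity moreover guarantees $F_*$ is attained, as verified in \secref{Proof of Prop Number of Iterations}). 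There is no genuine obstacle here — this is precisely \cite[Lemma 2.1]{FriedlanderSchmidt2012} with the recursion written out — so the only care needed is in the bookkeeping of the geometric weights $\rho^{T-t}$ and in checking that $F$ itself (not merely each $f(x;\cdot)$) is $\mu$-strongly convex with $L_1$-Lipschitz gradient, which is inherited from the assumptions of \secref{Smoothness Assumptions and Approximate Solutions} by averaging.
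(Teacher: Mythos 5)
Your proof is correct and is precisely the standard argument underlying the cited result \cite[Lemma 2.1]{FriedlanderSchmidt2012}: the $L_1$-smoothness descent lemma (with the cross terms $\nabla_\theta F(\theta^{(t-1)})^{\T} e_t$ cancelling because the step size is exactly $1/L_1$), the Polyak--{\L}ojasiewicz inequality $\|\nabla_\theta F(\theta)\|_2^2 \geq 2\mu\big(F(\theta) - F_*\big)$ from $\mu$-strong convexity, and an unwinding of the resulting one-step contraction. The paper does not reprove this lemma — it simply cites \cite{FriedlanderSchmidt2012} and states the unwound form — so your write-up supplies the routine details that are implicit there; your bookkeeping of the geometric weights $\rho^{T-t}$ and the index range $t=1,\dots,T$ checks out.
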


We remark that \cite[Lemma 2.1]{FriedlanderSchmidt2012} additionally assumes that the infimum $F_*$ in \eqref{Eq: ERM} can be achieved. However, this condition need not be imposed in \lemref{Lemma: Inexact GD Bound}, since if $F$ is strongly convex on $\R^m$, its infimum $F_*$ is finite and achieved by some unique global minimizer $\theta_* \in \R^p$. We next prove \propref{Prop: Number of Iterations} using \lemref{Lemma: Inexact GD Bound} and \thmref{Thm: Chebyshev Norm Interpolation}.

\begin{proof}[of \propref{Prop: Number of Iterations}]
We first verify that the conditions of \lemref{Lemma: Inexact GD Bound} hold. Recall the ERM objective function $F : \R^p \rightarrow \R$ given in \eqref{Eq: ERM Objective Function}:
$$ \forall \theta \in \R^p, \enspace F(\theta) = \frac{1}{n} \sum_{i = 1}^{n}{f(x^{(i)};\theta)} \, , $$
where $\{x^{(i)} \in [h^{\prime},1-h^{\prime}]^d : i \in [n]\}$ is some fixed training data, and $f : [0,1]^d \times \R^p \rightarrow \R$ is a given loss function satisfying the assumptions in \secref{Smoothness Assumptions and Approximate Solutions}. Specifically, since $f(x^{(i)};\cdot) : \R^p \rightarrow \R$ is $\mu$-strongly convex for all $i \in [n]$, $F$ is also $\mu$-strongly convex due to \cite[Lemma 2.1.4]{Nesterov2004}. Likewise, since $\nabla_{\theta} f(x^{(i)};\cdot) : \R^p \rightarrow \R^p$ is $L_1$-Lipschitz continuous for all $i \in [n]$, $F$ is continuously differentiable and $\nabla_{\theta} F: R^p \rightarrow \R^p$ is also $L_1$-Lipschitz continuous. Indeed, observe that for all $\theta_1,\theta_2 \in \R^p$, 
\begin{align*}
\left\|\nabla_{\theta} F (\theta_1) - \nabla_{\theta} F (\theta_2) \right\|_2 & \leq \frac{1}{n} \sum_{i = 1}^{n}{ \left\|\nabla_{\theta} f (x^{(i)};\theta_1) - \nabla_{\theta} f (x^{(i)};\theta_2)\right\|_2 } \\
& \leq \frac{L_1}{n} \sum_{i = 1}^{n}{ \left\|\theta_1 - \theta_2\right\|_2 } \\
& = L_1 \left\|\theta_1 - \theta_2\right\|_2 ,
\end{align*}
where we use the triangle inequality. 

Next, we apply \lemref{Lemma: Inexact GD Bound} and get
$$ F(\theta^{(T)}) - F_* \leq \left(1 - \frac{1}{\sigma}\right)^{\! T} \! \left(F(\theta^{(0)}) - F_*\right) + \frac{1}{2 L_1} \sum_{t = 1}^{T}{\left(1 - \frac{1}{\sigma}\right)^{\! T-t} \left\|\widehat{\nabla F}^{(t)} - \nabla_{\theta} F(\theta^{(t-1)})\right\|_2^2} \, , $$
where $\theta^{(t)}$ for $t \in \Z_+$ are our updates in \eqref{Eq: Update step}, and $\widehat{\nabla F}^{(t)}$ for $t \in \N$ are our approximations of the true gradients in \eqref{Eq: Approximation to Grad F}. To upper bound the squared $\ell^2$-norms on the right hand side of this inequality, notice that for any $t \in \N$, we have
\begin{align}
\left\|\widehat{\nabla F}^{(t)} - \nabla_{\theta} F(\theta^{(t-1)})\right\|_2 & \leq \frac{1}{n} \sum_{j = 1}^{n}{\sqrt{\sum_{i = 1}^{p}{\left(\hat{\phi}_i^{(t)}(x^{(j)}) - \frac{\partial f}{\partial \theta_i}(x^{(j)};\theta^{(t-1)}) \right)^{\!2}}}} \nonumber \\
& \leq \sqrt{\sum_{i = 1}^{p}{\sup_{x \in [h^{\prime},1-h^{\prime}]^d}{\left|\hat{\phi}_i^{(t)}(x) - \frac{\partial f}{\partial \theta_i}(x;\theta^{(t-1)})\right|^{2}} }} \nonumber \\
& \leq \sqrt{p} \, \delta_t \, ,
\label{Eq: p dependence}
\end{align} 
where the first inequality follows from \eqref{Eq: Approximation to Grad F} and the triangle inequality, $\hat{\phi}_i^{(t)}(x)$ denotes the output of our local polynomial interpolator for $\frac{\partial f}{\partial \theta_i}(\cdot;\theta^{(t-1)})$ when evaluated at $x$ (cf. \eqref{Eq: Interpolators}), and the third inequality follows from \thmref{Thm: Chebyshev Norm Interpolation} and our choices of grid sizes \eqref{Eq: Algo grid sizes} and bandwidths \eqref{Eq: Special bandwidth selection} in \secref{Description of Algorithm} (where \eqref{Eq: Data bound} ensures that $h_t \leq h^{\prime}$). Hence, we obtain
\begin{equation}
\label{Eq: Bound on Diff to Optimality}
F(\theta^{(T)}) - F_* \leq \left(1 - \frac{1}{\sigma}\right)^{\! T} \! \left(F(\theta^{(0)}) - F_*\right) + \frac{p}{2 L_1} \sum_{t = 1}^{T}{\left(1 - \frac{1}{\sigma}\right)^{\! T-t}\delta_t^2} \, . 
\end{equation}

We now select appropriate values for the constants $\{\delta_t \in (0,1) : t \in \N\}$. For every $t \in \N$, choose
$$ \delta_t = \left(1 - \frac{1}{\sigma}\right)^{\!\frac{T}{2}} $$
as indicated in \eqref{Eq: Choices of the Chebyshev constants}. This produces
\begin{align*}
F(\theta^{(T)}) - F_* & \leq \left(1 - \frac{1}{\sigma}\right)^{\! T} \! \left(F(\theta^{(0)}) - F_* + \frac{p}{2 L_1} \sum_{t = 1}^{T}{\left(1 - \frac{1}{\sigma}\right)^{\! T-t}}\right) \\
& \leq \left(1 - \frac{1}{\sigma}\right)^{\! T} \! \left(F(\theta^{(0)}) - F_* + \frac{p}{2 L_1} \sum_{t = 0}^{\infty}{\left(1 - \frac{1}{\sigma}\right)^{\! t}}\right) \\
& = \left(1 - \frac{1}{\sigma}\right)^{\! T} \! \left(F(\theta^{(0)}) - F_* + \frac{p \sigma}{2 L_1}\right) ,
\end{align*}
where the last equality computes the value of the previous geometric series.

Finally, to ensure that $\theta^{(T)}$ is an $\epsilon$-approximate solution of \eqref{Eq: ERM}, viz.
$$ F(\theta^{(T)}) - F_* \leq \epsilon $$
as defined in \eqref{Eq: Minimum of convex function}, it suffices to impose the condition
$$ \left(1 - \frac{1}{\sigma}\right)^{\! T} \! \left(F(\theta^{(0)}) - F_* + \frac{p \sigma}{2 L_1}\right) \leq \epsilon \, . $$
By taking logarithms of both sides, and rearranging and simplifying the resulting inequality, we obtain the equivalent inequality
$$ T \geq \left(\log\!\left(\frac{\sigma}{\sigma - 1}\right)\right)^{\! -1} \log\!\left(\frac{F(\theta^{(0)}) - F_* + \frac{p}{2 \mu}}{\epsilon}\right) , $$
where we also use the fact that $\sigma = L_1/\mu$. Therefore, after running the LPI-GD algorithm for $T \in \N$ iterations with $T$ given by \eqref{Eq: Chosen number of iterations}, the updated parameter vector $\theta^{(T)}$ forms an $\epsilon$-approximate solution to \eqref{Eq: ERM}. We also note that for every $t \in \N$, 
$$ \delta_t = \left(1 - \frac{1}{\sigma}\right)^{\!\frac{T}{2}} = \Theta\!\left(\sqrt{\frac{\epsilon}{p}}\right) , $$
by substituting \eqref{Eq: Chosen number of iterations} into our choice of $\delta_t$ above. This yields \eqref{Eq: Choices of the Chebyshev constants}, and completes the proof.
\qed
\end{proof}

\subsection{Proof of \thmref{Thm: Oracle Complexity}}
\label{Proof of Thm Oracle Complexity}

We next establish \thmref{Thm: Oracle Complexity} using \propref{Prop: Number of Iterations}.

\begin{proof}[of \thmref{Thm: Oracle Complexity}]
Since we choose the maximum allowable approximation errors $\{\delta_t \in (0,1) : t \in [T]\}$ to be constant with respect to $t$ as in \eqref{Eq: Choices of the Chebyshev constants}, a single common uniform grid $\G_m$ is used in all iterations of the LPI-GD algorithm, where $m = m_t$ is given by \eqref{Eq: Algo grid sizes} (and is also constant with respect to $t$). The size of this grid is upper bounded by
\begin{align*}
|\G_m| & = m^d \\
& \leq \left( \frac{110 (2 L_2 + 1) c}{b} \left(\frac{d (3e)^d}{\Lambda(d,l)^{2}}\right) \! \left(\frac{\sigma}{\sigma - 1}\right)^{\! T/(2\eta)} + 1 \right)^{\! d} \\
& \leq \left( \frac{110 (2 L_2 + 1) c}{b} \left(\frac{d (3e)^d}{\Lambda(d,l)^{2}}\right) \! \left(\frac{\sigma}{\sigma - 1}\right)^{1/(2\eta)} \! \left(\frac{F(\theta^{(0)}) - F_* + \frac{p}{2 \mu}}{\epsilon}\right)^{\! 1/(2\eta)} + 1 \right)^{\! d} \\
& \leq \left( \frac{220 (2 L_2 + 1) c}{b} \right)^{\! d} \! \left(\frac{d^d (3e)^{d^2}}{\Lambda(d,l)^{2 d}}\right) \! \left(1 + \frac{\sigma}{2 (L_1 - \mu)}\right)^{d/(2\eta)} \! \left(\frac{p + 2 \mu (F(\theta^{(0)}) - F_*)}{\epsilon}\right)^{\! d/(2\eta)} \\
& \leq \left(1 + \frac{\sigma}{2 (L_1 - \mu)}\right) \! \left( \frac{220 (2 L_2 + 1) c}{b} \right)^{\! d} \! \left(\frac{d^d (3e)^{d^2}}{\Lambda(d,l)^{2 d}}\right) \! \left(\frac{p + 2 \mu (F(\theta^{(0)}) - F_*)}{\epsilon}\right)^{\! d/(2\eta)} ,
\end{align*}
where the second inequality follows from \eqref{Eq: Algo grid sizes} and \eqref{Eq: Choices of the Chebyshev constants} in \propref{Prop: Number of Iterations}, the third inequality follows from \eqref{Eq: Chosen number of iterations} in \propref{Prop: Number of Iterations}, the fourth inequality follows from straightforward manipulations and the fact that $\sigma = L_1/\mu$, and the fifth inequality holds because we have assumed that $\eta > d$. The LPI-GD algorithm makes $|\G_m|$ oracle calls in each of its $T$ iterations and produces an $\epsilon$-approximate solution $\theta^{(T)} \in \R^p$ (in the sense of \eqref{Eq: Minimum of convex function}) as argued in \propref{Prop: Number of Iterations}. Hence, the oracle complexity of this algorithm is upper bounded by
\begin{align*}
\Gamma(\text{LPI-GD}) & \leq T |\G_m| \\
& \leq \left(\frac{\sigma + 2(L_1 - \mu)}{(L_1 - \mu)\log\!\left(\frac{\sigma}{\sigma - 1}\right)}\right)\! \log\!\left(\frac{p + \Delta}{2 \mu \epsilon}\right) \! \left( \frac{220 (2 L_2 + 1) c}{b} \right)^{\! d} \! \left(\frac{d^d (3e)^{d^2}}{\Lambda(d,l)^{2 d}}\right) \! \left(\frac{p + \Delta}{\epsilon}\right)^{\! d/(2\eta)} \\
& = C_{\mu,L_1,L_2,b,c}(d,l) \left(\frac{p + \Delta}{\epsilon}\right)^{\! d/(2\eta)} \log\!\left(\frac{p + \Delta}{2 \mu \epsilon}\right)
\end{align*}
where we let $\Delta = 2 \mu (F(\theta^{(0)}) - F_*)$ for convenience, the second inequality uses \eqref{Eq: Chosen number of iterations} and the fact that $\epsilon \leq \frac{(L_1 - \mu)p}{2 L_1 \mu}$ (which is equivalent to $\frac{\sigma}{\sigma - 1} \leq \frac{p}{2 \mu \epsilon}$), and the third equality uses \eqref{Eq: Scaling in Main Theorem}. This completes the proof.
\qed
\end{proof}

We remark that the proofs of \propref{Prop: Number of Iterations} and \thmref{Thm: Oracle Complexity} can be combined to determine the optimal $t$-dependent approximation error parameters $\{\delta_t \in (0,1) : t \in \N\}$ and number of iterations $T \in \N$ such that $\theta^{(T)}$ is an $\epsilon$-approximate solution in the sense of \eqref{Eq: Minimum of convex function} with as few oracle calls as possible. However, this does not qualitatively improve the dominant term in the scaling law of the derived oracle complexity bound in \thmref{Thm: Oracle Complexity}. To see this, define the oracle complexity function, which takes the $\{\delta_t \in (0,1) : t \in \N\}$ and $T$ as inputs, and outputs the total number of first order oracle calls made by the LPI-GD algorithm after $T$ iterations:
\begin{equation}
G(\{\delta_t : t \in \N\},T) \triangleq \sum_{t=1}^{T}{|\G_{m_t}|} = \sum_{t=1}^{T}{m_t^d} \, ,
\end{equation}
where the grid sizes $\{m_t: t \in \N\}$ are defined in \eqref{Eq: Algo grid sizes}. Furthermore, propelled by \eqref{Eq: Bound on Diff to Optimality} in the proof of \propref{Prop: Number of Iterations}, define the constraint function
\begin{equation}
H(\{\delta_t : t \in \N\},T) \triangleq \left(1 - \frac{1}{\sigma}\right)^{\! T} \! \left(F(\theta^{(0)}) - F_*\right) + \frac{p}{2 L_1} \sum_{t = 1}^{T}{\left(1 - \frac{1}{\sigma}\right)^{\! T-t}\delta_t^2} \, ,
\end{equation}
which upper bounds the difference $F(\theta^{(T)}) - F_*$. Then, for any fixed $T \in \N$, the problem of finding the minimum oracle complexity of the LPI-GD algorithm to guarantee an $\epsilon$-approximate solution may be posed as:
\begin{equation}
\label{Eq: oracle complexity optimization}
\inf_{\substack{\{\delta_t \in (0,1) : t \in [T]\} :\\ H(\{\delta_t : t \in [T]\},T) \leq \epsilon}}{G(\{\delta_t : t \in [T]\},T)} \, .
\end{equation}
For simplicity, let us neglect the bounds on $\delta_t$ and assume that the inequality constraint is an equality constraint. The resulting optimization problem has the Lagrangian function
\begin{align}
\mathscr{L}(\{\delta_t : t \in \N\},\nu) = G(\{\delta_t : t \in [T]\},T) + \nu H(\{\delta_t : t \in \N\},T)
\end{align}
with a Lagrange multiplier $\nu \in \R$. For any $t \in [T]$, taking the partial derivative of $\mathscr{L}$ with respect to $\delta_t$ yields the stationarity condition $\frac{\partial \mathscr{L}}{\partial \delta_t} = 0$, which can be seen to be equivalent to
\begin{equation}
\delta_t^{2 + (d/\eta)} \propto \left(1 - \frac{1}{\sigma}\right)^{\! t} ,
\end{equation}
where the proportionality constant depends on other (fixed) problem parameters. Hence, the optimal sequence $\{\delta_t : t \in [T]\}$ decays geometrically as $t$ increases (since we assume that $\eta > d$). Moreover, since for all $t \in [T]$, the grid size $m_t^d \propto \delta_t^{-d/\eta}$ according to \eqref{Eq: Algo grid sizes}, the optimal sequence $\{m_t : t \in [T]\}$ increases exponentially fast as $t$ increases. Therefore, to obtain the dominant term in the scaling law of the minimum oracle complexity in \eqref{Eq: oracle complexity optimization}, it suffices to take $\delta_t$ to be constant in $t$ as we have in \eqref{Eq: Choices of the Chebyshev constants}; this also simplifies the details of the analysis.

\subsection{Proof of \propref{Prop: Comparison to GD and SGD}}
\label{Proof of Prop Comparison to GD and SGD}

Lastly, we derive \propref{Prop: Comparison to GD and SGD} from \thmref{Thm: Oracle Complexity}.

\begin{proof}[of \propref{Prop: Comparison to GD and SGD}]
We first establish the scaling of $C_{\mu,L_1,L_2,b,c}(d,l)$ in \eqref{Eq: Scaling in Main Theorem}. Since $d = O(\log\log(n))$, observe that
\begin{equation}
\label{Eq: Polylog term}
\left(\frac{\sigma + 2(L_1 - \mu)}{(L_1 - \mu)\log\!\left(\frac{\sigma}{\sigma - 1}\right)} \right) \! \left( \frac{220 (2 L_2 + 1) c}{b} \right)^{\! d} = O(\polylog(n)) \, , 
\end{equation}
and since $l \geq d$, observe that
$$ \frac{d^d (3e)^{d^2}}{\Lambda(d,l)^{2 d}} = O\!\left( (l+d)^{6 d l \left(\frac{e(l+d)}{d}\right)^{\! d}} \right) $$
as $d$ grows, where we use \eqref{Eq: Lower bound def}. To simplify the second scaling expression above, notice that for all sufficiently large $d \in \N$, we have
\begin{align*}
\log\log\!\left(\frac{d^d (3e)^{d^2}}{\Lambda(d,l)^{2 d}}\right) & \leq \log\log(l+d) + \log(dl) + d\log\!\left(\frac{e(l+d)}{d}\right) + \Theta(1) \\
& = \Theta(\log(d)) + d\log\!\left(\frac{e(l+d)}{d}\right) \\
& \leq \Theta(\log(d)) + d\log(e(\gamma + 1)) \\
& \leq 2 \log(e(\gamma + 1)) d
\end{align*}
where the second equality follows from the fact that $\eta = \Theta(d)$, and the third inequality holds because $l \leq \gamma d$. This implies that for all sufficiently large $d \in \N$,
\begin{equation}
\label{Eq: Quasi-polylog term}
\frac{d^d (3e)^{d^2}}{\Lambda(d,l)^{2 d}} \leq \exp\!\left(e^{2 \log(e(\gamma + 1)) d}\right) . 
\end{equation}
Hence, since $d \leq \frac{\log\log(n)}{4 \log(e(\gamma + 1))}$, we obtain
\begin{align*}
C_{\mu,L_1,L_2,b,c}(d,l) & \leq O(\polylog(n)) \cdot \exp\!\left(e^{\frac{1}{2}\! \log\log(n)}\right) \\
& = O(\polylog(n)) \cdot \exp\!\left(\!\sqrt{\log(n)}\right) \\
& \leq \exp\!\left(2 \sqrt{\log(n)}\right)
\end{align*}
for all sufficiently large $n \in \N$, using \eqref{Eq: Scaling in Main Theorem}, \eqref{Eq: Polylog term}, and \eqref{Eq: Quasi-polylog term}. Next, applying \thmref{Thm: Oracle Complexity}, we have that for all sufficiently large $n \in \N$,
\begin{align*}
\Gamma(\text{LPI-GD}) & \leq \exp\!\left(2 \sqrt{\log(n)}\right) \left(\frac{p + 2 \mu (F(\theta^{(0)}) - F_*)}{\epsilon}\right)^{\! d/(2\eta)} \log\!\left(\frac{p + 2 \mu (F(\theta^{(0)}) - F_*)}{2 \mu \epsilon}\right) \\
& \leq \exp\!\left(2 \sqrt{\log(n)}\right) \left(c_1 n^{\alpha + \beta}\right)^{\! d/(2\eta)} \log\!\left(c_2 n^{\alpha + \beta}\right) \\
& \leq c_3 \log(n) \exp\!\left(2 \sqrt{\log(n)}\right) n^{\! 1/\tau} 
\end{align*}
where $c_1,c_2,c_3 > 0$ are some positive constants, the second inequality follows from the assumptions that $\epsilon = \Theta(n^{-\alpha})$ and $p = O(n^{\beta})$, and the third inequality holds because $\eta \geq \tau (\alpha + \beta) d / 2$. This proves the scaling of $\Gamma(\text{LPI-GD})$ with respect to $n$.

Finally, from the relations \eqref{Eq: Gamma of GD} and \eqref{Eq: Gamma of SGD} and the assumption $\epsilon = \Theta(n^{-\alpha})$, we get $\Gamma_*(\text{GD}) = \Theta(n \log(n))$ and $\Gamma_*(\text{SGD}) = \Theta(n^{\alpha})$. Therefore, for some constants $c_4,c_5 > 0$, we obtain
$$ \lim_{n \rightarrow \infty}{\frac{\Gamma(\text{LPI-GD})}{\Gamma_*(\text{GD})}} \leq \lim_{n \rightarrow \infty}{\frac{c_3 \exp\!\left(2 \sqrt{\log(n)}\right) n^{\! 1/\tau} }{c_4 n}} = 0 \, , $$
$$ \lim_{n \rightarrow \infty}{\frac{\Gamma(\text{LPI-GD})}{\Gamma_*(\text{SGD})}} \leq \lim_{n \rightarrow \infty}{\frac{c_3 \log(n) \exp\!\left(2 \sqrt{\log(n)}\right) n^{\! 1/\tau}}{c_5 n^{\alpha}}} = 0 \, , $$
which use the fact that $\tau > \max\{1,\alpha^{-1}\}$. This completes the proof.
\qed
\end{proof}

\section{Conclusion}
\label{Conclusion}

We conclude by recapitulating our main contributions and reiterating some directions of future research. In order to exploit the smoothness of loss functions in data for optimization purposes, we proposed the LIP-GD algorithm (see \algoref{Algorithm: LPI-GD}) to compute approximate solutions of the ERM problem in \eqref{Eq: ERM Objective Function} and \eqref{Eq: ERM}, where the loss function is strongly convex and smooth in both the parameter and the data. We then derived the iteration and oracle complexities of this algorithm, and illustrated that its oracle complexity beats the oracle complexities of both GD and SGD for a very broad range of dependencies between the parameter dimension $p$, the approximation accuracy $\epsilon$, and the number of training samples $n$, when the data dimension $d$ is sufficiently small. Finally, in the course of our convergence analysis, we also provided a careful and detailed analysis of multivariate local polynomial interpolation with supremum norm guarantees, which may be of independent interest in non-parametric statistics.

We close our discussion by stating three avenues for future work. Firstly, as stated at the end of \secref{Main Results and Discussion} and in \secref{Proof of Prop Lowner Lower Bound}, it would be very valuable to improve the dependence of \thmref{Thm: Oracle Complexity} on the data dimension $d$ by sharpening the minimum eigenvalue lower bound in the key technical estimate in \lemref{Lemma: Auxiliary Loewner Lower Bound}. Secondly, the dependence of \thmref{Thm: Oracle Complexity} on the parameter dimension $p$ could be improved by establishing new interpolation guarantees for approximating smooth $\R^p$-valued functions using local polynomial regression or other methods. Lastly, as noted in \secref{Introduction}, while our work focuses on the strongly convex loss function setting, similar analyses could be carried out for other conventional scenarios, e.g., when $f(x;\cdot):\R^p \rightarrow \R$ is convex, or when $f(x;\cdot):\R^p \rightarrow \R$ is non-convex and $\epsilon$-approximate solutions to \eqref{Eq: ERM} are defined via approximate first order stationary points.

\appendix

\section{Taylor Approximation of H\"{o}lder Class Functions}
\label{Taylor Approximation of Holder Class Functions}

In this appendix, we present a useful upper bound on the Taylor approximation error for functions residing in H\"{o}lder classes. While such results are known in the literature (see, e.g., \cite[Chapter ``Density Estimation'' Equation (12), Chapter ``Nonparametric Regression'' Equation (2)]{Wasserman2019}), the ensuing proposition is adapted to our specific setting.

\begin{proposition}[Taylor Approximation of H\"{o}lder Class Functions]
\label{Prop: Taylor Approximation of Holder Class Functions}
For any function $g : [0,1]^d \rightarrow \R$ in the $(\eta,L_2)$-H\"{o}lder class, we have
$$ \forall x,u \in [0,1]^d, \enspace \left|g(x) - \sum_{s \in \Z_+^d : \, |s| \leq l}{\frac{\nabla^s g (u)}{s!} (x - u)^s} \right| \leq \frac{L_2}{l!} \left\|x - u\right\|_1^{\eta}$$
where $l = \lceil \eta \rceil - 1$, and we utilize multi-index notation.
\end{proposition}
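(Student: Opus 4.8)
The plan is to combine the multivariate Taylor expansion of $g$ about $u$ with the defining H\"{o}lder estimate \eqref{Eq: Holder condition}. Since $g$ is $l = \lceil \eta \rceil - 1$ times differentiable, I would first invoke Taylor's theorem with Lagrange-form remainder (cf. \cite[Theorem 12.14]{Apostol1974}, exactly as in the proof of \thmref{Thm: Chebyshev Norm Interpolation}): for fixed $x,u \in [0,1]^d$ there is some $\tau \in (0,1)$ with
\[
g(x) = \sum_{s \in \Z_+^d : \, |s| \leq l - 1}{\frac{\nabla^s g(u)}{s!}(x-u)^s} + \sum_{s \in \Z_+^d : \, |s| = l}{\frac{\nabla^s g(u + \tau(x-u))}{s!}(x-u)^s} \, .
\]
Subtracting the order-$l$ term evaluated at $u$ from both sides yields
\[
g(x) - \sum_{s \in \Z_+^d : \, |s| \leq l}{\frac{\nabla^s g(u)}{s!}(x-u)^s} = \sum_{s \in \Z_+^d : \, |s| = l}{\frac{\nabla^s g(u + \tau(x-u)) - \nabla^s g(u)}{s!}(x-u)^s} \, .
\]
If the multivariate form is inconvenient, the same identity follows by applying the single-variable Taylor theorem to $\psi(t) \triangleq g(u + t(x-u))$ on $[0,1]$ and expanding the derivatives $\psi^{(k)}$ via the chain rule and the multinomial theorem.

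Next I would bound the right-hand side termwise. Each coefficient difference is controlled by \eqref{Eq: Holder condition}: since $\tau \in (0,1)$ and $\eta - l \in (0,1]$ (as $l = \lceil\eta\rceil - 1$), for every $s$ with $|s| = l$ we have $|\nabla^s g(u + \tau(x-u)) - \nabla^s g(u)| \le L_2 \|\tau(x-u)\|_1^{\eta - l} \le L_2 \|x-u\|_1^{\eta - l}$. Taking absolute values, applying the triangle inequality, and using $|(x-u)^s| = \prod_{j=1}^d |x_j - u_j|^{s_j}$ gives
\[
\left| g(x) - \sum_{s \in \Z_+^d : \, |s| \leq l}{\frac{\nabla^s g(u)}{s!}(x-u)^s} \right| \leq L_2 \left\|x-u\right\|_1^{\eta - l} \sum_{s \in \Z_+^d : \, |s| = l}{\frac{1}{s!} \prod_{j=1}^d{|x_j - u_j|^{s_j}}} \, .
\]

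Finally, I would collapse the remaining sum with the multinomial theorem: $\sum_{|s| = l} \frac{1}{s!}\prod_j |x_j - u_j|^{s_j} = \frac{1}{l!}\sum_{|s| = l}\frac{l!}{s!}\prod_j |x_j - u_j|^{s_j} = \frac{1}{l!}\big(\sum_{j=1}^d |x_j - u_j|\big)^l = \frac{1}{l!}\|x-u\|_1^l$. Substituting this back produces the asserted bound $\frac{L_2}{l!}\|x-u\|_1^{\eta - l}\|x-u\|_1^l = \frac{L_2}{l!}\|x-u\|_1^{\eta}$. The only genuinely delicate point is justifying the Lagrange-form remainder for a function assumed only $l$-times differentiable (no $(l+1)$st derivative is available); this is precisely why I prefer reducing to the one-dimensional $\psi$, where the classical Taylor-with-Lagrange-remainder theorem applies verbatim. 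The rest is routine multinomial bookkeeping.
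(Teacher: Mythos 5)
Your proof is correct and follows essentially the same route as the paper: multivariate Taylor with Lagrange-form remainder of order $l$, the triangle inequality, the H\"older estimate \eqref{Eq: Holder condition} on the order-$l$ partials, and the multinomial theorem to collapse the final sum into $\|x-u\|_1^l/l!$. Your caution about the remainder form is well-placed but already handled by the cited result: the Lagrange form used here expands the Taylor polynomial only to order $l-1$ and places the $l$-th derivative at an intermediate point, so $l$-fold differentiability (which the $(\eta,L_2)$-H\"older hypothesis guarantees) suffices.
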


\begin{proof} 
By Taylor's theorem \cite[Theorem 12.14]{Apostol1974}, for every $x,u \in [0,1]^d$,
$$ g(x) = \sum_{s \in \Z_+^d : \, |s| \leq l-1}{\frac{\nabla^s g (u)}{s!} (x - u)^s} + \sum_{s \in \Z_+^d : \, |s| = l}{\frac{\nabla^s g (u + \tau (x - u))}{s!} (x - u)^s} $$
for some $\tau \in (0,1)$, where the second summation on the right hand side is the Lagrange remainder term. This implies that
\begin{align*}
\left|g(x) - \sum_{s \in \Z_+^d : \, |s| \leq l}{\frac{\nabla^s g (u)}{s!} (x - u)^s}\right| & = \left| \sum_{s \in \Z_+^d : \, |s| = l}{\frac{\left(\nabla^s g (u + \tau (x - u)) - \nabla^s g (u)\right)}{s!} (x - u)^s} \right| \\
& \leq \sum_{s \in \Z_+^d : \, |s| = l}{\frac{\left|\nabla^s g (u + \tau (x - u)) - \nabla^s g (u)\right|}{s!} \left|(x - u)^s\right|} \\
& \leq L_2 \left\|x - u \right\|_1^{\eta - l} \sum_{s \in \Z_+^d : \, |s| = l}{\frac{1}{s!} \left|(x - u)^s\right|} \\
& = \frac{L_2}{l!} \left\|x - u \right\|_1^{\eta} ,
\end{align*}
where the second inequality follows from the triangle inequality, the third inequality follows from the H\"{o}lder class assumption (see \eqref{Eq: Holder condition}) and the fact that $\tau \in (0,1)$, and the last equality follows from the multinomial theorem. This completes the proof.
\qed
\end{proof}

\begin{acknowledgements}
This research was supported in part by the Vannevar Bush Fellowship, in part by the ARO Grant W911NF-18-S-0001, in part by the NSF CIMS 1634259, in part by the NSF CNS 1523546, and in part by the MIT-KACST project.
\end{acknowledgements}

\bibliographystyle{spmpsci} 
\bibliography{LPIGDrefs}

\end{document}